\documentclass[11pt]{article}
\usepackage{graphicx}

\usepackage{indentfirst,csquotes}

\usepackage{amssymb,amsthm,amsmath}
\usepackage{xcolor,paralist,hyperref,titlesec,fancyhdr,etoolbox}
\newtheorem{theorem}{Theorem}[]

\newtheorem{lemma}[theorem]{Lemma}
\newtheorem{proposition}[theorem]{Proposition}
\newtheorem{corollary}[theorem]{Corollary}

\hypersetup{ colorlinks=true, linkcolor=black, filecolor=black, urlcolor=black }

\usepackage[utf8]{inputenc}
\usepackage[T1]{fontenc}
\usepackage{hyperref}
\usepackage{url}
\usepackage{booktabs}
\usepackage{array}
\usepackage{graphicx}
\usepackage{amsmath,amssymb,amsthm}
\usepackage{mathtools}
\usepackage{algorithm}
\usepackage{algpseudocode}
\usepackage{flafter}
\usepackage{placeins}
\usepackage{microtype}
\usepackage{xcolor}
\usepackage{tikz}
\usetikzlibrary{calc,decorations.pathreplacing}
\usepackage{natbib}

\usepackage{hyperref}       
\hypersetup{
	colorlinks   = true, 
	urlcolor     = blue, 
	linkcolor    = blue, 
	citecolor   = blue 
}
\usepackage{multirow}
\usepackage{url}            

\theoremstyle{remark}
\newtheorem{remark}[theorem]{Remark}

\newcommand{\E}{\mathbb E}
\newcommand{\Prob}{\mathbb P}
\newcommand{\one}{\mathbf 1}
\newcommand{\pos}[1]{\left[#1\right]_+}
\newcommand{\WKT}{W_{K,T}}
\newcommand{\Rfs}{R_T^{\mathrm{fs}}}
\newcommand{\Dist}{\mathrm{Dist}}
\newcommand{\polyinf}{\textnormal{\textsc{Poly-INF}}}
\DeclareMathOperator{\KL}{KL}

\usepackage{lipsum}

\begin{document}
\title{The Role of Coordinates in Pareto Regret for Adversarial Multi-Objective Bandits}
\author{%
  Changkun Guan\\
  H. Milton Stewart School of Industrial and Systems Engineering \\
  Georgia Institute of Technology \\
  \texttt{cguan62@gatech.edu} \\
\and 
  Mengfan Xu\\
  Mechanical and Industrial Engineering \\
  University of Massachusetts Amherst \\
  \texttt{mengfanxu@umass.edu} \\
}
\date{\today}


\maketitle

\let\thefootnote\relax

\begin{abstract}
Adversarial multi-objective bandits hold the potential to help us optimize choices (arms) whose reward is a multidimensional vector chosen by an adversary and whose performance is measured by Pareto regret. We define loss as one minus reward and measure the easiness of a coordinate by the smallest cumulative loss of the arms on it, and call the coordinate easier when this quantity is smaller. Existing work suggests that in theory an easier coordinate may reduce Pareto regret. However, in practice, one  may not know which coordinate is easier. On the negative side, we show that this lack of information eliminates the possibility: a smaller cumulative loss does not improve the worst-case order of Pareto regret. Precisely, let \(L_d\) be
the smallest cumulative loss along coordinate $d$ over $T$ rounds.
For \(K\ge4\) arms, \(T\ge6\) rounds, and at least 2 coordinates, we prove
that the minimax expected Pareto regret is
\(\Omega(\min\{T-L_0,\sqrt{K(T-L_0)}\})\). It is monotonically decreasing in \(L_0\), even when \(L_0=\min_d L_d\) itself is known. 
On the positive side, this result motivates the possibility that other coordinates, not just the easy one, may suffice to attain the optimal rate of Pareto regret.  When $L_0$ is known, we apply Poly-INF to a fixed coordinate and obtain an upper bound on Pareto regret that exhibits the same order and thus matches the lower bound. Without such knowledge, we develop a reward-doubling version of Poly-INF that adapts to this unknown quantity while still attaining the matching minimax rate. Another implication is that it has no extra \(\log T\) factor and is independent of the number of coordinates.
\end{abstract} 

\section{Introduction}
 
Multi-armed bandits \citep{auer2002finite, auer2002nonstochastic} have elegantly enabled sequential decision making in a broad spectrum of applications, dating back to classical recommender systems \citep{mahadik2020fast, yang2020exploring, zhu2023scalable, yi2023online} and extending to emerging large language models \citep{bouneffouf2026multi, chen2026greedy, poon2026online}. Here, a decision maker sequentially selects an option from multiple options (often called arms) and receives information about its performance in the form of a scalar value (often called a reward). The goal is to maximize the cumulative performance of the chosen options, usually in the language of regret, which measures the cumulative difference between the reward obtained by always selecting the best arm in hindsight and the reward received by the decision maker. Bandit models broadly fall into two categories, stochastic \citep{auer2002finite} and adversarial \citep{auer2002nonstochastic}, depending on if rewards are independently drawn from a fixed distribution or chosen by an adversary. Both have witnessed measurable successes in theory and practice via the community's joint efforts \citep{bubeck2012regret, bubeck2013bounded, liau2018stochastic, komiyama2015optimal, bouneffouf2020survey, allenberg2006hannan,neu2015first,lykouris2018small,wei2018more}.

The evolution of these applications has recently imposed the challenge of optimally selecting options whose performance includes multiple objectives (coordinates)  \citep{lacerda2017multi, mehrotra2020bandit, wanigasekara2019learning, li2026efficient, xue2026cost}. Multi-objective bandits (MO-MAB) have emerged as a promising remedy \citep{drugan2013designing, busa2017multi, garivier2024sequential, xu2023pareto, roijers2017interactive, yahyaa2014annealing, auer2016pareto,ararat2023vector, huyuk2021multi, lu2019multi}, where the reward of each arm is a multidimensional vector and the optimality of arms, namely Pareto optimality, is defined via Pareto dominance. Pareto dominance \citep{drugan2013designing} compares two arms across coordinates. An arm is Pareto optimal if no other arm is better than it on every coordinate; otherwise, it is suboptimal and Pareto dominated. Importantly, as Pareto dominance requires another arm to be better across all coordinates, performing well on even one coordinate can be sufficient to prevent an arm from being dominated. This geometry also motivates notions of Pareto regret \citep{drugan2013designing, xu2023pareto}, related to the minimum value to be added uniformly to \emph{all coordinates} when comparing with Pareto optimal arms in hindsight, although their precise definitions differ between stochastic and adversarial models. Meanwhile, the easiness of individual coordinates can differ, i.e., what it takes to optimize along them. This heterogeneity, coupled with Pareto dominance, suggests that coordinates, such as easier ones, may play a role in Pareto regret and thus in the performance of MO-MAB. 

To date, existing studies on MO-MAB have jointly addressed this question for the stochastic problem, where the reward vectors are independent and identically distributed across the time horizon. The seminal studies measure Pareto regret by cumulative Pareto gaps  \citep{drugan2013designing,turgay2018contextual,lu2019multi}, specify how objectives are combined or ordered \citep{busafekete2017gini,huyuk2021multi}, or identify Pareto optimal arms \citep{auer2016pareto,ararat2023vector}. Here, coordinate easiness is measured by the suboptimality gap in expected reward between its top two arms: a larger gap makes the best arm easier to distinguish. Under certain assumptions, \citet{guan2026stochastic} show that Pareto regret can follow the easiest coordinate under this measure. The algorithm searches for such a coordinate using confidence bounds across arms and coordinates, then certifies an arm as Pareto optimal. Notably, fixed reward distributions make this search feasible as repeated samples provide persistent statistical information and thus hope for learning. However, such an assumption can oftentimes measurably break for real-world problems. Consequently, these arguments, and the insights behind them, do not extend to adversarial MO-MAB.

Much less is known about adversarial MO-MAB in general, to the best of our knowledge. Encouragingly, several results from classical adversarial bandits and the limited literature on adversarial MO-MAB suggest that an easy coordinate may still help. Notably, 
\citet{xu2023pareto} introduce a definition of Pareto regret and show that it can be 
upper bounded by classical regret on any coordinate fixed before play. Accordingly, they select an arbitrary coordinate, as \emph{their guarantees depend only on the horizon and do not reflect the roles of coordinates}. However, as suggested by the line of work on adversarial bandits, the horizon dependence can be overly pessimistic. In particular, the minimum cumulative loss among the arms can be much smaller than the horizon. Here, loss is one minus reward. Seminal algorithms in the field have exploited this structure and achieved regret upper bounds that increase with this minimum cumulative loss
\citep{allenberg2006hannan,neu2015first,lykouris2018small,wei2018more}. Hence, \emph{coordinate 
easiness may also matter as in stochastic MO-MAB: if the coordinate with the smallest cumulative loss were known, 
one might hope to obtain smaller Pareto regret by optimizing along it.} This raises the underexplored question of whether coordinate-specific difficulty can similarly be exploited in adversarial MO-MAB.

Yet, the decision maker generally does not know which 
coordinate attains this minimum. This leaves a dilemma. Directly applying existing scalar algorithms 
to an arbitrary coordinate seems to yield a Pareto regret guarantee scaling with the loss on that coordinate instead of the horizon, 
which can be larger than the minimum across coordinates. Identifying the coordinate 
attaining this minimum, on the other hand, may itself incur substantial regret. Importantly, we do not yet know whether either difficulty is fundamental in adversarial MO-MAB. The dependence on the loss of the chosen coordinate may be an artifact of optimizing along that coordinate, and the apparent need to identify the minimum-loss coordinate may simply be a consequence of this dependence. This raises:
\begin{center}
\emph{Does the Pareto regret scale with minimum coordinate loss in some sense, and should we try to identify coordinates en route to the optimality?}
\end{center}

Counterintuitively, the optimal rate is governed by the minimum coordinate loss but is monotonically decreasing in it, and attaining this rate does not require identifying the coordinate that realizes the minimum. On coordinate $d$, let $L_d$ denote the 
minimum cumulative loss among the arms, and let $L^\star=\min_d L_d$. Thus, a small 
$L^\star$ means that some arm has small cumulative loss (thus large cumulative reward) on at least one coordinate, 
but neither the arm nor the coordinate attaining $L^\star$ is known. 
\emph{Our first part is surprisingly negative.} For every $L_0\in[0,T]$, we derive a lower bound of order
$\min\{T-L_0,\sqrt{K(T-L_0)}\}$
over the class of instances satisfying $L^\star=L_0$. This lower bound is monotonically decreasing in $L_0$: making the easiest coordinate easier, i.e., decreasing its minimum cumulative loss instead increases the lower bound. \emph{On the flip side, this negative result has an optimistic implication: perhaps we do not need identify the coordinate.} We confirm this by showing that the optimal rate can be attained using any fixed coordinate \emph{but with deliberately designed algorithms along the chosen coordinate}. We establish this through both existing and new algorithms. The resulting regret bounds match the lower bound, implying minimax optimality, without an extra logarithmic factor and without dependence on the number of coordinates.


Our main contributions are as follows. We extend Pareto regret in \citet{xu2023pareto} so that it is well defined. For each coordinate, we measure its easiness by the minimum cumulative loss among the arms on that coordinate, and a
coordinate is easy when some arm has small cumulative loss in it. Using this measure, we determine the optimal worst-case order of Pareto regret as a function of $L_0$, the easiness on the easiest coordinate. Surprisingly, this function is monotone decreasing in $L_0$. Hence in contrast to the scaling in bandits, it yields a new lower bound, authentic and unique to MO-MAB. Even when this loss is \(0\), the lower bound has the same order as in the unrestricted case. 

We then develop routes to optimize Pareto regret without finding this easy coordinate; in fact any coordinate suffices for minimax \emph{optimality} but the algorithm along the coordinate has to be targeted. We first apply Poly-INF
\citep{audibert2010regret} to one coordinate fixed before play. When the cumulative loss on the easiest coordinate is known, we set the reward scale using the horizon minus this loss. When it is unknown, we develop a reward-doubling version of Poly-INF that restarts the algorithm and doubles the scale whenever the reward collected in the current run reaches that scale. 

Perhaps surprisingly, we prove that the upper bound of the proposed methods matches the lower bound, particularly with respect to $L_0$, the easiness of the easiest coordinate. This also implies the minimax optimality of the methods and the tightness of the lower bound. As a separate contribution, they also sharpen the dependence on the problem parameters. They remove the common $\sqrt{\log T}$ factor and are independent of the number of coordinates, with the order governed by the intrinsic complexity ($L_0$). We also adapt EXP3, EXP3-IX, and Tsallis-INF and derive corresponding bounds to assess whether these popular methods attain the optimal rate; the guarantees incur extra factors.

\section{Problem Formulation}
\label{sec:pre}

\subsection{Setting and notation}
\label{sec:setting}

We study the adversarial multi-objective multi-armed bandit (MO-MAB)
problem. There are \(K\) arms, \(T\) rounds, and
\(D\) reward coordinates. Before the first round, an oblivious adversary specifies the
reward vectors \(r = r^{i,t}=(r_d^{i,t})_{1\le d\le D}\in[0,1]^D\) of every
arm \(i\in[K]\) in every round \(t\in[T]\). We call this collection $r$ 
the reward sequence. In each round \(1 \leq t \leq T\), the decision maker (learner) chooses
an arm \(a_t\) based on its past observations, possibly using randomization, 
and observes only the chosen arm's reward vector \(r^{a_t,t}\).

Pareto dominance is a standard tool for comparing vectors in multi-objective bandits, and we adopt it here. For vectors \(a,b\in\mathbb R^D\), \(a\) weakly dominates \(b\) if \(a_d\ge b_d\) for every coordinate \(d\); we write \(a\succeq b\), or equivalently \(b\preceq a\). If in addition \(a\ne b\), then \(a\) dominates \(b\), denoted by \(a\succ b\). Vectors \(a\) and \(b\) are incomparable, denoted by \(a\,\|\,b\), if \(a_d>b_d\) and \(a_{d'}<b_{d'}\) for some coordinates \(d,d'\). Finally, \(\one\in\mathbb R^D\) denotes the all-ones vector. 

Let \(G_i:=\sum_{t=1}^T r^{i,t}\) denote the cumulative reward vector of arm \(i\). Perhaps unsurprisingly, this vector is unknown until the end; in other words, it is unknown during the process. We define the set of distinct nondominated cumulative reward vectors as
\begin{equation}
 O':=\{G_i:i\in[K],\ \nexists j\in[K]\text{ with }G_j\succ G_i\}.
 \label{eq:vector-front}
\end{equation}
As \(O'\) is defined over reward vectors rather than arms, tied arms contribute only one vector, and \(O'\) is always nonempty. We call \(O'\) the Pareto front of the cumulative reward vectors. Ultimately, we care about which arms are optimal. An arm whose cumulative reward vector belongs to \(O'\) is Pareto optimal, and the collection of such arms, denoted by $O$ is called the Pareto-optimal set, or Pareto set for short. Naturally, both \(O'\) and $O$ are unknown to the learner.

\subsection{Regret}
\label{sec:criterion}

Regret has long been the go-to language in bandits for measuring how far our arm choices are from choosing the optimal arms as if with hindsight knowledge of \(G_i\), which, of course, is unavailable throughout. We first recap the scalar regret in classical bandits (also called coordinate regret) by considering an individual coordinate of the reward vectors. This is helpful for distinguishing coordinates and thus understanding their roles. Then we proceed to the main objective, Pareto regret, which measures performance considering the entire reward vectors with Pareto dominance. 

Let
\(A_T:=\sum_{t=1}^T r^{a_t,t}\) denote the cumulative reward vector of the learner, which chooses an arm sequence \(\mathcal{A} = \{a_t\}_{t=1}^{T}\); scalar $A_{T,d}$ is the value of this vector in coordinate $d$.  Along each coordinate \(d\), the standard comparison with the best fixed arm
\citep{auer2002nonstochastic,lattimore2020bandit,bubeck2012regret} defines the coordinate regret, which reads as
\begin{equation}
 R_T^d:=\max_i G_{i,d}-A_{T,d}. 
 \label{eq:coordinate-regret}
\end{equation} 

We next denote Pareto regret by \(\Rfs(r)\), or simply \(\Rfs\) with a slight abuse of notation: 
\begin{equation}
 \Rfs 
 :=\inf\{\epsilon\ge0:\nexists\sigma\in O'
               \text{ with }\sigma\succ A_T+\epsilon\one\}
 =\pos{\max_{i\in[K]}\min_{d\in[D]}(G_{i,d}-A_{T,d})}.
 \label{eq:pareto-regret}
\end{equation}
Eq.~\eqref{eq:pareto-regret} has a particular elegant interpretation.
For every \(\epsilon\ge0\), we have the  equivalence: 
\(
 \Rfs>\epsilon
 \quad\Longleftrightarrow\quad
 \exists i\in[K]\text{ s.t. }
 G_{i,d}>A_{T,d}+\epsilon\text{ for every }d\in[D].
\) 
Intuitively, it measures the largest uniform improvement that one arm offers over the learner's cumulative reward across all coordinates. Positive regret captures a uniform advantage of one fixed arm over the learner across all objectives, while zero regret means that every fixed arm fails to improve on the learner in at least one objective. 

To see why the second equality holds, we prove the following result regarding the minimax expression.  \begin{proposition}[Pathwise representation]
\label{prop:pareto-representation}
For every reward sequence and every realized play path, the two
expressions in \eqref{eq:pareto-regret} are equal. The maximum over arms
can equivalently be taken over the distinct nondominated vectors in $O'$.
The identity holds even when cumulative arm vectors coincide or no shift
makes the learner incomparable with every vector on the front.
\end{proposition}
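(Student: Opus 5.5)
The plan is to reduce both expressions to a single scalar per arm and then compare. For each arm $i$ set $m_i:=\min_{d\in[D]}(G_{i,d}-A_{T,d})$, so the right-hand side of \eqref{eq:pareto-regret} is $\pos{\max_i m_i}$. I would prove the identity in two steps. First, the left-hand side equals $\pos{M'}$, where $M':=\max\{m_i: G_i\in O'\}$. Second, $M'=M:=\max_{i\in[K]} m_i$. The second step also gives the claim that the maximum may be taken over $O'$. Since $m_i$ depends only on the vector $G_i$, tied arms give the same value, so it does not matter that $O'$ stores each distinct vector only once.

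\textbf{Step 1 (per-vector dominance sets).} Fix a vector $\sigma=G_i$ and let $E_\sigma:=\{\epsilon\in\mathbb R:\sigma\succ A_T+\epsilon\one\}$. Recall that $\succ$ means weak dominance in every coordinate together with $\sigma\neq A_T+\epsilon\one$.
\begin{itemize}
\item If $\epsilon<m_i$, every coordinate is strictly larger, so $\epsilon\in E_\sigma$.
\item If $\epsilon>m_i$, some coordinate fails, so $\epsilon\notin E_\sigma$.
\item At $\epsilon=m_i$ weak dominance holds. Strictness fails exactly when $G_i-A_T=m_i\one$, i.e.\ the difference is a constant vector.
\end{itemize}
Hence $E_\sigma$ is either $(-\infty,m_i)$ or $(-\infty,m_i]$. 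This is where the two degenerate situations named in the statement enter: coinciding vectors, and the case where no shift makes the learner incomparable (for instance $G_i-A_T$ constant, so the shift makes the two vectors equal rather than incomparable). The case split above absorbs both, so neither needs separate treatment.

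\textbf{Step 2 (the infimum).} $O'$ is finite and nonempty, so $U:=\bigcup_{\sigma\in O'}E_\sigma$ is either $(-\infty,M')$ or $(-\infty,M']$. The set $\{\epsilon\ge0:\nexists\sigma\in O',\ \sigma\succ A_T+\epsilon\one\}$ equals $[0,\infty)\setminus U$.
\begin{itemize}
\item If $M'<0$, this set is all of $[0,\infty)$, with infimum $0$.
\item If $M'\ge0$, this set is $[M',\infty)$ or $(M',\infty)$, with infimum $M'$ in either case.
\end{itemize}
So the left-hand side equals $\pos{M'}$.

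\textbf{Step 3 ($M'=M$).} Clearly $M'\le M$. For the reverse inequality I use two facts.
\begin{itemize}
\item Monotonicity: if $G_j\succeq G_i$ then $m_j\ge m_i$, since each coordinate difference can only increase.
\item Every arm is weakly dominated by some front vector. Strict dominance $\succ$ is irreflexive and transitive on the finite set $\{G_1,\dots,G_K\}$. Starting from any $G_i$ and repeatedly passing to a strictly dominating vector, the chain cannot revisit a vector, so it terminates at a nondominated vector $\sigma\in O'$ with $\sigma\succeq G_i$.
\end{itemize}
Applying this to an arm attaining $M$ gives $M'\ge M$.

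I expect the only real care to be needed in Step~1: handling the boundary point $\epsilon=m_i$ under the strict relation $\succ$, and checking that whether it is included never changes the infimum. Everything else is elementary.
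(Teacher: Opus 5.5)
Your proof is correct and follows essentially the same route as the paper: both show that shifts below $M'=\max_{\sigma\in O'}\min_d(\sigma_d-A_{T,d})$ are infeasible and shifts above $\max\{0,M'\}$ are feasible, and then both pass from $O'$ to all arms by following a strictly dominating chain to the front and using that $\min_d(G_{i,d}-A_{T,d})$ is monotone under $\succeq$. The only difference is that you determine exactly when the boundary shift $\epsilon=m_i$ is still strictly dominated (precisely when $G_i-A_T$ is not a constant vector), whereas the paper only observes that the boundary cannot affect the infimum; also, ties among arm vectors are handled by your opening remark that $m_i$ depends only on $G_i$, not by the boundary case in Step~1 as your commentary there suggests.
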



However, the converse does not hold; zero Pareto regret does not imply that the learner's cumulative reward vector is Pareto nondominated. For example, let \(A_T=(T,0)\) and \(G_i=(T,T)\). Although \(G_i\succ A_T\), we have \(\Rfs=0\), because the two vectors are equal in the first coordinate and hence there is no positive uniform improvement attained at zero shift. More generally, adding a coordinate with identical rewards across arms in every round makes \(\Rfs=0\) for every policy. At its core, zero Pareto regret guarantees that no single fixed arm can strictly improve on the learner in every objective. 

Notably, our definition builds on and extends that of \citet{xu2023pareto}. They define Pareto regret as 
\(\min\{\epsilon\ge0:A_T+\epsilon\one\,\|\,\sigma
\text{ for every }\sigma\in O'\},
\)
on the domain where \(A_T\) is weakly Pareto dominated by every vector in \(O'\).
However, it does not cover three edge cases. First, the feasible set in the minimum can be empty. For \(A_T=(0,0)\) and \(O'=\{(1,1)\}\), no shift makes \(A_T+\epsilon\one\) incomparable with \((1,1)\), so \(\Dist(A_T,O)\) is undefined. Second, feasible shifts may exist but the minimum may not be attained. For \(A_T=(0,0)\) and \(O'=\{(1,2)\}\), the feasible shifts are \(\epsilon\in(1,2)\), which has an infimum but no minimum. Third, ties can make the Pareto front \(O'\) empty, as the Pareto set \(O\) is defined before \(O'\) therein. This order matters. If two arms have the same cumulative reward vector \(G_i=(1,1)\), neither satisfies the non-domination condition, so both can be excluded from \(O\).   This notion is also closely related to the Pareto suboptimality gap of \citet{jiang2023multiobjective} for multi-objective online convex optimization. In our setting, however, the comparators are fixed arms, and performance is evaluated without relying on convexity of the reward sequence. 

We address these issues by first deduplicating vectors in \(O'\) and then defining the Pareto set of arms, while using the boundary of the same uniform-shift condition. This preserves the Pareto comparison and ensures a well defined regret for \emph{every reward sequence}. To formalize the relation, let us define 
\refstepcounter{equation}\label{eq:xu-endpoints}%
\(a_-=\max_{\sigma\in O}\min_d(\sigma_d-A_{T,d}),\qquad
a_+=\min_{\sigma\in O}\max_d(\sigma_d-A_{T,d})\)~\textup{(\theequation)}.

The next proposition shows that, for distinct arm vectors, our definition agrees exactly with that of \citet{xu2023pareto} whenever theirs is well defined: replacing their minimum by an infimum yields \(\Rfs\). At the same time, \(\Rfs\) remains well defined even when their feasible set is empty.
\begin{proposition}[Comparison]
\label{prop:xu-relation}
Suppose \(O'\ne\varnothing\) and \(A_T\preceq\sigma\) for every
\(\sigma\in O'\). The feasible shifts $\epsilon$ in \citet{xu2023pareto} form
\((a_-,a_+)\). When this interval is nonempty, its infimum $a_-$ is at most \(\Rfs\),
but its minimum is not attained.
If vectors $G_i$ are distinct, their front equals \(O'\),
and the infimum equals \(\Rfs\).
Eq.~\eqref{eq:pareto-regret} holds on every outcome, including
outcomes where this interval is empty.
\end{proposition}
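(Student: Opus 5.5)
The plan is to unwind the incomparability condition coordinatewise. This converts the feasible set of \citet{xu2023pareto} into two families of strict scalar inequalities, one lower and one upper, indexed by the front. Fix an outcome and write $\delta_d(\sigma):=\sigma_d-A_{T,d}$. By definition, $A_T+\epsilon\one\,\|\,\sigma$ holds iff there are coordinates $d,d'$ with $\epsilon>\delta_d(\sigma)$ and $\epsilon<\delta_{d'}(\sigma)$. This is equivalent to
\[
\min_d\delta_d(\sigma)<\epsilon<\max_d\delta_d(\sigma).
\]
Requiring this for every $\sigma$ in their front, which I will call $F$, and using finiteness of $F$ and of $[D]$, the feasible set becomes
\[
\Bigl\{\epsilon\ge0:\ \max_{\sigma\in F}\min_d\delta_d(\sigma)<\epsilon<\min_{\sigma\in F}\max_d\delta_d(\sigma)\Bigr\},
\]
that is, $\{\epsilon\ge 0\}\cap(a_-,a_+)$ with $a_\pm$ as in \eqref{eq:xu-endpoints}, read over $F$. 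The hypothesis $A_T\preceq\sigma$ for all $\sigma$ gives $\delta_d(\sigma)\ge0$ for every $d$, hence $a_-\ge0$. The constraint $\epsilon\ge0$ is therefore implied by $\epsilon>a_-$, and the feasible set is exactly the open interval $(a_-,a_+)$. When this interval is nonempty, its infimum is $a_-$ and it has no minimum, because an open interval of reals never contains its left endpoint.

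For the comparison with $\Rfs$, I will first note that $F\subseteq O'$. Their front consists of cumulative vectors of arms that are nondominated among all arms, and every such vector lies in $O'$ by \eqref{eq:vector-front}; ties can only remove vectors from their $F$. Hence $a_-=\max_{\sigma\in F}\min_d\delta_d(\sigma)\le\max_{\sigma\in O'}\min_d\delta_d(\sigma)$. By Proposition~\ref{prop:pareto-representation}, the right-hand side equals $\max_{i}\min_d(G_{i,d}-A_{T,d})$. This quantity is nonnegative under the hypothesis, so it equals its positive part, which is $\Rfs$. This yields $a_-\le\Rfs$.

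If the $G_i$ are distinct, then each vector in $O'$ is attained by exactly one arm, and that arm is nondominated. So it belongs to their Pareto set, giving $F=O'$, and the previous inequality becomes an equality $a_-=\Rfs$. The final sentence of the statement, that \eqref{eq:pareto-regret} holds on every outcome including those where $(a_-,a_+)$ is empty, is exactly the content of Proposition~\ref{prop:pareto-representation}, which I will invoke directly. Our definition uses the infimum over shifts and the max--min formula, and never refers to the possibly empty feasible set.

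The main obstacle I expect is bookkeeping rather than analysis. I need to be careful that their front is defined through arms before deduplication, so it may be a strict subset of $O'$ (or even empty) under ties; this is why only the inequality $a_-\le\Rfs$ holds in general. I also need to confirm that the standing assumption $O'\neq\varnothing$, together with the case $F=\varnothing$ (where $a_-=-\infty$ and the interval claims must be read conventionally), is handled consistently. I will treat $F=\varnothing$ as excluded by the hypothesis that their definition is well posed, and state this explicitly.
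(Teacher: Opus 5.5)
Your proposal is correct and follows essentially the same route as the paper. You rewrite incomparability as $\min_d(\sigma_d-A_{T,d})<\epsilon<\max_d(\sigma_d-A_{T,d})$, intersect over their front to get $(a_-,a_+)$ with $a_-\ge 0$, use the inclusion of their front in $O'$ together with Proposition~\ref{prop:pareto-representation} to get $a_-\le\Rfs$, and note that distinct vectors make the two fronts coincide. Your explicit exclusion of an empty front is also what the paper does: its appendix version of the proposition assumes $F_X\neq\varnothing$. Without that assumption the constraint would be vacuous, every $\epsilon\ge 0$ would be feasible, and the minimum $0$ would be attained.
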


\subsection{Mathematical form of the main question}
\label{sec:objective}

We are now ready to present the precise form of our main question of interest. For coordinate \(d\), let
\(U_d(r):=\max_i G_{i,d}\) be the cumulative reward of its best arm,
and let \(L_d(r):=T-U_d(r)\) be that arm's cumulative loss, obtained by subtracting each step's reward coordinate from \(1\).  Notably, the reward and loss notation give identical regret. In MO-MAB, we have \(\Rfs\le\pos{\max_i(G_{i,d}-A_{T,d})}=\pos{R_T^d}.\) In classical bandits, the regret bound becomes smaller as \(L_d\) decreases~\citep{lykouris2018small}.    Thus, a coordinate with smaller \(L_d\) can give a tighter bound on Pareto regret. The smallest loss and the corresponding reward are 
\refstepcounter{equation}\label{eq:LstarU}%
\(L^\star(r):=\min_d L_d(r)=\min_{i,d}\sum_{t=1}^T\bigl(1-r_d^{i,t}\bigr)=T-U, 
U^{\star}(r):=\max_d U_d=\max_{i,d}G_{i,d}\)~\textup{(\theequation)}. We use \(d^\star(r)\) to denote a coordinate attaining this minimum. In other words, \(L^\star(r)=L_1\) when \(D=1\); the regret scales with \(L_1\). 

The first part of the main question thus concerns whether Pareto regret admits an analogue through \(L^\star(r)\) when \(D\geq 2\). If such a dependence exists, then all reward sequences with the same value \(L^\star(r)=L_0\) must share the same bound. The sharpest such guarantee at \(L_0\) is therefore obtained by taking the worst case over these sequences and then optimizing over policies. Precisely, for a fixed \(L_0\in[0,T]\), the minimax Pareto regret among reward sequences with \(L^\star=L_0\) is
\begin{align}
  & \mathcal V_{K,D,T}(L_0)
  :=
  \inf_{\mathcal A}\sup_{r:\,L^\star(r)=L_0}
  \E_{\mathcal A}[\Rfs] 
  \label{eq:conditional-value}
\end{align}

The infimum is over all policies \(\mathcal A\), so that it represents the optimal policy; the expectation accounts for the policy's randomness. The supremum is over all instances with the same \(L_0\), so that it represents the worst case. Thus, \(\mathcal V_{K,D,T}(L_0)\) gives the optimal worst-case Pareto regret for \(L_0\). Fixing \(L^\star=L_0\) means that the losses on other coordinates can still vary. We also consider the case where all coordinate losses are fixed, but the learner does not know which value corresponds to which coordinate. Specifically, define \(V^{\mathrm{perm}}_{K,D,T}(r):=\inf_{\mathcal A}\sup_{r\in\mathcal C(r)} \E_{\mathcal A}[\Rfs]\), where \(\mathcal C(r)\) contains all reward sequences obtained by permuting the coordinates of the same reward sequence \(r\). The question is whether they depend on \(L_0\) and, further, scale with \(L_0\), as suggested by classical bandits. 

This also motivates us the second part. Let \(\mathcal A\) denote a policy that
commits to one coordinate \(d\) before the first round, without knowing \(d^\star(r)\), and operates only on that coordinate. \citet{xu2023pareto} show that such a policy can attain the optimal order w.r.t. the horizon \(T\). With a more carefully designed policy \(\mathcal A\), 
can the upper bound \(\sup_{r:\,L^\star(r)=L_0}
\E_{\mathcal A}[\Rfs]\) attain the same order as 
\(\mathcal V_{K,D,T}(L_0)\), particularly w.r.t. the coordinate-related quantity \(L_0\)?







\section{Lower Bound Analyses}
\label{sec:lb}

To answer the first part, this section is dedicated to the analyses of the minimax Pareto regret. Before analyzing the general dependence on \(L_0\), we begin with the special case \(L_0=0\) to gather insights with more interpretable intuition and motivate the general-case analysis. Here, \(L_0=0\) means a problem class where there exists a coordinate \(d^\star\) on which the best arm incurs zero cumulative loss; equivalently, that arm receives reward one on coordinate \(d^\star\) in every round.  The main challenge is that neither a lower bound on coordinate regret nor a condition on coordinate loss directly implies a lower bound on Pareto regret, which depends on the joint rewards across coordinates. Our construction of instances for this class links the coordinates to reflect  coordinate-level properties. 

Let us first introduce several notations for ease of presentations. We denote \(q:=\lfloor K/2\rfloor\), \(W_q:=\min\{T,\sqrt{qT}\}\), \(h:=\lfloor W_q/256\rfloor\), \(n:=T-h,\) and \(\alpha:=\frac18\min\{1,\sqrt{q/n}\}.\) 
We consider instances with \(W_q\ge512\).  We split \(2q\) arms into groups \(A_1,\ldots,A_q\) and
\(B_1,\ldots,B_q\); each group has at least two arms when we assume the number of arms \(K\ge4\).
During the first \(h\) rounds, every \(A_i\)
receives \((1,0)\), and every \(B_i\) receives \((0,1)\). Before play,
choose a case \(J\in\{A,B\}\) uniformly and a hidden index
\(I\in[q]\) uniformly. The remaining rewards are shown in
Figure~\ref{fig:construction}. The \(Y_{i,t}\) are independent Bernoulli rewards with mean
\(1/2+\alpha\) for \(i=I\) and mean \(1/2\) otherwise. If \(K\) is
odd, the remaining arm receives \((0,0)\) during the first block and the
reward \((0,1/2)\) in case \(A\) or \((1/2,0)\) in case \(B\) during
the remaining rounds. The case, hidden index, and all Bernoulli rewards
are sampled before play. Each realization is therefore an oblivious
reward sequence. In case \(A\), every \(A_i\) receives reward one in
coordinate 1 on all \(T\) rounds. In case \(B\), every \(B_i\) receives
reward one in coordinate 2 on all \(T\) rounds. Hence every realization
has \(L^\star=0\). Probabilities and expectations below include these
draws and the algorithm's randomization.

\begin{figure}[!ht]
 \centering
 \begin{tikzpicture}[font=\small]
  \foreach \x in {0,7.1} {
   \draw[black!45,rounded corners=2pt] (\x,0) rectangle (\x+6.6,-3.2);
   \fill[black!7] (\x+1.2,-0.7) rectangle (\x+3.8,-2.45);
   \node at (\x+.6,-1) {Group};
   \node at (\x+2.5,-1) {First \(h\) rounds};
   \node at (\x+5.15,-1) {Remaining \(n\) rounds};
   \node at (\x+.6,-1.6) {\(A_i\)};
   \node at (\x+.6,-2.15) {\(B_i\)};
  }
  \node[font=\small\bfseries] at (3.3,-.35) {Case A, coordinate 1 has zero loss};
  \node[font=\small\bfseries] at (10.4,-.35) {Case B, coordinate 2 has zero loss};
  \node at (2.5,-1.6) {\((1,0)\)};
  \node at (2.5,-2.15) {\((0,1)\)};
  \node at (5.15,-1.6) {\((1,Y_{i,t})\)};
  \node at (5.15,-2.15) {\((0,1/2)\)};
  \node at (9.6,-1.6) {\((1,0)\)};
  \node at (9.6,-2.15) {\((0,1)\)};
  \node at (12.25,-1.6) {\((1/2,0)\)};
  \node at (12.25,-2.15) {\((Y_{i,t},1)\)};
  \node[align=center,font=\footnotesize] at (3.3,-2.8)
       {The arm with larger mean reward is in group \(A\).};
  \node[align=center,font=\footnotesize] at (10.4,-2.8)
       {The arm with larger mean reward is in group \(B\).};
  \node[align=center,font=\footnotesize] at (6.85,-3.65)
       {The first \(h\) rounds are identical in both cases.\\
        The case and all rewards are selected before play.};
 \end{tikzpicture}
 \caption{The zero-loss construction. The first block is identical in
 both cases; the remaining rewards determine whether group \(A\) or
 group \(B\) contains the zero-loss coordinate and the arm with larger
 mean reward in the other coordinate.}
 \label{fig:construction}
\end{figure}

The two cases create the two required shortfalls against the same hidden
arm. Let \(X_A\) and \(X_B\) be the numbers of first-block pulls from the
two groups. The algorithm's first-block shortfalls in the two possible
zero-loss coordinates are
\(
 d_A=h-X_A,\qquad d_B=h-X_B,\qquad d_A+d_B\ge h.
\)
The first-block observations do not reveal \(J\), so after any such
history,
\begin{equation}
 \Pr\{d_J\ge h/2\mid\text{first-block history}\}\ge\frac12.
 \label{eq:prefix-deficit-main}
\end{equation}
Every arm in group \(J\), including the hidden arm, receives reward one
thereafter in this coordinate. Its shortfall \(d_J\) therefore cannot
decrease.

In the other coordinate, the remaining rounds form a \(q\)-armed bandit
problem with hidden mean advantage \(\alpha\). We may reveal \(J\) before
these rounds; doing so can only help the algorithm. Let \(C_I\) be the
hidden arm's reward and \(S\) the algorithm's reward in this coordinate
over the remaining \(n\) rounds. With
\(V:=\min\{n,\sqrt{qn}\}\), the hard-bandit estimate in
Lemma~\ref{lem:bandit-positive-tail} gives, after every first-block
history,
\begin{equation}
 \Pr\{C_I-S\ge V/64\mid\text{first-block history},J\}
 \ge\frac1{12288}.
 \label{eq:suffix-deficit-main}
\end{equation}
The chosen lengths satisfy \(h\ge W_q/512\), \(V\ge(255/256)W_q\),
and \(V/64\ge2h\).
On the events in \eqref{eq:prefix-deficit-main} and
\eqref{eq:suffix-deficit-main}, the hidden arm is ahead by at least
\(h/2\) in the zero-loss coordinate. In the other coordinate, the
algorithm can gain at most \(h\) during the first block, so the hidden arm
is ahead there by at least \(V/64-h\ge h\). Hence the same arm is ahead
in both coordinates and \(\Rfs\ge h/2\). The two events occur together
with probability at least \(1/24576\), yielding
\(
 \E[\Rfs]\ge h/49152=\Omega(W_q).
\)

When \(W_q<512\), a one-round construction covers the remaining range.
Choose a hidden arm uniformly, give it reward \((1,1)\) on the first
round, give every other arm \((0,0)\), and give every arm \((1,1)\) on
all later rounds. The expected regret is \(1-1/K\ge3/4\). Since
\(q=\lfloor K/2\rfloor\ge K/4\) for \(K\ge4\), we have
\(W_q\ge\frac12\min\{T,\sqrt{KT}\}\). In the small range this implies
\(\min\{T,\sqrt{KT}\}<1024\), so the constant lower bound has the
required order. In the large range the preceding display has that order.
For any fixed randomized algorithm, averaging selects one deterministic
reward sequence generated by the construction with at least this expected
regret. For \(D>2\), we copy coordinate 1 into the remaining coordinates;
this changes neither the feedback nor \(\Rfs\).

\begin{theorem}[Zero-loss lower bound under bandit feedback]
\label{thm:zero-lower}
For \(K\ge4,T\ge6,D\ge2\) and every bandit algorithm, there is a deterministic
reward sequence fixed before play with \(L^\star=0\) satisfying
\(
 \mathcal V_{K,D,T}(0) 
 =\Omega\!\left(\min\{T,\sqrt{KT}\}\right).
\)
\end{theorem}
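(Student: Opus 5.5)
The proof formalizes the construction sketched just before the statement, in two regimes. \emph{Large regime,} \(W_q\ge512\): we use the randomized two-case instance of Figure~\ref{fig:construction}. First we check that every realization has \(L^\star=0\). In case \(A\), each \(A_i\) receives reward one in coordinate~1 in all \(T\) rounds; in case \(B\), each \(B_i\) does so in coordinate~2. So the construction lies in the class defining \(\mathcal V_{K,D,T}(0)\). Next, by Proposition~\ref{prop:pareto-representation}, it suffices to exhibit one arm, namely the hidden arm of group \(J\) with index \(I\), that beats \(A_T\) by at least \(h/2\) in both coordinates. Then \(\Rfs\ge h/2\).

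\emph{Zero-loss coordinate.} Write \(d_A=h-X_A\) and \(d_B=h-X_B\). Since the learner makes only \(h\) pulls in the first block, \(X_A+X_B\le h\), hence \(d_A+d_B\ge h\) and \(\max\{d_A,d_B\}\ge h/2\). The first block is identical in both cases, so its history is independent of \(J\), which is uniform. This gives \eqref{eq:prefix-deficit-main}. After the first block, every arm of group \(J\) earns reward \(1\) in coordinate \(J\) each round, which is the maximum, so the deficit \(d_J\) never shrinks.

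\emph{Other coordinate.} Condition on the first-block history and on \(J\); revealing \(J\) only helps the learner. The last \(n\) rounds then form a \(q\)-armed Bernoulli bandit in that coordinate, with one arm of mean \(1/2+\alpha\) and the rest of mean \(1/2\); pulls of the opposite group or of the odd arm yield at most \(1/2\). Lemma~\ref{lem:bandit-positive-tail} gives \eqref{eq:suffix-deficit-main}. In the first block the learner gains at most \(h\) in this coordinate while the hidden arm gains \(0\). So the hidden arm's lead there is at least \(V/64-h\ge h\), using the length inequalities \(h\ge W_q/512\), \(V\ge\frac{255}{256}W_q\) and \(V/64\ge2h\); these follow from \(n\ge T-W_q/256\) and \(W_q\ge512\). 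Because \eqref{eq:suffix-deficit-main} holds conditionally on the history, multiplying gives joint probability at least \(\frac12\cdot\frac1{12288}\). Hence \(\E[\Rfs]\ge h/49152\ge W_q/(512\cdot49152)\).

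\emph{Small regime and wrap-up.} When \(W_q<512\), take the one-round instance, which also has \(L^\star=0\) since every arm gets \((1,1)\) on later rounds. The learner finds the hidden arm with probability \(1/K\), giving \(\E[\Rfs]\ge1-1/K\ge3/4\). For \(K\ge4\) we have \(q\ge K/4\), so \(W_q\ge\frac12\min\{T,\sqrt{KT}\}\), and both regimes give \(\Omega(\min\{T,\sqrt{KT}\})\). Since the adversary's randomization precedes play, averaging shows that for each algorithm some deterministic realization attains at least the expectation, which yields an oblivious sequence. For \(D>2\), duplicate coordinate~1; this leaves feedback, \(L^\star\) and \(\Rfs\) (a minimum over coordinates) unchanged. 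Taking the infimum over policies bounds \(\mathcal V_{K,D,T}(0)\).

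The main obstacle is the conditional bandit tail bound \eqref{eq:suffix-deficit-main}. We need a constant-probability \emph{high-deficit} event, not just an expectation bound, and it must hold uniformly over first-block histories. I would prove it by the standard KL/Pinsker change of measure against the all-\(1/2\) instance, giving expected regret at least about \(n\alpha/2\) for uniform \(I\). This is then converted to a tail bound via a Paley--Zygmund-type argument, using that \(C_I-S\le n\) and that its variance is \(O(n)\) given the pull counts.
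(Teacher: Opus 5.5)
Your proposal is correct and follows the paper's proof essentially step for step: the same two-case construction, the prefix bound $d_A+d_B\ge h$ with $J$ independent of the first-block history, the conditional use of Lemma~\ref{lem:bandit-positive-tail}, the same length inequalities and the bound $h/49152$, the one-round instance for $W_q<512$, and averaging plus coordinate duplication. Your sketch of the tail bound also matches the paper's route (KL/Pinsker against the all-$1/2$ instance, then a Cauchy--Schwarz/Paley--Zygmund split), except that the second-moment bound comes from the decomposition $C_I-S=\alpha(n-N_I)+M$, where $M$ is a martingale with $\E[M^2]\le n\le V^2$, rather than from a variance \emph{given the pull counts}; the bound $C_I-S\le n$ alone would be too weak when $q\ll n$.
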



Theorem~\ref{thm:zero-lower} directly answers the main question. For
fixed \(K\), expected regret can grow as \(\sqrt T\) even though an arm
has zero loss in one coordinate. Knowing the value \(L^\star=0\) does
not identify that coordinate. In contrast, if the zero-loss coordinate
itself is given before play, an elimination rule guarantees
\(\Rfs\le\min\{T,K-1\}\) (Lemma~\ref{lem:known-zero-coordinate}). When
\(D=1\), the coordinate is known automatically. The gap therefore comes
from not knowing which coordinate has zero loss, rather than from not
knowing the value of that loss.

For any exact class \(L^\star=L_0\), write \(U_0:=T-L_0\).
If \(U_0=0\), the lower bound is zero. If
\(0<U_0<1\), give one uniformly hidden arm reward
\(U_0\one\) on the first round and give all other arm-round pairs zero
reward. Then \(L^\star=L_0\), and the expected regret is
\((1-1/K)U_0\).

If \(U_0\ge1\), apply the zero-loss construction to the first
\(m:=\lfloor U_0\rfloor\) decisions; the algorithm may still use
\(T\) and \(L_0\). When \(m<T\), give every arm \((U_0-m)\one\) on
round \(m+1\) and zero thereafter. The largest arm-coordinate reward
is exactly \(U_0\), so \(L^\star=L_0\); the common added rewards leave
\(\Rfs\) unchanged.
Since \(m\ge U_0/2\), we have
\(\min\{m,\sqrt{Km}\}\ge\frac12\min\{U_0,\sqrt{KU_0}\}\).
This gives the following full-horizon bound.

\begin{theorem}[Lower bound for every value of \(L^\star\)]
\label{thm:conditional-lower}
For \(K\ge4,T\ge6,D\ge2\) and every \(L_0\in[0,T]\), with
\(U_0=T-L_0\), the value in \eqref{eq:conditional-value} satisfies
\(
 \mathcal V_{K,D,T}(L_0)
 =\Omega\!\left(\min\{U_0,\sqrt{KU_0}\}\right).
\)
The algorithm may know \(L_0\) before play.
\end{theorem}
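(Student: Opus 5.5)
The proof reduces the general class \(L^\star=L_0\) to the zero-loss construction behind Theorem~\ref{thm:zero-lower}, run on a prefix of length \(m=\lfloor U_0\rfloor\). We split into three cases by the size of \(U_0=T-L_0\). If \(U_0=0\), the bound is trivial. If \(0<U_0<1\), one uniformly hidden arm receives \(U_0\one\) in round 1 and every other arm-round pair receives zero. Then \(U^\star=U_0\), so \(L^\star=L_0\). With probability at least \(1-1/K\) the learner misses the hidden arm in round 1, and then its cumulative vector is \(0\). That arm is ahead by \(U_0\) in every coordinate, so \(\E[\Rfs]\ge(1-1/K)U_0\ge\tfrac34 U_0\). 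This already has order \(\min\{U_0,\sqrt{KU_0}\}=U_0\).

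In the main case \(U_0\ge1\), we embed the zero-loss instance family on the first \(m\) rounds. We use the parameters \(q,h,n,\alpha\) with \(T\) replaced by \(m\), or the one-round construction when \(\min\{m,\sqrt{qm}\}<512\). If \(m<T\), round \(m+1\) gives every arm \((U_0-m)\one\), and all later rounds give zero. Three facts then need checking.

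\begin{enumerate}
\item \emph{Exact class.} In the prefix, some coordinate has an arm with reward one in all \(m\) rounds, and no arm-coordinate total exceeds \(m\). After the padding, the maximum arm-coordinate total is exactly \(U_0\), so \(L^\star=L_0\) holds on every realization.
\item \emph{Regret is preserved.} The padding adds the same vector to every \(G_i\) and to \(A_T\), whatever arm is pulled. Hence the differences \(G_{i,d}-A_{T,d}\) are unchanged, and so is \(\Rfs\) by \eqref{eq:pareto-regret}.
\item \emph{Extra information does not help.} The learner's rounds \(1,\dots,m\) form a policy for the \(m\)-round problem, even though it may use its knowledge of \(T\) and \(L_0\). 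The post-\(m\) rounds reveal no information and change no regret differences.
\end{enumerate}

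The zero-loss analysis (the prefix-deficit bound \eqref{eq:prefix-deficit-main}, the suffix bound \eqref{eq:suffix-deficit-main}, and the one-round fallback) holds against every policy for horizon \(m\). Applied here, it gives \(\E[\Rfs]\ge c\min\{m,\sqrt{Km}\}\) for a universal constant \(c\).

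For \(D>2\), extra coordinates are copies of coordinate 1. The \(T\ge6\) assumption is not needed in this reduction, because the case \(m<6\) is covered by the one-round construction. To finish, we convert back to \(U_0\). Since \(U_0\ge1\), we have \(m\ge U_0/2\), so \(\min\{m,\sqrt{Km}\}\ge\tfrac12\min\{U_0,\sqrt{KU_0}\}\), using \(\sqrt{Km}\ge\sqrt{KU_0/2}\ge\tfrac12\sqrt{KU_0}\). Finally, averaging over the randomized instance produces, for each algorithm, one deterministic oblivious sequence in the class \(L^\star=L_0\) achieving the bound. The supremum in \eqref{eq:conditional-value} is therefore at least this quantity for every policy.

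The main obstacle is the third fact. Theorem~\ref{thm:zero-lower} is stated for horizon \(T\), and here the learner knows the full horizon \(T\) and \(L_0\), which also determine \(m\). The zero-loss argument must hold uniformly over all policies on the first \(m\) rounds. I would check this by noting that the bounds \eqref{eq:prefix-deficit-main} and \eqref{eq:suffix-deficit-main} are conditional statements valid for any adaptive choice rule. Side knowledge of \(T\) and \(L_0\) is a fixed input that does not depend on \(J\), \(I\), or the Bernoulli draws. So these bounds go through unchanged, and the remaining pieces are routine bookkeeping.
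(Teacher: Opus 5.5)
Your proposal is correct and follows the paper's proof essentially step for step. It uses the same split on $U_0$ (trivial at $U_0=0$, the one-round hidden-arm instance for $0<U_0<1$, and the zero-loss construction on the first $m=\lfloor U_0\rfloor$ rounds padded by a common $(U_0-m)\one$ round), the same argument that the first $m$ actions of a $T$-round policy knowing $T$ and $L_0$ form an admissible $m$-round policy, and the same conversion $m\ge U_0/2$. Your point that the zero-loss bound must hold for every horizon $m\ge1$ is exactly how the paper handles it: it proves the zero-loss statement for all integer horizons.
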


At \(L_0=0\), we have \(U_0=T\), and
Theorem~\ref{thm:conditional-lower} recovers the rate in
Theorem~\ref{thm:zero-lower}. For \(U_0\le K\), the lower bound is
\(\Omega(U_0)\), matching the general upper bound \(\Rfs\le U_0\). For
\(U_0\ge K\), it is \(\Omega(\sqrt{KU_0})\), which comes from the cost of
finding the hidden arm. As \(L_0\) increases, \(U_0=T-L_0\) decreases,
so no comparator can accumulate as much reward. 


\begin{corollary}[Permutation]
\label{cor:profile-lower}
Let \(K\ge4,T\ge6,D\ge2\) and \(L_0\in[0,T]\). Set
\(U_0=T-L_0\) and
\(
 \lambda(L_0):=
 (L_0,\{\frac{T+L_0}{2},\ldots,
                     \frac{T+L_0}{2}\}_{D-1\text{ entries}}).
\)
Then we have 
\(
 \mathcal V^{\mathrm{perm}}_{K,D,T}(r)
 \ge 2^{-28}\min\{U_0,\sqrt{KU_0}\}.
\)
\end{corollary}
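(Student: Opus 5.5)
The plan is to reuse the zero-loss construction of Theorem~\ref{thm:zero-lower}, shortened to a block of about \(U_0\) rounds as in Theorem~\ref{thm:conditional-lower}, and then to pad it with reward vectors common to all arms so that the coordinate loss profile equals \(\lambda(L_0)\) exactly. The key structural observation is that the construction is already closed under permuting coordinates. Start from a case-\(A\) realization: \(A_i\) receives \((1,0)\) and then \((1,Y_{i,t})\), while \(B_i\) receives \((0,1)\) and then \((0,1/2)\). Swapping the two coordinates gives \(A_i\colon(0,1),(Y_{i,t},1)\) and \(B_i\colon(1,0),(1/2,0)\). After relabelling \(A_i\leftrightarrow B_i\), this is exactly the case-\(B\) realization with the same hidden index and the same Bernoulli draws; the odd leftover arm also maps correctly. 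Since the learner's guarantee cannot depend on arm labels, the random case \(J\) of Theorem~\ref{thm:zero-lower} is realized by moving within a single class \(\mathcal C(r)\). This is precisely the ignorance about which coordinate is easy that drives \eqref{eq:prefix-deficit-main}.

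\textbf{Step 1: the core block.} Run the zero-loss construction on the first \(m\approx U_0/4\) rounds; any fixed fraction works, absorbed into the constant \(2^{-28}\). In this block the easy coordinate has maximum exactly \(m\). The other coordinate has realized maximum \(M\le m\le U_0/2\). The probability estimates \eqref{eq:prefix-deficit-main} and \eqref{eq:suffix-deficit-main} then give \(\E[\Rfs]\ge c\min\{m,\sqrt{Km}\}\). For \(U_0\) smaller than a constant, use the one-round hidden-arm construction instead. It has coordinate-symmetric rewards \(u\one\), so permutations are trivial and the profile only requires padding.

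\textbf{Step 2: common padding.} On the later rounds, give every arm the same vector. The easy coordinate receives total \(U_0-m\) and the non-easy coordinate receives total \(U_0/2-M\ge 0\), each spread over rounds with per-round values in \([0,1]\). There is enough room because \(m+(U_0-m)\le U_0\le T\). Adding a common vector to every arm and to the learner leaves every difference \(G_{i,d}-A_{T,d}\) unchanged, so \(\Rfs\) is unchanged by Proposition~\ref{prop:pareto-representation}. The largest cumulative rewards are now \(U_0\) and \(U_0/2\), so the losses are \(L_0\) and \(T-U_0/2=(T+L_0)/2\). For \(D>2\), copy the non-easy coordinate into coordinates \(3,\dots,D\); this produces the profile \(\lambda(L_0)\) and changes neither feedback nor regret. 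The padding depends on the realized \(M\), which is allowed because everything is drawn before play. The padding also commutes with the coordinate swap, so both cases stay in one orbit. Finally, \(\min\{m,\sqrt{Km}\}\ge\tfrac14\min\{U_0,\sqrt{KU_0}\}\) converts the bound into the claimed form.

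\textbf{Main obstacle: quantifier order.} \(\mathcal V^{\mathrm{perm}}\) takes a supremum only over coordinate permutations of one fixed \(r\), whereas the lower-bound argument also averages over the hidden index \(I\) and the draws \(Y\). I would handle this in the same way as the statement of Theorem~\ref{thm:zero-lower}. Fix the algorithm and average over \((I,Y)\); the averaged regret over the pair \(\{r,\text{swap}(r)\}\) is at least the bound. Hence some realization \(r\) with profile \(\lambda(L_0)\) satisfies \(\max\{\E_{\mathcal A}[\Rfs(r)],\E_{\mathcal A}[\Rfs(\text{swap}(r))]\}\ge 2^{-28}\min\{U_0,\sqrt{KU_0}\}\), and both sequences lie in \(\mathcal C(r)\). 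If a statement uniform over \(r\) is required, I would instead argue that, for \(r\) up to arm relabelling, the hidden arm is unknown to any label-agnostic learner. This is the step I expect to need the most care; the bookkeeping for the constant is routine.
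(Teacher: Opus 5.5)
Your Steps~1 and~2, shortening the zero-loss construction and appending common per-coordinate padding, are the right mechanism. The argument breaks in how you handle the quantifier issue. Swapping the coordinates of a case-$A$ realization does not reproduce case $B$ under the same arm labels: the zero-loss arms and the hidden arm stay in group $A$, and only the easy coordinate moves. So the deterministic policy that always plays one fixed arm of group $A$ has $\Rfs=0$ on both $r$ and $\mathrm{swap}(r)$, for every $(I,Y)$, because that arm attains the maximum of whichever coordinate is easy. Your claimed lower bound on the average over $\{r,\mathrm{swap}(r)\}$ is therefore false. You cannot appeal to label-agnostic learners, since a policy may depend arbitrarily on arm indices. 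Moreover, the prefix argument \eqref{eq:prefix-deficit-main} needs two cases with identical prefixes under the same labels, whereas $r$ and $\mathrm{swap}(r)$ already differ on the first pull. In fact, under your literal reading (supremum over coordinate permutations of one fixed $r$, with the infimum over policies taken after $r$ is fixed) the value is identically zero. Let the learner play an arm $j$ attaining $L^\star(r)$. Since $\min_d(G_{i,d}-G_{j,d})$ is invariant under permuting coordinates and $j$ is best on some coordinate, $\Rfs=0$ on every element of $\mathcal C(r)$. No argument can prove the corollary in that form.

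The class that makes the statement meaningful, and the one the paper's proof uses, is $\mathcal C(\lambda(L_0))$: all reward sequences whose unordered vector of coordinate losses equals $\lambda(L_0)$. After padding, both case-$A$ and case-$B$ realizations with the original arm labels lie in this class. The random case $J$ is then available directly and no coordinate swap is needed. Averaging over the finite construction gives, for each policy, one deterministic sequence in the class with the stated expected regret.

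Once $J$ is random again, two details of your plan must change. First, for $D>2$, copying the non-easy coordinate is case-dependent and reveals $J$ on the first pull: a group-$A$ arm shows $(1,0,0,\ldots,0)$ in case $A$ but $(1,0,1,\ldots,1)$ in case $B$. The extra coordinates must be filled case-independently during the block. The paper does this with the average of coordinates~1 and~2, which the learner can compute itself; each arm's advantage there is the average of its two advantages, hence at least their minimum. Second, to obtain $2^{-28}$ you need $m=\lfloor U_0/2\rfloor$, the largest length compatible with $M\le U_0/2$, which gives $m\ge U_0/4$. With $m=\lfloor U_0/4\rfloor$ you only get $m\ge U_0/8$, which yields $2^{-29}$.
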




\paragraph{Relation to existing lower bounds.} \citet[Theorems~5.3--5.4]{xu2023pareto} obtain adversarial lower bounds of order \(\sqrt{T}\) by reducing to scalar bandits with identical coordinate means or rewards; their stochastic Theorem~5.1 similarly transfers a logarithmic scalar lower bound to Pareto pseudo regret. In contrast, Theorem~\ref{thm:conditional-lower} gives adversarial lower bounds over reward sequences with \(L^\star=L_0\), and Corollary~\ref{cor:profile-lower} further fixes all coordinate losses up to permutation. This requires hiding which coordinate attains the minimum, but in the zero-loss case, simply duplicating a scalar zero-loss instance across coordinates would allow the learner to use any coordinate and obtain bounded regret by Lemma~\ref{lem:known-zero-coordinate}. \citet[Theorem~1]{lai1985asymptotically} give a KL-dependent asymptotic lower bound for each fixed stochastic instance, under policies whose regret is subpolynomial over the model class. Our lower bound instead holds at a finite horizon and for every policy.

\section{Algorithms and Upper Bound Analyses}
\label{sec:alg}\label{sec:ub}

The lower bound analysis shows that any guarantee on Pareto regret that holds uniformly over the class \(L^\star=L_0\) must incur the rate
\(
\min\{T-L_0,\sqrt{K(T-L_0)}\}.
\) 
We now turn to the second part of our question: whether this rate can be attained without identifying \(d^\star\), the coordinate attaining \(L_0\). The encouraging answer is yes. It is enough to choose any coordinate before the first round and carefully optimize along it. We introduce two algorithms, both of which attain the same rate. When \(L_0\) is known, the
condition \(L^\star=L_0\) provides the common reward bound
\(U_0=T-L_0\) across coordinates, and we run Poly-INF on an arbitrary
coordinate using this bound. When \(L_0\) is unknown, we develop a
reward-doubling version of Poly-INF that adapts the reward scale from
observations.

The intuition behind these results follows from an interplay between the rate identified by the lower bound and Pareto geometry. The rate from the lower bound is monotonically decreasing in \(L_0\), while \(L_0\) also lower bounds the cumulative loss on every coordinate. This contrasts with the intuition from classical adversarial bandits that smaller loss is better. However, the rate increases with the complementary reward quantity \(U_0=T-L_0\). For every coordinate \(d\), we have \(U_d\le U_0\). Importantly, some algorithms for classical adversarial bandits admit regret bounds that increase with \(U_d\). Thus, any coordinate regret can also scale with \(U_0\). The following lemma crucially builds the route from coordinate regret to Pareto regret: the positive part of the regret on any fixed coordinate controls Pareto regret. The key subsequent task  is therefore to control this positive part, rather than only the expected regret commonly analyzed before, thereby transferring the same rate to Pareto regret.  
\begin{lemma}[Reduction]
\label{lem:fixed-coordinate-comparison}\label{lem:scalarization}
For every reward sequence and play path, and every coordinate \(d'\),
\begin{equation}
 0\le\Rfs\le\pos{R_T^{d'}},\qquad \Rfs\le U.
 \label{eq:fixed-coordinate}
\end{equation}
More generally, let \(\Delta_D\) be the set of nonnegative coordinate
weights that sum to one. For any \(w\in\Delta_D\), put
\(g_{i,t}=\sum_dw_dr_d^{i,t}\) and
\(R_T^w=\max_i\sum_tg_{i,t}-\sum_tg_{a_t,t}\). Then
\begin{equation}
 \Rfs\le\pos{R_T^w},
 \qquad
 \max_i\sum_tg_{i,t}\le U.
 \label{eq:scalarization}
\end{equation}
\end{lemma}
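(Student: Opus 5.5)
The plan is to work entirely with the pathwise representation of Proposition~\ref{prop:pareto-representation}, namely \(\Rfs=\pos{\max_i\min_d(G_{i,d}-A_{T,d})}\). The first inequality \(0\le\Rfs\) is immediate from the positive part. The remaining inequalities all come from one principle: a minimum over coordinates is at most any single coordinate, or any convex combination of coordinates.

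For the coordinate bound, I fix \(d'\). For every arm \(i\),
\[
\min_d(G_{i,d}-A_{T,d})\le G_{i,d'}-A_{T,d'}\le \max_j G_{j,d'}-A_{T,d'}=R_T^{d'}.
\]
Taking the maximum over \(i\) and then the positive part, which is monotone, gives \(\Rfs\le\pos{R_T^{d'}}\).

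For \(\Rfs\le U\), where \(U=U^\star=\max_{i,d}G_{i,d}\) from \eqref{eq:LstarU}, I use \(A_{T,d}\ge0\) because rewards lie in \([0,1]\). This gives \(\min_d(G_{i,d}-A_{T,d})\le G_{i,d'}\le U\) for any \(d'\), and since \(U\ge0\), taking the positive part preserves the bound.

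For the scalarized statement \eqref{eq:scalarization}, I fix \(w\in\Delta_D\). Because the weights are nonnegative and sum to one, for each \(i\),
\[
\min_d(G_{i,d}-A_{T,d})\le\sum_d w_d(G_{i,d}-A_{T,d})=\sum_t g_{i,t}-\sum_t g_{a_t,t}\le R_T^w.
\]
The middle equality uses linearity, \(\sum_t\sum_d w_d r_d^{a_t,t}=\sum_d w_dA_{T,d}\). Maximizing over \(i\) and taking the positive part gives \(\Rfs\le\pos{R_T^w}\).

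The last claim also follows from convexity: \(\sum_t g_{i,t}=\sum_dw_dG_{i,d}\le\max_dG_{i,d}\le U\) for every \(i\).

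There is no real obstacle here. The only point needing care is to invoke Proposition~\ref{prop:pareto-representation}, rather than the infimum definition, so that the argument is pathwise and covers ties and edge cases. Everything else is the elementary fact that a minimum is bounded by any average, together with monotonicity of \(\pos{\cdot}\). The first inequality is the special case of the second with \(w\) a vertex of \(\Delta_D\).
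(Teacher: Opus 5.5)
Your proposal is correct and follows essentially the same route as the paper. Both arguments bound \(\min_d(G_{i,d}-A_{T,d})\) by its \(w\)-average, take the maximum over arms and then positive parts, obtain the coordinate case from a vertex of \(\Delta_D\), and derive the cap from nonnegativity of the learner's rewards together with \(G_{i,d}\le U^\star\). The only difference is that you also write out the coordinate case directly.
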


We use the coordinate case in what follows, while, more generally, Lemma~\ref{lem:scalarization}
shows that the same comparison extends to any fixed weighted average of the
coordinates. One subtlety is that the reduction involves
\(\pos{R_T^{d'}}\): a bound on signed expected regret alone is insufficient,
as positive and negative regret can cancel. The weighted version may have
implications for scenarios where the decision maker has known a priori
preferences, in the form of weights, over the coordinates.

\begin{remark}
The coordinate regret \(R_T^d\) is the standard regret against the best fixed arm in hindsight and can be negative, since the learner may switch among arms and outperform every fixed arm on coordinate \(d\). The sign becomes relevant here because Pareto regret is nonnegative; consequently, our comparison involves the positive part \([R_T^d]_+\), rather than \(R_T^d\) itself. In particular, a bound on \(\mathbb{E}[R_T^d]\) does not by itself imply the same bound on \(\mathbb{E}[[R_T^d]_+]\). Moreover, the weighted version generalizes the result in \citet{xu2023pareto}, which considers only the coordinate case without weights.
\end{remark}

These results reduce the remaining task to developing scalar methods whose positive part of the regret on an arbitrary fixed coordinate can be controlled at the target rate in \(U_0/L_0\). We first consider the case where \(L_0\) is known (Section \ref{sec:known-l0}) and then the case where it is unknown (Section \ref{sec:unknown-l0}). 

\subsection{When \texorpdfstring{\(L_0\)}{L0} is known}
\label{sec:known-l0}

The lower bound holds even if \(L_0\) is known, which motivates us to first consider this setting. We use the Poly-INF policy of \citet[Theorem~18]{audibert2010regret},
whose parameters can be set using an upper bound \(B\ge81K\) on cumulative
reward. We denote the resulting policy by \(\polyinf(B)\). Since \(U_0\) is
known and the cumulative reward of every arm on every coordinate is at most
\(U_0\), we can set \(B=U_0\) for any coordinate fixed before the first round.

Algorithm~\ref{alg:known-scale} therefore runs \(\polyinf(U_0)\) when
\(U_0\ge81K\). When \(U_0<81K\), it instead plays a fixed arm throughout,
as the deterministic bound \(\Rfs\le U_0\) already gives the desired rate. To see this, for every \(0\le U_0<81K\),
\(\Rfs\le U_0\le9\min\{U_0,\sqrt{KU_0}\}.
\)
If \(U_0\le K\), the minimum equals \(U_0\). If
\(K<U_0<81K\), then \(U_0/\sqrt{KU_0}=\sqrt{U_0/K}<9\).

Perhaps naturally, it leads to the following upper bound on the Pareto regret. 

\begin{theorem}[Optimal upper bound when \(L_0\) is known]
\label{thm:conditional-upper}
For \(K\ge2,T\ge2,D\ge1\) and every \(L_0\in[0,T]\), with
\(U_0=T-L_0\), Algorithm~\ref{alg:known-scale} with \(B=U_0\) satisfies
\(
\E[\Rfs]\le10\min\{U_0,\sqrt{KU_0}\}
\)
on every reward sequence with \(L^\star=L_0\).
\end{theorem}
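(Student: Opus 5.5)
The proof splits on the branch taken by Algorithm~\ref{alg:known-scale}. If \(U_0<81K\), the algorithm plays a fixed arm, and Lemma~\ref{lem:fixed-coordinate-comparison} gives the pathwise bound \(\Rfs\le U\), where \(U=U^\star(r)=T-L^\star(r)=U_0\) on the class \(L^\star=L_0\). The elementary calculation before the theorem then gives \(U_0\le 9\min\{U_0,\sqrt{KU_0}\}\le10\min\{U_0,\sqrt{KU_0}\}\). So the substantive case is \(U_0\ge81K\). There \(\sqrt{KU_0}\le U_0\), the minimum equals \(\sqrt{KU_0}\), and we must show \(\E[\Rfs]\le10\sqrt{KU_0}\).

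In that case I fix the coordinate \(d'\) on which \(\polyinf(U_0)\) runs. By Lemma~\ref{lem:fixed-coordinate-comparison}, \(\Rfs\le\pos{R_T^{d'}}\) pathwise, so it suffices to bound \(\E[\pos{R_T^{d'}}]\). On coordinate \(d'\) every arm satisfies \(G_{i,d'}\le U_{d'}\le U^\star=U_0\). Hence \(B=U_0\) is a valid upper bound on the cumulative reward of every arm, the hypothesis \(B\ge81K\) holds, and \citet[Theorem~18]{audibert2010regret} applies to the scalar sequence \((r_{d'}^{i,t})\). The obstacle, flagged in the remark after Lemma~\ref{lem:scalarization}, is that a bound on \(\E[R_T^{d'}]\) does not control \(\E[\pos{R_T^{d'}}]\).

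I would handle this in two steps. First, I would check whether Theorem~18 of Audibert--Bubeck is stated for \(\max_i\E[G_{i,d'}-A_{T,d'}]\) or for the stronger \(\E[\max_iG_{i,d'}-A_{T,d'}]\); for an oblivious adversary these coincide, since \(G_i\) is deterministic. Second, I would convert this into a bound on the positive part. The plan is to redo the Poly-INF potential analysis pathwise. The standard INF argument bounds \(\max_iG_{i,d'}-\sum_t\sum_jp_{j,t}\tilde g_{j,t}\) in terms of estimated rewards. One then replaces the learner's realized reward by its conditional expectation, which introduces a martingale difference term, and replaces estimated by true comparator rewards. Since \(x\le\pos{x}\) and both sides can be taken nonnegative, this gives
\[
\E\bigl[\pos{R_T^{d'}}\bigr]\le \E\bigl[\pos{\text{potential term}}\bigr]+\E\bigl[|M_T|\bigr]+\E\Bigl[\max_i\bigl|\textstyle\sum_t(\tilde g_{i,t}-r^{i,t}_{d'})\bigr|\Bigr].
\]

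I would try to bound each deviation term by \(\sqrt{\mathrm{Var}}\le c\sqrt{KU_0}\), using that conditional variances sum to at most a constant times \(U_0\) or \(KU_0\) because rewards on \(d'\) total at most \(U_0\) per arm. The comparator term over \(K\) arms is the delicate one, since a naive union bound would introduce a \(\sqrt{\log K}\) factor. The key observation I expect to use is that Poly-INF's potential bound already holds with the estimated maximum \(\max_i\tilde G_i\) in place of \(\max_iG_i\). If so, only \(\max_i(G_i-\tilde G_i)\) needs control, and this can be bounded by \(\E\bigl[\sum_i\pos{G_i-\tilde G_i}\bigr]\) or through the structure of the implicit normalization. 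Keeping the constants from the cited bound, apparently around \(8\sqrt{KB}\) or \(2\sqrt2\cdot\) similar, plus these deviations within a total of \(10\sqrt{KU_0}\) is the step I expect to be hardest. If it fails, I would fall back on the known high-probability version of Poly-INF and integrate its tail.
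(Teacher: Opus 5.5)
Your branch split, the reduction $\Rfs\le\pos{R_T^{d'}}$, and the check that $B=U_0$ dominates every arm's cumulative reward on $d'$ all match the paper, and your fixed-arm branch is complete. The gap is in your primary route for $U_0\ge81K$. You claim the deviation terms have conditional variances summing to $O(U_0)$ or $O(KU_0)$; this fails for the comparator term.

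First, only one arm matters there. Under an oblivious adversary the best arm $k^\star$ is deterministic and $\max_i\tilde G_i\ge\tilde G_{k^\star}$. So the $\sqrt{\log K}$ issue you worry about never arises, and bounding through $\sum_i\pos{G_i-\tilde G_i}$ would lose a factor of up to $K$.

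The real problem is the variance of that single estimate. Whether importance-weighted or logarithmic, $v_{k^\star,t}$ has conditional second moment of order $g_{k^\star,t}^2/p_{k^\star,t}$. Poly-INF only guarantees $p_{k^\star,t}\ge\gamma/K=3/\sqrt{KU_0}$, and nothing keeps $p_{k^\star,t}$ away from this floor on rounds where $k^\star$ collects reward. The available bound is therefore $\sum_t g_{k^\star,t}^2/p_{k^\star,t}\le U_0\sqrt{KU_0}/3$, and a $\sqrt{\mathrm{Var}}$ argument yields order $K^{1/4}U_0^{3/4}$, not $\sqrt{KU_0}$. This is the same second-moment term that forces the $U_0^{2/3}$ rate in the paper's analysis of gain-based EXP3.

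What controls this term is not a second moment. It is the one-sided exponential property of the logarithmic estimator, $\E[e^{\lambda(g_{k^\star,t}-v_{k^\star,t})}\mid\mathcal F_{t-1}]\le1$ with $1/\lambda=\sqrt{2KB}$ (Lemma~\ref{lem:monotone-supermartingale}). It gives $G_{k^\star}-\tilde G_{k^\star}$ an exponential tail at scale $\sqrt{2KB}$, however small $p_{k^\star,t}$ becomes.

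The fix is the argument you list only as a fallback, and it is exactly the paper's proof. Theorem~18, Eq.~(18), of \citet{audibert2010regret} is already a high-probability statement for a fixed gain matrix, so your question about $\max_i\E$ versus $\E\max_i$ is moot. For $n=T\ge U_0\ge81K\ge K$ and every $\delta\in(0,1)$, with probability at least $1-\delta$,
$R_T^{d'}\le4.5\,G_T^\star\sqrt{K/B}+4\sqrt{KB}+\sqrt{2KB}\log(\delta^{-1})$.

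The parameters $(\eta,\gamma,\lambda)=(2\sqrt B,3\sqrt{K/B},1/\sqrt{2KB})$ do not depend on $\delta$. So this bound holds for one and the same algorithm at every confidence level, and the tail can be integrated. With $G_T^\star\le U_0=B$ the first two terms are at most $8.5\sqrt{KB}$. The layer-cake formula then gives $\E[\pos{R_T^{d'}}]\le8.5\sqrt{KB}+\sqrt{2KB}<10\sqrt{KB}$, and $\Rfs\le\pos{R_T^{d'}}$ finishes the branch.

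Make this the main argument rather than a fallback. A pathwise re-derivation would have to reproduce both the supermartingale step and the sharp constants $4.5+4$ to stay under $10$.
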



For \(K\ge4,T\ge6,D\ge2\), this bound holds uniformly over all reward sequences \(r\) satisfying \(L^\star(r)=L_0\), for each fixed known \(L_0\). Taking the supremum over this class therefore yields the same order, which matches the lower bound in Theorem~\ref{thm:conditional-lower} for \(\mathcal V_{K,D,T}(L_0)\). Hence, a policy using any coordinate fixed in advance is minimax optimal without identifying the coordinate attaining \(L_0\). 

A new aspect of our analysis is controlling the positive part of scalar regret. For the Poly-INF policy of \citet[Theorem~18]{audibert2010regret}, the parameters are the same for every confidence level. Integrating its tail bound therefore gives, for \(U_0\ge81K\), \(\E[[R_T^{d'}]_+]\le(8.5+\sqrt2)\sqrt{KU_0}\). Together with the reduction from Pareto regret to coordinate regret, this yields the upper bound.


Notably, our choice of Poly-INF is deliberate. To see why, we also examine several popular algorithms. In particular, modified EXP3 uses the logarithmic estimator of
\citet[Theorem~21]{audibert2010regret} with \(B=U_0\). Gain-based
EXP3 and EXP3.P retain their updates, with their exploration parameters set using
\(U_0\). EXP3-IX similarly uses a learning rate tuned to \(U_0\). We find that they may incur additional factors in their regret bounds or retain a dependence on the time horizon \(T\) as shown in
Lemma~\ref{lem:known-l0-comparisons}, which can be larger than \(U_0\) since \(U_0\le T\). Thus, \emph{while the choice of coordinate does not matter for attaining the minimax rate, how the learner optimizes along that coordinate does.}  

The precise results are summarized below.

\begin{proposition}[EXP3 variants]
\label{prop:known-l0-comparisons}\label{prop:exp3-log-rewards}
Fix \(K\ge2,T\ge2,D\ge1\) and a coordinate \(d'\) before play. The corresponding bounds on \(\E[\Rfs]\) are
\(O(\min\{U_0,\sqrt{KU_0\log(3K)}\})\) for modified EXP3,
\(O(\min\{U_0,(K\log K)^{1/3}U_0^{2/3}\})\) for gain-based EXP3,
\(O(\min\{U_0,\sqrt{KT\log(KT)}\})\) for EXP3.P and adversarial MO-KS,
and \(O(\min\{U_0,\sqrt{KT\log K}\})\) for EXP3-IX. 
\end{proposition}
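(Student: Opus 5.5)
The plan is to push each algorithm through Lemma~\ref{lem:fixed-coordinate-comparison}. Running on the fixed coordinate \(d'\), every variant satisfies pathwise \(\Rfs\le\min\{U_0,\pos{R_T^{d'}}\}\), because \(\Rfs\le U\le U_0\) on the class \(L^\star=L_0\). Taking expectations gives
\[
\E[\Rfs]\le\min\bigl\{U_0,\E\bigl[\pos{R_T^{d'}}\bigr]\bigr\}.
\]
The \(U_0\) branch of each stated minimum is therefore free. The remaining work, for each variant separately, is to bound \(\E[\pos{R_T^{d'}}]\) at the claimed rate. As the Remark stresses, a bound on \(\E[R_T^{d'}]\) alone does not suffice.

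The variants fall into two groups.

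\textbf{High-probability variants.} For modified EXP3 with the logarithmic estimator of \citet[Theorem~21]{audibert2010regret}, tuned with \(B=U_0\), I will use its high-probability guarantee: for every \(\delta\), with probability at least \(1-\delta\),
\[
R_T^{d'}\le c\sqrt{KU_0\log(3K)}+c'\sqrt{KU_0}\,\log(1/\delta)\big/\sqrt{\log(3K)}\quad\text{(or a similar form)},
\]
with parameters that do not depend on \(\delta\). Writing \(\E[\pos{R}]=\int_0^\infty\Pr\{R>x\}\,dx\) and substituting the tail bound gives \(O(\sqrt{KU_0\log(3K)})\). This is the same integration as in Theorem~\ref{thm:conditional-upper}. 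EXP3.P (and adversarial MO-KS, which reduces to EXP3.P on one coordinate as in \citet{xu2023pareto}) and EXP3-IX are handled the same way. Their standard high-probability bounds carry horizon dependence: \(\sqrt{KT\log(KT/\delta)}\) for EXP3.P and \(\sqrt{KT\log K}+\sqrt{KT/\log K}\log(1/\delta)\) for EXP3-IX, since the tuning uses \(U_0\) but the variance terms sum over all \(T\) rounds. Integrating the tail gives \(O(\sqrt{KT\log(KT)})\) and \(O(\sqrt{KT\log K})\) respectively.

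\textbf{Gain-based EXP3.} This variant has an in-expectation bound of order \(U_0^{2/3}(K\log K)^{1/3}\) once the exploration rate is tuned as \(\gamma\asymp(K\log K/U_0)^{1/3}\). To handle the positive part I would not integrate tails. Instead, I use the fact that the exploration floor \(\gamma/K\) bounds the importance weights by \(K/\gamma\). This makes the martingale differences between estimated and true cumulative gains bounded by \(K/\gamma\), and Freedman's inequality then gives a deviation of order \(\sqrt{(K/\gamma)U_0}+K/\gamma\) up to logarithms. With the chosen \(\gamma\) this is \(O((K\log K)^{1/3}U_0^{2/3})\). Combining this with \(\E[\pos{R}]\le\E[\max_i(\hat G_i-G_i)]_+ +\) (regret relative to the estimated gains) gives the claimed rate.

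The main obstacle is this positive-part control for gain-based EXP3 and the check that the \(\delta\)-free tuning holds for modified EXP3. If an algorithm's tuning depends on \(\delta\), tail integration fails. In that case I would fix \(\delta=1/U_0\) (or \(1/T\)), use \(\pos{R}\le U_0\) on the failure event, and accept the extra logarithmic factor. The stated \(O\) rates already allow for this loss in the EXP3.P case.
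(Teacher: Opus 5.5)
Your plan matches the paper's for modified EXP3, EXP3.P/MO-KS, and EXP3-IX. You reduce via \(\Rfs\le\min\{U_0,\pos{R_T^{d'}}\}\). For modified EXP3 and EXP3-IX, whose parameters do not depend on \(\delta\), you integrate the tail of a high-probability bound. For EXP3.P you fix \(\delta\) before play and charge \(U_0\delta\) on the failure event; the paper uses \(\delta_0=(T+1)^{-2}\). Three small corrections apply here.
\begin{itemize}
\item EXP3.P is not ``handled the same way'' as the other two. Its confidence parameter \(\alpha=2\sqrt{\log(KT/\delta)}\) depends on \(\delta\), so only your fallback is valid.
\item The EXP3-IX rate holds for the horizon-tuned \(\eta=\sqrt{2\log K/(KT)}\), not a \(U_0\)-tuned one. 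The parameter-dependent bound is of order \(\log K/\eta+\eta KT\). A \(U_0\)-tuned \(\eta\) therefore gives \(\sqrt{KU_0\log K}+T\sqrt{K\log K/U_0}\). For fixed \(K\) and \(U_0=T^{2/3}\), its minimum with \(U_0\) is of order \(T^{2/3}\gg\sqrt T\).
\item Modified EXP3 needs a separate branch, e.g.\ a fixed arm, when \(U_0\le4K\log(3K)\). There the Theorem~21 exploration rate \(2\sqrt{K\log(3K)/U_0}\) is at least one, and the cap \(\Rfs\le U_0\) gives the bound.
\end{itemize}

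For gain-based EXP3 your route genuinely differs, and as written it has loose ends. The paper uses no concentration inequality. Because the sequence is oblivious, the best arm \(k\) is deterministic. The EXP3 potential argument then gives, pathwise, \(G_k-A\le\gamma G_k+(1-\gamma)M+K\log K/\gamma+Z\). Here \(M=\sum_t(g_{k,t}-\widehat g_{k,t})\) is a mean-zero martingale and \(Z\ge0\) is the quadratic term. It follows that \(\E[\pos{M}]=\tfrac12\E|M|\le\tfrac12\sqrt{\E[M^2]}\le\tfrac12\sqrt{KU_0/\gamma}\) and \(\E[Z]\le(e-2)\gamma U_0\). The choice \(\gamma\asymp(K\log K/U_0)^{1/3}\) balances \(\gamma U_0\) against \(\sqrt{KU_0/\gamma}\) with no further logarithm.

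In your version, two points need fixing.
\begin{itemize}
\item The deviation that enters is the lower deviation \(G_k-\widehat G_k\) of this single comparator, not \(\max_i(\widehat G_i-G_i)\).
\item Your ``up to logarithms'' must be resolved, because the claimed rate has no slack for extra logarithms. Fixing \(\delta=1/U_0\) in Freedman's inequality costs a factor of order \(\sqrt{\log U_0/\log K}\).
\end{itemize}
Integrating Freedman's tail does work, because EXP3's \(\gamma\) does not depend on \(\delta\). Even a union bound over arms then costs only \(\log K\), which is absorbed into \((K\log K)^{1/3}\). The second-moment argument is shorter, though, and avoids the issue entirely.
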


At the level of these guarantees, Poly-INF is the only algorithm among those considered that matches the lower bound uniformly over \(U_0\). Modified EXP3 incurs an additional \(\sqrt{\log K}\) factor, gain-based EXP3 does not attain the target dependence on \(U_0\), and EXP3.P and EXP3-IX retain a dependence on \(T\). This comparison shows that attaining the minimax rate does not require
identifying a favorable coordinate, but does require a favorable algorithm with
the appropriate dependence on \(U_0\). 

\subsection{When \texorpdfstring{\(L_0\)}{L0} is unknown}
\label{sec:unknown-l0}

More generally, \(L_0\) may be unknown to the learner, for example when the problem setting is completely black-box. In this case, a direct fallback is to run Poly-INF with \(B=T\), the worst-case upper bound on \(U_0\), though this can clearly be overly pessimistic. This gives
\(O(\min\{U_0,\sqrt{KT}\})\), where the second term fails to reflect the role of \(U_0\). 
To recover this dependence without knowing \(L_0\), we propose
\textbf{reward-doubling Poly-INF,
Algorithm~\ref{alg:adaptive}.} It fixes one coordinate, starts with
\(B=81K\), and runs \(\polyinf(B)\). Whenever the reward collected
during the current run reaches \(B\), it doubles \(B\) and starts a
fresh copy on the next round. In this way, the scale adapts to the observed rewards while the chosen coordinate remains fixed.

We prove its regret upper bound as in the following theorem. 

\begin{theorem}[Upper bound on Pareto regret]
\label{thm:adaptive}
For \(K\ge2,T\ge2,D\ge1\), Algorithm~\ref{alg:adaptive} does not use
\(L_0\). For every \(L_0\in[0,T]\) and every reward
sequence fixed before play with \(L^\star=L_0\), it satisfies
\refstepcounter{equation}\label{eq:adaptive-expected}%
\(\E[\Rfs]\le100\min\{U_0,\sqrt{KU_0}\},\qquad U_0=T-L_0\)~\textup{(\theequation)}.
\end{theorem}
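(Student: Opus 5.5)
Theorem~\ref{thm:adaptive} reduces by Lemma~\ref{lem:fixed-coordinate-comparison} to the fixed coordinate \(d'\) used by Algorithm~\ref{alg:adaptive}: it suffices to show \(\E[\pos{R_T^{d'}}]\le100\min\{U_0,\sqrt{KU_0}\}\). The bound \(\Rfs\le U\le U_0\) holds deterministically, so only \(\E[\pos{R_T^{d'}}]\le100\sqrt{KU_0}\) needs proof, and only when \(U_0\ge K\) (otherwise \(U_0\) is already the minimum). Write \(B_k=2^k\cdot81K\), \(k=0,1,\dots\), for the scales of the successive runs. Run \(k\) occupies an interval of rounds \([\tau_k,\tau_{k+1})\) and ends as soon as the learner's reward on \(d'\) collected in that run reaches \(B_k\). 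Rewards are at most one per round, so the collected reward in a completed run lies in \([B_k,B_k+1)\).

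\emph{Number of runs.} Every completed run \(k\) contributes at least \(B_k\) to \(A_{T,d'}\). If the last run has index \(k^\ast\), then \(\sum_{k<k^\ast}B_k=(2^{k^\ast}-1)81K\le A_{T,d'}\). I would then case on whether \(A_{T,d'}\le 2U_0\).

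In the case \(A_{T,d'}>U_0\), the learner already beats every arm on \(d'\) (since \(\max_iG_{i,d'}\le U_0\)), so \(\pos{R_T^{d'}}=0\). This case is free. Otherwise \(A_{T,d'}\le U_0\), which forces \(B_{k^\ast}\le 2\cdot 81K+2U_0\). Thus every scale used satisfies \(B_k=O(U_0+K)\), and because the scales grow geometrically, \(\sum_{k\le k^\ast}\sqrt{KB_k}=O(\sqrt{K(U_0+K)})=O(\sqrt{KU_0})\).

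\emph{Per-run regret.} Decompose
\[R_T^{d'}\le\sum_{k\le k^\ast}R^{(k)},\]
where \(R^{(k)}=\max_i\sum_{t\in\text{run }k}r^{i,t}_{d'}-\sum_{t\in\text{run }k}r^{a_t,t}_{d'}\). This holds because the max of a sum is at most the sum of the maxes. Hence \(\pos{R_T^{d'}}\le\sum_k\pos{R^{(k)}}\). Within run \(k\), Poly-INF(\(B_k\)) is a fresh copy. Its guarantee, however, presupposes that all arms' cumulative rewards within the run are at most \(B_k\), and the best arm may exceed \(B_k\) inside the run even though the learner has not.

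This is where I expect the main obstacle. My first attempt is to avoid the precondition as follows. If some arm's in-run reward exceeded \(B_k\) by more than \(c\sqrt{KB_k}\) while the learner stays below \(B_k\), the run's regret is already large. I would apply the high-probability tail bound of \citet[Theorem~18]{audibert2010regret} at the first round where the best arm's in-run reward crosses \(B_k\): the truncated prefix satisfies the precondition. After that round, the learner's remaining in-run gain is at least zero, and the best arm's extra gain is bounded by the residual learner gap \(B_k-(\text{learner so far})\). Making this rigorous may need the observation that, until the run stops, the learner's in-run reward is below \(B_k\), so the comparator's in-run reward is at most \(B_k+R^{(k)}\). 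One then needs a self-bounding argument showing Poly-INF's bound degrades only to \(O(\sqrt{K(B_k+R^{(k)})})\), which gives \(R^{(k)}=O(\sqrt{KB_k})\).

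Integrating the tail bound, exactly as done for Theorem~\ref{thm:conditional-upper} since the parameters do not depend on the confidence level, yields \(\E[\pos{R^{(k)}}\mid\text{run }k\text{ starts}]\le(8.5+\sqrt2)\sqrt{KB_k}\) up to the self-bounding constant. The run indices are random but determined by the learner's own rewards, so I would sum over all \(k\) with the indicator that \(B_k\le 2U_0+162K\) (valid on the non-free event). Conditioning on the start time is harmless because the sequence is oblivious. The geometric sum then gives \(C\sqrt{KU_0}\), and I would check that the constants land below 100.
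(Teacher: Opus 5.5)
The first part of your plan is correct and matches the paper: the reduction to $\pos{R_T^{d'}}$, and the deterministic cap on the run index on the event $R_T^{d'}>0$, which is the paper's index $e^\star$. The gap is the per-run step you flagged. It is not really the precondition $G^\star\le B_k$. The tail bound of \citet[Theorem~18]{audibert2010regret} holds without it, and self-bounding via $G^\star=Y+R$ works because the coefficient is $4.5\sqrt{K/B_k}\le 1/2$. The real obstruction is that this bound is for a \emph{fixed} horizon on a fixed sequence. Each run instead ends at a time determined by the learner's own in-run reward, and conditioning on the start time handles only the start.

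Your truncation does not close this. After the in-run best arm crosses $B_k$, the run lasts until the learner's in-run reward reaches $B_k$, and nothing pathwise bounds how long that takes. So the comparator's extra gain is \emph{not} bounded by the residual gap $B_k-(\text{learner so far})$: take one arm rewarded on every round and a path on which the learner keeps sampling the other arms. Regret is also not monotone in time, so a bound at the fixed horizon $T-\tau_k+1$ says nothing about the regret at the restart. Separately, your per-run comparator $\arg\max_i$ over a random window is a random arm. The $\log(1/\delta)$ in the source bound comes from concentrating the estimator of one deterministic comparator. Patching either issue with union bounds over end times or over arms costs a $\log T$ or $\log K$ factor, which the theorem's constant-factor bound excludes.

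Three ideas are missing.
\begin{itemize}
\item \emph{Fixed comparator.} Compare every epoch with the same deterministic arm $k^\star$ attaining $U_{d'}$ over the whole horizon. Then $R_T^{d'}=\sum_e r_e$ holds exactly and no per-run maximum appears.
\item \emph{Pathwise inequality.} Replace the fixed-$n$ tail bound by the deterministic inequality inside its proof (Eq.~(39) of the source's Appendix~C.9). It holds on every path and for every number of rounds. Feed it the learner-side bound $Y_e\le B_e+1$ rather than any bound on the comparator. This gives the pathwise epoch inequality $r_e\le D_e+14\sqrt{KB_e}$, where $D_e$ is the deficit of $k^\star$'s estimator in epoch $e$.
\item \emph{One martingale.} The only randomness left is $\sum_t(g_{k^\star,t}-v_{k^\star,t})$. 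The paper controls it with a single exponential supermartingale of deterministic weight $\lambda_{e^\star}$, stopped at the end of epoch $e^\star$, together with Ville's inequality. The weight is admissible in every epoch up to $e^\star$ because $\lambda_e$ decreases in $e$.
\end{itemize}

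With these in place, your sum-of-positive-parts structure would also go through. Per-epoch supermartingales of weight $\lambda_e$, optionally stopped at each epoch's end, give $\E[\pos{D_e}\one\{e\text{ starts}\}]\le\sqrt{2KB_e}$. The geometric sum over $e\le e^\star$ then yields the right order, though with a slightly larger constant than the single supermartingale. Both routes require the fixed comparator and the pathwise inequality.
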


The doubling scheme allows the unknown reward scale to reveal itself through the observed rewards. Across epochs, we compare against the best arm on the chosen coordinate, so the epoch regrets sum to the full coordinate regret. Whenever this regret is positive, the largest scale reached by the algorithm is controlled by the cumulative reward \(U_d\) of that arm. Since the epoch scales grow geometrically, the regret contributions across epochs remain of order \(\sqrt{K U_d}\). A single supermartingale controls the estimation error across all epochs, avoiding a union bound over restarts and hence an additional logarithmic factor. Together with \(U_d\le U_0\) and Lemma~\ref{lem:fixed-coordinate-comparison}, this gives the stated Pareto-regret bound.  

An important implication is that the optimal dependence on \(L_0\) can be attained even when neither its value nor the coordinate attaining it is known. The learner adapts through the rewards observed on an arbitrary fixed coordinate, while \(U_d\le U_0=T-L_0\) connects this adaptation to the target rate. Knowing \(L_0\), however, reduces the constant in our upper bound from \(100\) to \(10\).


Another natural approach is to apply a scalar algorithm (that is an algorithm in classical bandits along one coordinate) with a small-loss guarantee to a fixed coordinate \(d\), yielding a bound in terms of \(L_d\). Such a bound can be sharper for a particular reward sequence, but \(L_d\) can vary widely among sequences with the same \(L_0=\min_d L_d\). To see whether they can nevertheless recover the optimal rate, we closely examine several algorithms for adversarial bandits that do not know \(L_0\) and derive their bounds on Pareto regret. Their available guarantees reveal two obstacles. For several EXP3 variants, the resulting bounds retain a dependence on \(T\), \(L_d\), or additional factors, and therefore do not recover the optimal dependence on \(U_0\). For Tsallis-INF, EXP3++, and MO-US, the available guarantees instead control signed expected scalar regret, whereas our reduction requires its positive part. Converting these guarantees introduces an additional sequence-dependent term that is not controlled by \(L_0\). Fixed-block variants remove this term but lead to a \(T^{3/4}\) dependence. We summarize the results in Table~\ref{tab:algorithm-guarantees}; the precise statements are provided in Appendix. 


\begin{table}[h]
\centering
\small
\setlength{\tabcolsep}{4pt}
\renewcommand{\arraystretch}{1.12}
\begin{tabular}{@{}p{0.36\linewidth}p{0.12\linewidth}p{0.45\linewidth}@{}}
\toprule
Scalar method & Knows $L_0$? & $b$ in $\E[\Rfs]\lesssim\min\{U_0,b\}$\\
\midrule
Poly-INF & Yes & $\sqrt{KU_0}$\\
Modified EXP3 & Yes & $\sqrt{KU_0\log(3K)}$\\
Gain-based EXP3 & Yes / no & $(K\log K)^{1/3}V^{2/3}$ with $V=U_0$ / $T$\\
\midrule
\textbf{Reward-doubling Poly-INF (Ours)} & \textbf{No} & $\mathbf{\sqrt{KU_0}}$\\
Poly-INF with $B=T$ & No & $\sqrt{KT}$\\
EXP3.P and adversarial MO-KS & Either & $\sqrt{KT\log(KT)}$\\
EXP3-IX & Either & $\sqrt{KT\log K}$\\
GREEN-IX & No & $\sqrt{KL_{d'}\log(KT)}+K\log(KT)$\\
Original EXP3.1 & No & $\sqrt{KU_0\log K}+(KU_0^3/\log K)^{1/4}+K\log K$\\
Tsallis-INF (IW) & No & $\sqrt{KT}+S_{d'}$\\
Tsallis-INF (RV) & No & $\sqrt{KT}+K\log T+S_{d'}$\\
EXP3++ & No & $\sqrt{KT\log K}+S_{d'}$\\
MO-US & No & $\sqrt{KT\log K}+K+S_{d'}$\\
\midrule
Fixed-block Tsallis-INF (IW) & No & $K^{1/4}T^{3/4}$\\
Fixed-block Tsallis-INF (RV) and \mbox{MO-US} & No & $K^{1/4}T^{3/4}\log(2T)$\\
Fixed-block EXP3++ & No & $K^{1/4}T^{3/4}\sqrt{\log K}$\\
\bottomrule
\end{tabular}
\caption{Proved upper bounds for a coordinate $d'$ fixed before play.}
\label{tab:algorithm-guarantees}
\end{table}

Collectively, the lower and upper bounds yield the following minimax characterization. 
\begin{theorem}[Minimax rate]
\label{thm:minimax}
For \(K\ge4,T\ge6,D\ge2\) and every \(L_0\in[0,T]\), Eq. \eqref{eq:conditional-value} satisfies
\refstepcounter{equation}\label{eq:conditional-curve}%
\(\mathcal V_{K,D,T}(L_0)=\Theta(\min\{U_0,\sqrt{KU_0}\})\)~\textup{(\theequation)}.
\end{theorem}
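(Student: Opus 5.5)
The two-sided estimate in \eqref{eq:conditional-curve} comes from sandwiching \(\mathcal V_{K,D,T}(L_0)\) between two results already stated. Fix \(K\ge4\), \(T\ge6\), \(D\ge2\) and \(L_0\in[0,T]\), and set \(U_0=T-L_0\).

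For the lower bound, Theorem~\ref{thm:conditional-lower} applies directly under exactly these hypotheses. It gives a constant \(c>0\), independent of \(K,T,D,L_0\), with
\[
\mathcal V_{K,D,T}(L_0)\ge c\min\{U_0,\sqrt{KU_0}\}.
\]
This holds even when the policy knows \(L_0\), so the infimum in \eqref{eq:conditional-value}, which is over all policies, is bounded below as well. The boundary cases need no separate treatment. If \(U_0=0\), both sides vanish. If \(0<U_0<1\), Theorem~\ref{thm:conditional-lower} already covers this range through the one-round hidden-arm construction.

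For the upper bound, I exhibit a single policy and bound its worst case over the class \(\{r:L^\star(r)=L_0\}\). Since \(L_0\) is fixed in the definition \eqref{eq:conditional-value} and the infimum ranges over all policies, the policy may depend on \(L_0\). I therefore take Algorithm~\ref{alg:known-scale} with \(B=U_0\). Its hypotheses \(K\ge2\), \(T\ge2\), \(D\ge1\) are implied by ours. By Theorem~\ref{thm:conditional-upper}, every sequence with \(L^\star(r)=L_0\) satisfies \(\E[\Rfs]\le 10\min\{U_0,\sqrt{KU_0}\}\). Taking the supremum over the class and then the infimum over policies yields
\[
\mathcal V_{K,D,T}(L_0)\le 10\min\{U_0,\sqrt{KU_0}\}.
\]
Alternatively, Theorem~\ref{thm:adaptive} gives the same conclusion with constant \(100\) using a policy that does not know \(L_0\). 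This shows the upper bound does not depend on the policy knowing \(L_0\).

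Combining the two displays gives \(\Theta(\min\{U_0,\sqrt{KU_0}\})\), with constants uniform in \(K,T,D,L_0\). The only point needing care is consistency between the statements. Both bounds must refer to the same value \eqref{eq:conditional-value}: the same class of oblivious sequences with \(L^\star=L_0\), and expectation over the policy's randomization. The lower-bound construction uses random draws (case, hidden index, Bernoulli rewards), and Theorem~\ref{thm:conditional-lower} handles them by averaging to a deterministic sequence in that class. The upper bound holds pointwise on every such sequence. Hence both bounds concern the same quantity, and nothing beyond citing the two theorems is needed.
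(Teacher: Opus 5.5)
Your proposal is correct and matches the paper's proof. You take the lower bound from Theorem~\ref{thm:conditional-lower} (the paper's constant is $2^{-27}$, valid even for policies that know $L_0$) and the upper bound from Theorem~\ref{thm:conditional-upper} via Algorithm~\ref{alg:known-scale} with $B=U_0$ (constant $10$), and the paper likewise notes that Theorem~\ref{thm:adaptive} attains the same order with constant $100$ without knowing $L_0$.
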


At \(L_0=0\), the minimax rate is
\(\Theta(\WKT)\) where \(\WKT:=\min\{T,\sqrt{KT}\}\), recovering the existing bound and removing the extra $\sqrt{\log{T}}$ factor in \citet{xu2023pareto}.  More generally, these bounds characterize the dependence of the minimax Pareto regret on \(K\), \(T\), and, for the first time, \(L_0\), an intrinsic coordinate-level quantity that may or may not be known to the learner. The horizon enters only through \(U_0=T-L_0\). When \(U_0\le K\), the rate is \(\Theta(U_0)\), corresponding to the deterministic reward cap. When \(U_0\ge K\), the rate becomes \(\Theta(\sqrt{KU_0})\), reflecting the cost of learning the best arm. For \(K\in\{2,3\}\), all upper bounds remain valid, while the matching lower bound remains open. Lastly, the role of the coordinate dimension is surprisingly limited. The rate is independent of \(D\), and a single coordinate fixed before the first round is sufficient to attain the upper bound. 

\paragraph{Extension to the full-information setting.} The same dependence on \(L_0\) also appears under full information, where the learner observes every arm's reward vector after each round. For \(K\ge4\), \(T\ge6\), \(D\ge2\), and every \(L_0\in[0,T]\), the minimax regret over reward sequences with \(L^\star=L_0\) is \(\Theta(\min\{U_0,\sqrt{U_0\log K}\})\), where \(U_0=T-L_0\). Full information replaces the \(\sqrt{K}\) dependence by \(\sqrt{\log K}\), while the dependence on \(U_0\) remains. In particular, when \(L_0=0\), the regret still grows as \(\sqrt{T}\) for fixed \(K\). Appendix~\ref{app:full-info} gives the construction and matching upper bound.

\section{Conclusions and Future Work}\label{sec:conclusion}

Multi-objective bandits have been widely used or hold the potential to be widely used across various application domains. Understanding the role of coordinates in adversarial multi-objective bandits is particularly important, yet underexplored, for optimizing performance with the right handle. Pareto dominance seems to complicate the problem, but we have also been fortunately equipped with numerous methods from classical single-objective bandits. We take steps toward this herein by studying several aspects of coordinates in the context of Pareto regret optimization. Several findings are perhaps counterintuitive or surprising at first sight, but later turn out to be quite interpretable. The most crucial role of a coordinate-level quantity is reflected in the minimum coordinate loss \(L_0\). It plays an important role in coordinate regret and, in theory, governs the best admissible coordinate regret. A surprising lower bound on Pareto regret is monotonically decreasing in \(L_0\), yielding an instance-dependency through \(L_0\). We also develop algorithms that achieve this rate uniformly over the class, proving both tightness and optimality for every \(L_0\) and over instances in the minimax sense. In contrast, the roles of the coordinate dimension (\(D\)) and coordinate choice (\(d\)) are quite limited. The bound has no dependence on \(D\) or \(d\). The consequence is that the choice of algorithm along the chosen coordinate matters and must therefore be made carefully. 

At the same time, we feel obligated to highlight that the findings are limited to these specific intrinsic quantities and do not exclude the possibility that there may be other quantities that are also related to coordinates and affect Pareto regret. We hope the work proposed herein could inspire future work toward fully understanding the role of coordinates for performances in multi-objective problems.

\FloatBarrier

\begingroup
\small
\bibliography{main}
\bibliographystyle{plainnat}
\endgroup

\clearpage
\appendix
\raggedbottom
\section{Technical appendices and supplementary material}
\label{app:overview}

The appendix is organized as follows.
\begin{enumerate}
  \item Appendix~\ref{app:related-work} reviews the closest stochastic and
  adversarial bandit results.
  \item Appendix~\ref{app:criterion} proves the identities and scalar
  comparisons in Section~\ref{sec:pre}.
  \item Appendix~\ref{app:bandit-lower} proves the bandit lower bounds in
  Section~\ref{sec:lb}, the known-coordinate result cited there, and the
  full-information extension.
  \item Appendices~\ref{app:algorithms}--\ref{app:minimax-proof} give the
  algorithms and upper-bound proofs in the order used in
  Section~\ref{sec:alg}.
\end{enumerate}

\subsection{Related work}
\label{app:related-work}

\subsubsection{Stochastic Pareto regret}

Most work on stochastic multi-objective bandits assumes that each arm has a
fixed probability distribution over reward vectors and compares arms through
their mean reward vectors. \citet{drugan2013designing} introduced Pareto UCB1,
which forms coordinatewise confidence bounds, constructs an optimistic Pareto
set, and selects an arm from that set. Its cumulative Pareto regret adds the
Pareto suboptimality gap of the selected arm on each round. This is a
round-by-round comparison with the Pareto front of the mean reward vectors.

Several extensions retain this stochastic, mean-based viewpoint.
\citet{turgay2018contextual} define contextual Pareto regret as the sum of the
distances from the selected context-arm pairs to the corresponding
context-dependent Pareto fronts. \citet{lu2019multi} study generalized linear
reward models, use confidence bounds to approximate the Pareto front, and
obtain a contextual Pareto-regret guarantee. \citet{guan2026stochastic} ask
whether stochastic Pareto regret can follow an easy objective. Under a
condition ensuring that an objective-wise maximizer is Pareto optimal, their
algorithm searches across objectives and certifies a best arm on an objective
with a large mean gap. Repeated samples from fixed reward distributions make
such certification possible.

The stochastic regret above is different from the criterion studied here. It
sums gaps defined from fixed mean reward vectors or, in contextual models,
fixed conditional mean functions. Our criterion instead makes
one comparison after \(T\) rounds between the learner's cumulative reward
vector and the cumulative reward vectors of the fixed arms. This distinction
matters under adversarial rewards because there are no fixed arm means or mean
gaps to estimate.

\subsubsection{Other multi-objective criteria}

Pareto order is one way to compare vector rewards without first choosing how
to trade one objective against another. Other work makes that tradeoff part of
the problem. \citet{busafekete2017gini} optimize the Generalized Gini Index,
which aggregates the coordinates according to a fairness-sensitive utility.
\citet{huyuk2021multi} study lexicographically ordered and satisficing
objectives. Linear and nonlinear scalarizations likewise convert each reward
vector into a scalar reward before applying a bandit algorithm
\citep{drugan2013designing}. These criteria are appropriate when the
coordinate preferences are specified, but two different preferences can rank
the same Pareto-efficient arms differently.

Multi-objective optimization also evaluates sets of solutions through
additive indicators and hypervolume. Additive indicators ask how far one set
must be shifted before it weakly covers another, while hypervolume measures
the region dominated relative to a chosen reference point
\citep{zitzler2003performance,zhang2020hypervolume}. Our criterion uses a
common additive shift, but it answers a narrower online-learning question. It
asks whether one fixed arm improves the learner's cumulative reward in every
coordinate. It does not evaluate the coverage or hypervolume of a set of
solutions.

Another branch studies Pareto-set identification rather than cumulative
regret. \citet{auer2016pareto} give confidence-based procedures for
approximately identifying the Pareto front. \citet{ararat2023vector} extend
this problem to preferences represented by a polyhedral ordering cone and
derive cone-dependent sample-complexity bounds. More recent work studies both
fixed-budget identification \citep{kone2024bandit} and fixed-confidence
identification with asymptotically optimal sample complexity for Gaussian arms
\citep{crepon2024sequential}. These methods seek to return the Pareto set, or
an approximation to it, after sampling. Our learner instead
incurs regret while acting and is compared with fixed arms over the given
horizon.

\subsubsection{Adversarial multi-objective bandits and online learning}

\citet{jiang2023multiobjective} introduce a sequence-wise Pareto
suboptimality gap in multi-objective online convex optimization. Their
sequence-wise perspective provides conceptual motivation for evaluating
the learner's cumulative vector against fixed decisions in hindsight.
Our setting uses finitely many arms and observes only the chosen arm's
reward vector. We study an explicitly nonnegative fixed-arm comparison;
its pathwise representation and bandit guarantees are established here,
rather than inferred from online-convex-optimization guarantees.

The closest bandit work is \citet{xu2023pareto}. They define Pareto regret
through an equal-shift comparison between the learner's cumulative reward
vector and an arm-based Pareto front, and relate it to scalar regret on a
coordinate chosen before play. Their MO-KS algorithm runs UCB or EXP3.P
on one fixed coordinate when it is told whether rewards are stochastic or
adversarial. Their MO-US algorithm modifies EXP3++ to avoid requiring
this information. They also give worst-case lower bounds, including an
adversarial lower bound based on repeated coordinates.

Our criterion admits a uniform-shift representation using distinct
nondominated cumulative arm vectors and an infimum. Proposition~\ref{prop:xu-relation}
relates it precisely to the incomparability-based criterion, including
cases involving ties or an unattained minimum. For the nonnegative
criterion studied here, scalar comparisons require the positive part of
coordinate regret, not just its signed expectation.

Our lower bounds concern reward sequences with prescribed minimum
coordinate loss $L^\star=L_0$. The worst-case order persists at $L_0=0$,
even though some arm has zero loss in one coordinate, and the complete
rate depends on $K$ and $U_0=T-L_0$.
Corollary~\ref{cor:profile-lower} further preserves this rate for an
explicit family of prescribed unordered coordinate-loss profiles.
These results distinguish knowledge of coordinate-loss values from
knowledge of which coordinates attain them.
\par

\subsubsection{Scalar adversarial bandits}

The standard adversarial bandit benchmark is regret against the best fixed arm
in hindsight. EXP3 and EXP3.P give expected and high-probability guarantees at
the usual worst-case scale \citep{auer2002nonstochastic}. A separate line of
work obtains first-order bounds that become smaller when the best arm's
cumulative loss is small
\citep{allenberg2006hannan,neu2015first,lykouris2018small,wei2018more}.
Lower bounds match this dependence up to logarithmic factors in the
applicable small-loss regime \citep{gerchinovitz2016refined}. These results motivate our
question of whether a small loss in one unknown coordinate also reduces
Pareto regret.

The upper bounds in this paper also use scalar guarantees that depend on the
best fixed arm's cumulative reward. Poly-INF provides the bound used for our
matching result \citep{audibert2010regret}. We additionally analyze EXP3.P,
EXP3-IX \citep{neu2015explore}, Tsallis-INF
\citep{zimmert2021tsallis}, EXP3++ \citep{seldin2017improved}, and the scalar
algorithms underlying MO-KS and MO-US. This comparison requires some care.
A bound on signed expected scalar regret does not by itself bound the expected
positive part of scalar regret, which controls our Pareto regret. We therefore
state whether each result uses the published algorithm unchanged, uses
parameters determined by \(U_0\), or changes the reward estimator. The modified EXP3
estimator and the Poly-INF policy are existing scalar constructions; our
contribution is their analysis under the Pareto criterion and the matching
reward-doubling procedure when \(L_0\) is unknown.

With full-information feedback, the scalar comparison becomes the classical
prediction-with-expert-advice problem, whose worst-case arm dependence is
\(\sqrt{\log K}\) rather than \(\sqrt K\)
\citep{cesabianchi2006prediction}. Our full-information result shows that this
stronger feedback changes the dependence on the number of arms but does not
make an unidentified zero-loss coordinate sufficient for bounded regret.

\FloatBarrier
\subsection{Problem-formulation properties}
\label{app:criterion}

We prove the formula defining \(\Rfs\), its relation to the original
distance, and the comparisons with scalar regret, in that order.

\subsubsection{The formula for Pareto regret}

\begin{proof}[Proof of Proposition~\ref{prop:pareto-representation}]
For a fixed cumulative reward vector \(\sigma\), put
\(m_\sigma=\min_d(\sigma_d-A_{T,d})\), and let
\(M=\max_{\sigma\in O'}m_\sigma\), which exists because \(O'\) is finite
and nonempty. If \(0\le\epsilon<M\), choose \(\sigma\in O'\) with
\(m_\sigma=M\). Then
\(\sigma_d>A_{T,d}+\epsilon\) for every \(d\), so this value of
\(\epsilon\) is not feasible in the infimum. If
\(\epsilon>\max\{0,M\}\), then for every \(\sigma\in O'\) there is a
coordinate \(d\) with
\(\sigma_d-A_{T,d}\le M<\epsilon\). Hence \(\sigma\) does not weakly
dominate \(A_T+\epsilon\one\), and \(\epsilon\) is feasible. These two
implications show that the infimum is \(\max\{0,M\}=\pos M\); whether the
boundary itself is feasible does not change its value.

It remains to replace the maximum over \(O'\) by a maximum over arms. For
every arm \(i\), either \(G_i\in O'\), or repeatedly following a dominating
arm reaches some \(\sigma\in O'\) with \(\sigma\succeq G_i\), because there
are only finitely many distinct arm vectors. Therefore
\[
 \min_d(G_{i,d}-A_{T,d})
 \le \min_d(\sigma_d-A_{T,d}).
\]
The reverse inequality between the two maxima holds because every vector in
\(O'\) is an arm vector. Thus
\(M=\max_i\min_d(G_{i,d}-A_{T,d})\), proving the identity on every
outcome, including those for which the value is zero.
\end{proof}

\subsubsection{Relation to the original distance}
\label{app:xu-relation}

%

We compare \eqref{eq:pareto-regret} with the uniform-shift
incomparability criterion of \citet{xu2023pareto}. Write their front as
\[
 F_X:=\{G_i:i\in[K],\ \nexists j\ne i\text{ with }G_j\succeq G_i\}.
\]
This definition excludes an arm whenever another arm weakly dominates
it, including when the two arms have the same cumulative reward vector.
For example, two arms with cumulative vector $(1,1)$ are both excluded,
leaving $F_X$ empty. Our vector front $O'$ keeps one copy of this vector
and is always nonempty.

When $A_T\preceq\sigma$ for every $\sigma\in F_X$, their criterion uses
the smallest $\epsilon\ge0$ for which
$A_T+\epsilon\one\,\|\,\sigma$ for every $\sigma\in F_X$.
Strict incomparability can leave this minimum undefined. With
$A_T=(0,0)$ and $F_X=\{(1,1)\}$, no shift is feasible. Replacing the
front by $F_X=\{(1,2)\}$ gives the open interval $(1,2)$, which has
infimum one but no smallest element.
For nonempty $F_X$, define
\begin{equation}
 a_-:=\max_{\sigma\in F_X}\min_d(\sigma_d-A_{T,d}),
 \qquad
 a_+:=\min_{\sigma\in F_X}\max_d(\sigma_d-A_{T,d}).
 \label{eq:xu-endpoints}
\end{equation}

\begin{proposition}[Comparison]
\label{prop:xu-relation}
Suppose $F_X\ne\varnothing$ and $A_T\preceq\sigma$ for every
$\sigma\in F_X$. The feasible shifts in the incomparability condition
of \citet{xu2023pareto} form $(a_-,a_+)$. If this interval is nonempty,
its infimum is $a_-\le\Rfs$, and its minimum is not attained.
If the cumulative arm vectors are distinct, then $F_X=O'$ and the
infimum equals $\Rfs$. Proposition~\ref{prop:pareto-representation}
continues to apply when $F_X$ or the feasible interval is empty.
\end{proposition}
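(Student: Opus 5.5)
The plan is to characterize the feasible set of \citet{xu2023pareto} coordinatewise, separately for each \(\sigma\in F_X\). Then I intersect over \(\sigma\), and finally compare the resulting endpoints with Proposition~\ref{prop:pareto-representation}.

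\emph{Feasible shifts.} Fix \(\sigma\in F_X\) and write \(m_\sigma=\min_d(\sigma_d-A_{T,d})\) and \(M_\sigma=\max_d(\sigma_d-A_{T,d})\). The relation \(A_T+\epsilon\one\,\|\,\sigma\) requires a coordinate \(d\) with \(\sigma_d>A_{T,d}+\epsilon\), which is equivalent to \(\epsilon<M_\sigma\). It also requires a coordinate \(d'\) with \(\sigma_{d'}<A_{T,d'}+\epsilon\), which is equivalent to \(\epsilon>m_\sigma\). So the shifts that work for this \(\sigma\) form exactly the open interval \((m_\sigma,M_\sigma)\).

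\emph{Intersection over the front.} Since \(F_X\) is finite and nonempty, intersecting over \(\sigma\in F_X\) gives \((\max_\sigma m_\sigma,\min_\sigma M_\sigma)=(a_-,a_+)\). The constraint \(\epsilon\ge0\) is automatic. Indeed, the hypothesis \(A_T\preceq\sigma\) gives \(m_\sigma\ge0\), hence \(a_-\ge0\), so every point of the interval is nonnegative. An open interval has no smallest element, so when \((a_-,a_+)\) is nonempty the minimum is not attained and the infimum is \(a_-\).

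\emph{Comparison with \(\Rfs\).} First I check that \(F_X\subseteq O'\). Every element of \(F_X\) is an arm vector \(G_i\) that no \(G_j\) with \(j\ne i\) weakly dominates. In particular no \(G_j\) strictly dominates it, and \(G_i\) cannot strictly dominate itself, so \(G_i\in O'\). Hence \(a_-\le\max_{\sigma\in O'}m_\sigma=:M\). By Proposition~\ref{prop:pareto-representation}, \(\Rfs=\pos M\ge M\ge a_-\).

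\emph{Distinct arm vectors.} If the arm vectors are distinct, then for \(j\ne i\) the relation \(G_j\succeq G_i\) is equivalent to \(G_j\succ G_i\). So the defining conditions of \(F_X\) and \(O'\) coincide, and \(F_X=O'\). Then \(a_-=M\), and since \(a_-\ge0\) we get \(M=\pos M=\Rfs\).

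\emph{Last claim.} The final sentence needs no new argument. The proof of Proposition~\ref{prop:pareto-representation} uses only \(O'\), which is always nonempty, and never uses \(F_X\) or the interval.

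The only delicate points are bookkeeping. One is the strict versus non-strict inequalities, which make the interval open. The other is that the nonnegativity hypothesis is what guarantees \(a_-\ge0\), which lets us drop the positive-part bracket in the distinct case. I expect no real obstacle.
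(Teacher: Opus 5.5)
Your proof is correct and follows the paper's argument step for step. You take the per-$\sigma$ feasible set $(m_\sigma,M_\sigma)$, intersect it to get $(a_-,a_+)$ with $a_-\ge0$ from the hypothesis $A_T\preceq\sigma$, combine $F_X\subseteq O'$ with Proposition~\ref{prop:pareto-representation} to get $a_-\le\Rfs$, and use $F_X=O'$ in the distinct case; you justify the inclusion $F_X\subseteq O'$ and the distinct-case equality a little more explicitly than the paper does.
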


\begin{proof}[Proof of Proposition~\ref{prop:xu-relation}]
A vector $A_T+\epsilon\one$ is incomparable with $\sigma$ precisely
when at least one of its coordinates is strictly smaller and another is
strictly larger. Equivalently,
\[
 \min_d(\sigma_d-A_{T,d})<\epsilon
 <\max_d(\sigma_d-A_{T,d}).
\]
Intersecting these open intervals over $F_X$ gives $(a_-,a_+)$. Under
the proposition's assumptions, $a_-\ge0$, so requiring $\epsilon\ge0$
leaves this interval unchanged. If the interval is nonempty, its infimum
is $a_-$ and it has no minimum.

Every vector retained in $F_X$ belongs to $O'$, although a vector shared
by tied arms can be absent from $F_X$. Hence $F_X\subseteq O'$ and
\[
 a_-\le\max_{\sigma\in O'}\min_d(\sigma_d-A_{T,d}).
\]
Together with $a_-\ge0$ and
Proposition~\ref{prop:pareto-representation}, this proves $a_-\le\Rfs$.
If all cumulative arm vectors are distinct, both fronts contain the same
vectors, giving $a_-=\Rfs$.
\end{proof}
\par

\subsubsection{Scalar comparisons}

\begin{proof}[Proof of Lemma~\ref{lem:scalarization}]
For \(x_{i,d}=G_{i,d}-A_{T,d}\) and any \(w\in\Delta_D\),
\[
 \max_i\min_d x_{i,d}
 \le\max_i\sum_dw_dx_{i,d}=R_T^w.
\]
Taking positive parts proves the weighted comparison. Taking \(w\)
supported on coordinate \(d'\) gives the coordinate comparison.
Finally, all learner rewards are
nonnegative, and \(G_{i,d}\le U^\star\) for every \(i,d\). Hence
\(\Rfs\le U^\star\) and \(\max_i\sum_dw_dG_{i,d}\le U^\star\).

When a scalar algorithm is used later, its weights are fixed before play.
Its scalar observations then lie in \([0,1]\), use only the selected arm's
reward vector, and form a reward sequence fixed before play. The same proof
applies after conditioning on weights sampled independently before play.
\end{proof}

\FloatBarrier
\subsection{Lower-bound analysis}
\label{app:bandit-lower}

We first establish a scalar bandit estimate, then use a positive-probability
version of it to compare the algorithm with one arm in both coordinates.

\subsubsection{A scalar bandit estimate}

\begin{lemma}[Hard bandit suffix]
\label{lem:hard-suffix-app}
Let \(m\ge1\) and \(q\ge2\) be integers, and put
\[
 \alpha:=\frac18\min\!\left\{1,\sqrt{q/m}\right\},
\]
the advantage in mean reward that the hidden candidate will carry.
Consider \(m\) bandit-feedback rounds with \(q\) candidate actions and any
number of additional actions.  The causal learner may begin with an
arbitrary random state \(Z\), which models whatever it inherits from
earlier rounds.  Draw \(I\sim\mathrm{Unif}[q]\) independently of \(Z\).
For each \(i\in[q]\), let \(P_i\) denote the probability distribution of the
complete interaction conditional on \(I=i\). Let \(P_0\) denote the
corresponding null distribution, in which no candidate is advantaged. Assume
the following.
\begin{enumerate}
  \item The distribution of \(Z\) is the same under
  \(P_0,P_1,\ldots,P_q\).
  Under \(P_i\), candidate rewards are independent of \(Z\), mutually
  independent across candidates and rounds, candidate \(i\) has Bernoulli
  mean \(1/2+\alpha\), and every other candidate is fair, that is,
  Bernoulli with mean \(1/2\).  Under \(P_0\), every candidate is fair.
  \item Under each fixed distribution \(P_i\) or \(P_0\), if
  \(\mathcal F_{t-1}\) is the complete history before suffix round
  \(t\), including \(Z\), then the current candidate-reward vector is
  independent of \(\mathcal F_{t-1}\) and of the current action, with the
  distribution stated in item~1. Each additional action has conditional
  expected scalar reward at most \(1/2\), given \(\mathcal F_{t-1}\) and
  its selection.
  \item Feedback is nonanticipating and contains no unselected current or
  future candidate reward.  For each fixed \(i\), the conditional
  distributions of the observation given the history and selected action
  coincide under \(P_i\) and \(P_0\) when the selected action is not \(i\).
  When action \(i\) is selected, the observation contains its scalar
  Bernoulli reward and, conditional on that reward, all remaining feedback
  has the same conditional distribution under \(P_i\) and \(P_0\).
\end{enumerate}
Under these assumptions, a round on which \(i\) is selected contributes
exactly its Bernoulli likelihood ratio to the ratio between \(P_i\) and
\(P_0\), and the arbitrary inherited state contributes no likelihood ratio.

If \(S\) is the learner's cumulative suffix reward and \(C_i\) is candidate
\(i\)'s cumulative suffix reward, then every causal learner satisfies
\begin{equation}
 \E\!\left[\max_{i\in[q]}C_i-S\right]
 \ge\frac1{32}\min\{m,\sqrt{qm}\},
 \label{eq:hard-suffix-app}
\end{equation}
where the expectation is over the hidden arm, the Bernoulli rewards, the
remaining feedback, and the learner's randomization.
\end{lemma}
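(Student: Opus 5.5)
Our plan is to run the standard change-of-measure argument for the adversarial bandit lower bound (in the style of Auer et al.), adapted to the inherited state \(Z\), to the additional actions, and to feedback richer than the scalar reward. We split into two regimes. If \(m\le q\), then \(\alpha=1/8\). If \(m>q\), then \(\alpha=\tfrac18\sqrt{q/m}\).

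\emph{Step 1 (reduce to pull counts).} Let \(N_i\) be the number of suffix rounds in which candidate \(i\) is selected. By item 2, a selected candidate \(j\ne i\) has conditional mean \(1/2\) under \(P_i\), and any additional action has conditional mean at most \(1/2\). The advantaged arm has mean \(1/2+\alpha\). Since \(\max_jC_j\ge C_i\), this gives
\[
 \E_i\Bigl[\max_jC_j-S\Bigr]\ge \E_i[C_i-S]\ge \alpha\,(m-\E_i[N_i]).
\]
Averaging over \(I\), it suffices to show that \(\frac1q\sum_i\E_i[N_i]\le m/2\) in the relevant regime, or to produce a similar constant-fraction bound.

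\emph{Step 2 (change of measure).} Since \(N_i\in[0,m]\), Pinsker's inequality gives
\[
 \E_i[N_i]\le\E_0[N_i]+m\sqrt{\tfrac12\KL(P_0\|P_i)}.
\]
Item 3 is what makes the chain rule for KL work. The state \(Z\) has the same law under \(P_0\) and \(P_i\), so it contributes nothing. Rounds in which \(i\) is not selected also contribute nothing. A round in which \(i\) is selected contributes exactly \(\KL(\mathrm{Ber}(1/2)\|\mathrm{Ber}(1/2+\alpha))\le c\alpha^2\), with \(c\) an absolute constant (about \(16/3\) for \(\alpha\le1/8\)). Hence \(\KL(P_0\|P_i)\le c\alpha^2\E_0[N_i]\). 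Summing over \(i\), and using \(\sum_i\E_0[N_i]\le m\) together with Cauchy–Schwarz, we get
\[
 \frac1q\sum_i\E_i[N_i]\le\frac mq+m\alpha\sqrt{\tfrac{c\,m}{2q}}.
\]

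\emph{Step 3 (plug in).}
\begin{itemize}
 \item When \(m\ge q\): here \(\alpha\sqrt{m/q}=1/8\) and \(m/q\le m/2\) because \(q\ge2\). The second term is at most \(m\sqrt{c/2}/8\), which is below \(m/4\) once \(c\) is checked. The resulting regret bound is \(\alpha\cdot m/4=\tfrac1{32}\sqrt{qm}\).
 \item When \(m<q\): here \(\alpha=1/8\) and \(\sqrt{m/q}<1\). The bound \(m/q\le m/2\) still holds. For the second term we need \(\alpha\sqrt{cm/(2q)}\) to be at most a small constant. If it fails near \(m\approx q\), we fall back on the crude bound \(\sum_i\E_0[N_i]\le m\) in a form that gives \(\E[\max C-S]\ge m/32\).
\end{itemize}

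We expect the main obstacle to be the constants: the target \(1/32\) is tight for this argument. The slack in Step 1 may be needed, for instance using \(\E[\max_jC_j]\ge\E[C_I]\) plus a small fluctuation gain. It may also be necessary to use the sharper KL bound \(\KL\le 2\alpha^2/(1/4-\alpha^2)\) rather than a loose \(c\).

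A second, more delicate point is making the likelihood-ratio factorization rigorous. The feedback contains arbitrary extra components, such as the other coordinate's reward or the inherited \(Z\). Item 3 must be used to show that the Radon–Nikodym derivative \(dP_i/dP_0\) is the product of Bernoulli ratios over the rounds in which \(i\) is pulled. We would verify this by induction on the rounds with the disintegration of each round's observation.
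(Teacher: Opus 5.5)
Your proposal is correct and is essentially the paper's proof: reduce to $\alpha(m-\E_i[N_i])$, bound $\E_i[N_i]$ by Pinsker with $\KL(P_0\Vert P_i)=\E_0[N_i]\operatorname{kl}(1/2\Vert 1/2+\alpha)$, average with Cauchy--Schwarz, and plug in constants (the paper uses $\operatorname{kl}\le\frac{32}{15}\alpha^2$). The $m<q$ case needs no fallback, because there $\alpha\sqrt{m/q}<1/8$, so $\frac1q\sum_i\E_i[N_i]\le m\bigl(\frac12+\frac18\sqrt{c/2}\bigr)\le\frac{3m}{4}$ holds in both regimes for any valid $c\le 8$ (your $c=16/3$ qualifies, and no fluctuation gain is needed); however, do not switch to your ``sharper'' bound $2\alpha^2/(1/4-\alpha^2)$, which is four times the paper's bound $2\alpha^2/(1-4\alpha^2)$ and gives $c=128/15>8$ at $\alpha=1/8$, so the argument would then fail.
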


\begin{proof}
The proof compares every alternative \(P_i\) with the null distribution \(P_0\),
uses relative entropy and Pinsker's inequality, which bounds the total
variation distance between two probability distributions by the square root
of half their relative entropy, to limit how often the hidden arm can be
sampled, and charges \(\alpha\) expected reward on every missed pull.

Write \(N_i\) for the number of pulls of candidate \(i\) and
\(n_i=\E_0[N_i]\).  By the feedback-kernel assumption, item~3, a round
contributes nothing to the relative entropy between \(P_0\) and \(P_i\)
unless \(i\) is selected, and when \(i\) is selected it contributes the
divergence \(d_0\) between a fair coin and a coin with mean \(1/2+\alpha\).
The chain rule for relative entropy, which sums these conditional
contributions round by round, therefore gives
\begin{align}
 \KL(P_0\Vert P_i)
 &=n_i d_0,\nonumber\\
 d_0
 &:=\operatorname{kl}(1/2\Vert1/2+\alpha)
 =-\frac12\log(1-4\alpha^2)
 \le\frac{32}{15}\alpha^2.\nonumber
\end{align}
For the last inequality, put \(x=4\alpha^2\le1/16\) and use
\(-\log(1-x)\le x/(1-x)\), which gives
\(d_0\le2\alpha^2/(1-4\alpha^2)\le(32/15)\alpha^2\).

Since \(0\le N_i\le m\), replacing \(P_0\) by \(P_i\) changes the
expectation of \(N_i\) by at most \(m\) times their total variation distance,
and Pinsker's inequality bounds that distance by
\(\sqrt{\KL(P_0\Vert P_i)/2}\), so
\begin{align*}
 \E_i[N_i]
 &\le \E_0[N_i]+m\,\lVert P_i-P_0\rVert_{\mathrm{TV}}\\
 &\le n_i+m\sqrt{\frac{\KL(P_0\Vert P_i)}2}
 =n_i+m\sqrt{\frac{d_0n_i}{2}}.
\end{align*}
Because \(\sum_i n_i\le m\), averaging over \(i\) and applying
Cauchy--Schwarz in the form \(\sum_i\sqrt{n_i}\le\sqrt{q\sum_in_i}\) yield
\begin{align}
 \frac1q\sum_{i=1}^q\E_i[N_i]
 &\le\frac mq+\frac mq\sqrt{\frac{d_0}{2}}
       \sum_{i=1}^q\sqrt{n_i}\nonumber\\
 &\le\frac mq+m\sqrt{\frac{d_0m}{2q}}
 \le\frac mq+\frac{m}{2\sqrt{15}}.
 \label{eq:average-count}
\end{align}
For the last step, if \(q\le m\), then
\[
 \sqrt{\frac{d_0m}{2q}}
 \le\sqrt{\frac{16\alpha^2m}{15q}}
 =\frac1{2\sqrt{15}}.
\]
If \(q>m\), then \(\alpha=1/8\), and the same left-hand side is at most
\(\sqrt{m/(60q)}<1/(2\sqrt{15})\).

Under \(P_i\), a round on which \(i\) is not pulled gives the learner a
reward of conditional expectation at most \(1/2\), by items 1 and~2, while
the unobserved reward of candidate \(i\) is independent with mean
\(1/2+\alpha\).  On a round when \(i\) is pulled, the learner and the
comparator collect the same realized reward.  Consequently
\[
 \E_i[C_i-S]\ge\alpha(m-\E_i[N_i]).
\]
Averaging, using \eqref{eq:average-count}, and noting \(q\ge2\),
\begin{align}
 \frac1q\sum_i\E_i[C_i-S]
 &\ge\alpha m
 \left(1-\frac1q-\frac1{2\sqrt{15}}\right)\nonumber\\
 &\ge\frac1{32}\min\{m,\sqrt{qm}\}.\nonumber
\end{align}
Indeed, the parenthesized factor is at least
\(1/2-1/(2\sqrt{15})>1/4\), while
\(\alpha m=\min\{m,\sqrt{qm}\}/8\).
Finally \(\max_iC_i-S\ge C_I-S\) on every play path, and the left side of
the previous display is \(\E[C_I-S]\) because \(I\) is uniform.  This
proves \eqref{eq:hard-suffix-app}.
\end{proof}

\begin{lemma}[A positive scalar shortfall with fixed probability]
\label{lem:bandit-positive-tail}
In Lemma~\ref{lem:hard-suffix-app}, suppose that every additional action
has deterministic scalar reward \(1/2\). Put
\(V=\min\{m,\sqrt{qm}\}\) and \(Z=C_I-S\). Then, for every inherited
state independent of the suffix experiment,
\[
 \E[Z]\ge V/32,\qquad \E[Z^2]\le3V^2,\qquad
 \Prob\{Z\ge V/64\}\ge1/12288.
\]
\end{lemma}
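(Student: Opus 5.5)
The plan is a Paley--Zygmund argument: the first-moment bound comes from Lemma~\ref{lem:hard-suffix-app}, and the second-moment bound comes from the pathwise structure once the additional actions pay exactly \(1/2\).

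\emph{First moment.} Lemma~\ref{lem:hard-suffix-app} gives \(\E[\max_i C_i-S]\ge V/32\). Its proof actually establishes the stronger statement \(\frac1q\sum_i\E_i[C_i-S]\ge V/32\), which is exactly \(\E[C_I-S]\ge V/32\) because \(I\) is uniform. Since the inherited state \(Z\) is independent and carries no likelihood ratio, the same computation holds conditionally on each inherited state, so \(\E[Z]\ge V/32\) in that conditional sense as well.

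\emph{Second moment.} I would decompose
\[
Z=\sum_{t}\bigl(Y_{I,t}-r_t\bigr)\one\{a_t\ne I\},
\]
where \(r_t\) is the learner's reward on round \(t\); rounds on which \(I\) is pulled contribute zero. Write each summand as \(\xi_t+\mu_t\), where \(\mu_t=\E[Y_{I,t}-r_t\mid\mathcal F_{t-1},a_t]\one\{a_t\ne I\}\) and \(\xi_t\) is the centered remainder. On rounds with \(a_t\ne I\):
\begin{itemize}
\item if \(a_t\) is a fair candidate, \(\mu_t=\alpha\);
\item if \(a_t\) is an additional action, \(\mu_t=\alpha\), since its reward is deterministically \(1/2\).
\end{itemize}
Hence \(0\le\sum_t\mu_t\le\alpha m\le V/8\), because \(\alpha m=V/8\).

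The \(\xi_t\) form a martingale difference sequence bounded by \(1\) in absolute value. The current candidate rewards are independent of the past and of the action, so \(\E[\xi_t^2\mid\cdot]\le 1/2\), and therefore \(\E[(\sum\xi_t)^2]\le m/2\). Combining,
\[
\E[Z^2]\le 2\,\E\Bigl[\Bigl(\sum\xi_t\Bigr)^2\Bigr]+2\Bigl(\frac V8\Bigr)^2\le m+\frac{V^2}{32}.
\]
One issue: \(m\) is not always at most \(V^2\). When \(q\ge m\) we have \(V=m\) and \(m\le m^2=V^2\). When \(q<m\) we have \(V^2=qm\ge m\). So \(m\le V^2\) in every case, and \(\E[Z^2]\le 3V^2\) follows with room to spare.

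\emph{Tail bound.} I would apply Paley--Zygmund to the nonnegative variable \(Z_+\). First, \(\E[Z_+]\ge\E[Z]\ge V/32\) and \(\E[Z_+^2]\le\E[Z^2]\le 3V^2\). Then, with \(\theta=1/2\),
\[
\Prob\{Z\ge V/64\}\ge\Prob\Bigl\{Z_+\ge \tfrac12\E[Z_+]\Bigr\}\ge\frac14\cdot\frac{\E[Z_+]^2}{\E[Z_+^2]}\ge\frac14\cdot\frac{(V/32)^2}{3V^2}=\frac1{12288}.
\]
This produces exactly the constant in the statement. One subtlety: \(\E[Z_+]\) may exceed \(V/32\), so the threshold \(\frac12\E[Z_+]\) may be larger than \(V/64\). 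This only helps, because then \(\{Z_+\ge\frac12\E[Z_+]\}\subseteq\{Z\ge V/64\}\) and the inequality chain goes in the right direction.

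\emph{Main obstacle.} The delicate step is the second moment. The cross term between the martingale part and the drift must be handled by Cauchy--Schwarz, \((a+b)^2\le 2a^2+2b^2\), rather than by an exact expansion. The bound must also hold conditionally on every inherited state, since the drift depends on the learner's actions. Both points hold because the bounds \(|\xi_t|\le1\) and \(0\le\mu_t\le\alpha\) are pathwise.
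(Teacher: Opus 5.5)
Your proof is correct and follows the paper's route. The first moment comes from the averaged bound inside the proof of Lemma~\ref{lem:hard-suffix-app}, and the second moment comes from the same drift-plus-martingale decomposition $Z=\alpha(m-N_I)+M$; your direct conditional-variance computation ($\le 1/2$) is slightly sharper than the paper's bound ($\le 1$). Your Paley--Zygmund step is the paper's split of $\E[\pos{Z}]$ at $V/64$ with Cauchy--Schwarz, and it gives the same constant $1/12288$. One harmless slip: the centered increments satisfy only $|\xi_t|\le 1+\alpha$, not $|\xi_t|\le 1$, but you never use that bound because you compute the variance directly.
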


\begin{proof}
The proof of Lemma~\ref{lem:hard-suffix-app} gives the stated expectation
for the hidden arm \(I\) itself. Let
\(X_t\) be candidate \(I\)'s reward minus the learner's reward on suffix
round \(t\). Conditional on \(I\) and the history before that round,
\(\E[X_t]=\alpha\one\{a_t\ne I\}\). Therefore
\[
 M:=\sum_{t=1}^m
 \bigl(X_t-\alpha\one\{a_t\ne I\}\bigr)
\]
is a martingale with \(M_0=0\). Each difference \(X_t\) lies in
\([-1,1]\), so each martingale increment has conditional variance at most
one. Martingale orthogonality then gives \(\E[M^2]\le m\), and
\[
 Z=\alpha(m-N_I)+M,\qquad \E[M^2]\le m.
\]
In both cases \(q\le m\) and \(q>m\), the definition of \(\alpha\) gives
\(\alpha m=V/8\). Also \(V^2\ge m\) because \(q\ge2\), and
\(0\le N_I\le m\). Hence
\[
 \E[Z^2]\le2\alpha^2m^2+2\E[M^2]
 \le V^2/32+2m\le3V^2.
\]
Writing \(p=\Prob\{Z\ge V/64\}\), split the positive part of \(Z\) at
\(V/64\) and apply Cauchy--Schwarz on the upper event. This gives
\[
 V/32\le\E[Z]\le\E[\pos{Z}]\le V/64+\sqrt{\E[Z^2],p}
 \le V/64+\sqrt3\,V\sqrt p.
\]
Subtracting \(V/64\), dividing by \(\sqrt3V\), and squaring proves
\(p\ge1/(64^2\cdot3)=1/12288\). All calculations hold for an
arbitrary inherited state, so they also hold conditional on any
prefix history independent of the suffix experiment.
\end{proof}

\subsubsection{Proof of Theorem~\ref{thm:zero-lower}}

\begin{proof}[Proof of Theorem~\ref{thm:zero-lower}]
We prove a slightly stronger statement covering every integer \(T\ge1\).
Let \(q=\lfloor K/2\rfloor\ge2\) and \(W=\min\{T,\sqrt{qT}\}\).

First suppose \(W\ge512\), and set
\[
 h=\lfloor W/256\rfloor,\qquad n=T-h,\qquad
 \alpha=\tfrac18\min\{1,\sqrt{q/n}\}.
\]
Divide \(2q\) arms into groups \(A_1,\ldots,A_q\) and
\(B_1,\ldots,B_q\). If \(K\) is odd, there is one additional arm \(E\).
For the first \(h\) rounds their rewards are respectively
\((1,0),(0,1),(0,0)\).

Before play, choose a case \(w\in\{A,B\}\) uniformly and an independent
index \(I\in[q]\) uniformly. Generate independent Bernoulli variables
\(Y_{i,t}\) for the remaining \(n\) rounds. Their means are \(1/2+\alpha\)
when \(i=I\) and \(1/2\) otherwise. The later rewards are
\[
\begin{array}{c|ccc}
 &A_i&B_i&E\\ \hline
 w=A &(1,Y_{i,t})&(0,1/2)&(0,1/2)\\
 w=B &(1/2,0)&(Y_{i,t},1)&(1/2,0).
\end{array}
\]
Omit column \(E\) when \(K=2q\). All rewards are chosen before the
algorithm acts. In case \(A\), every \(A_i\) receives reward one in
coordinate 1 throughout; in case \(B\), every \(B_i\) does so in
coordinate 2. Thus \(L^\star=0\) for every reward sequence in this
finite distribution.

The algorithm cannot distinguish the cases during the first \(h\)
rounds. Condition on its history through those rounds, and let
\(X_A,X_B,X_E\) be its group pull counts. They sum to \(h\).
Its initial deficits in the two possible zero-loss coordinates are
\[
 d_A=h-X_A,\qquad d_B=h-X_B,\qquad d_A+d_B=h+X_E\ge h.
\]
The case is still uniform and independent of this history. Therefore
\(d_w\ge h/2\) with conditional probability at least \(1/2\).

We may now reveal the case to the algorithm, which only makes the
problem easier. In case \(A\), use coordinate 2 as the scalar suffix
problem, with the \(A_i\) as candidates. In case \(B\), use coordinate
1, with the \(B_i\) as candidates. The hidden index is independent of
the inherited history. The other observed coordinate is deterministic
given the group and case, so supplies no extra information about that
index. Every noncandidate has scalar reward \(1/2\).
Lemma~\ref{lem:bandit-positive-tail} applies. If \(C_I\) is the hidden
arm's scalar reward and \(S\) the algorithm's scalar reward in these
\(n\) rounds, then, with \(V=\min\{n,\sqrt{qn}\}\),
\[
 \Prob\{C_I-S\ge V/64\mid\text{prefix history},w\}\ge1/12288.
\]

We next verify the four bounds on the chosen lengths. Since \(W\ge512\),
we have \(W/256\ge2\), and hence
\(h=\lfloor W/256\rfloor\ge W/512\). Also
\(h\le W/256\le T/256\), so \(n=T-h\ge255T/256\). Therefore
\[
 n\ge\frac{255}{256}T,\qquad
 \sqrt{qn}\ge\sqrt{\frac{255}{256}}\sqrt{qT}
 \ge\frac{255}{256}\sqrt{qT}.
\]
Taking the minimum of the two left-hand terms gives
\(V\ge(255/256)W\). Finally,
\[
 \frac V{64}\ge\frac{255W}{16384}
 \ge\frac{2W}{256}\ge2h.
\]

Conditional on every prefix history and the selected case, the suffix event
has probability at least \(1/12288\). The prefix event has probability at
least \(1/2\), so their intersection has probability at least
\(1/24576\). On this intersection,
the same hidden arm has cumulative advantage at least \(h/2\) in its
zero-loss coordinate. That advantage cannot decrease after the prefix,
because the arm then receives one on every round. In the other
coordinate, its prefix reward is zero and the algorithm's prefix reward
is at most \(h\), so its final advantage is at least
\(C_I-S-h\ge h\). Consequently \(\Rfs\ge h/2\) on this event, and
\[
 \E[\Rfs]\ge\frac{h}{49152}\ge\frac{W}{25165824}.
\]

For \(W<512\), use a simpler finite distribution. Choose an arm uniformly
before play, give it reward \((1,1)\) on round 1 and all other arms
\((0,0)\), and give every arm \((1,1)\) on all later rounds. The
easiest-coordinate loss is zero. The regret is one if the algorithm
misses the hidden arm on the first round, so its expectation is
\(1-1/K\ge3/4\).

For \(K\ge4\), \(q=\lfloor K/2\rfloor\ge K/4\), and therefore
\[
 W=\min\{T,\sqrt{qT}\}
 \ge\frac12\min\{T,\sqrt{KT}\}.
\]
The large-\(W\) calculation gives
\[
 \E[\Rfs]\ge\frac{W}{25165824}
 \ge\frac{\min\{T,\sqrt{KT}\}}{50331648}
 \ge2^{-26}\min\{T,\sqrt{KT}\}.
\]
The small-\(W\) construction also
satisfies this bound because \(W<512\) implies
\(\min\{T,\sqrt{KT}\}<1024\), while
\(3/4>2^{-26}\cdot1024\).
For \(D>2\), duplicate coordinate 1. This changes neither the regret
nor the information available under bandit feedback.

Finally fix any randomized algorithm, including one told \(L^\star=0\).
The average expected regret over our finite distribution has the stated
lower bound. At least one deterministic reward sequence in its support
therefore has at least that much expected regret, with expectation only
over the algorithm's randomness. The support consists entirely of
oblivious zero-loss sequences. This gives the claimed quantifiers.
The factor lost when the construction is shortened below changes this
constant from \(2^{-26}\) to \(2^{-27}\).
\end{proof}

\subsubsection{When the zero-loss coordinate is known}

The lower bound hides which coordinate has zero loss. The next result shows
why this uncertainty matters.

\begin{lemma}[A known zero-loss coordinate]
\label{lem:known-zero-coordinate}
Let \(K,T,D\ge1\). Suppose a coordinate \(d\) with \(L_d=0\) is
specified before play, but the arm attaining that loss is unknown. Start
with all arms available. On each round, play the first available arm in a
fixed ordering, and remove it if its observed reward in coordinate \(d\) is
less than one. Then
\[
 \Rfs\le\min\{T,K-1\}.
\]
When \(D=1\) and \(L^\star=0\), the same conclusion holds without any
additional information.
\end{lemma}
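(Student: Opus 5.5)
The argument reduces Pareto regret to the positive part of coordinate-\(d\) regret via Lemma~\ref{lem:fixed-coordinate-comparison}, \(\Rfs\le\pos{R_T^{d}}\), together with the trivial bound \(\Rfs\le U^\star\le T\). The \(T\) part of the minimum is therefore immediate, because all rewards are nonnegative. It remains to show \(\pos{R_T^d}\le K-1\) on every play path, which follows from a counting argument on the elimination rule.

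Let \(i^\star\) be an arm with \(G_{i^\star,d}=T\). Since rewards lie in \([0,1]\), this means \(r_d^{i^\star,t}=1\) for every round \(t\). Consequently \(i^\star\) is never removed, so the available set is never empty and the rule is well defined. An arm is removed only at a round where its observed coordinate-\(d\) reward is strictly less than one. Removed arms never return, and \(i^\star\) is never removed, so at most \(K-1\) removals occur.

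Next we classify rounds. On a round where the played arm is not removed, its reward in coordinate \(d\) equals one, matching \(i^\star\). On a round where it is removed, the learner's shortfall relative to \(i^\star\) is \(1-r_d^{a_t,t}\le1\). Summing over rounds gives \(T-A_{T,d}\le\#\{\text{removal rounds}\}\le K-1\). Since \(\max_iG_{i,d}=T\), this yields \(R_T^d\le K-1\). Also \(R_T^d\ge0\), so \(\pos{R_T^d}\le K-1\) and hence \(\Rfs\le\min\{T,K-1\}\).

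For the final claim, take \(D=1\) and \(L^\star=0\). Then \(L_1=L^\star=0\), so coordinate \(d=1\) is automatically a zero-loss coordinate, and the same rule applies without extra information. There is no genuine obstacle here. The only points needing care are that the rule never exhausts the arm set, which the argument above guarantees because \(i^\star\) survives, and that the bound holds pathwise. The rule is deterministic, so no expectation issue arises.
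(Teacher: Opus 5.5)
Your proposal is correct and follows essentially the same argument as the paper: the zero-loss arm is never removed, so at most \(K-1\) removal rounds occur, each costing at most one unit of coordinate-\(d\) loss, and the reduction \(\Rfs\le\pos{R_T^d}\) finishes the proof. The only cosmetic difference is that you get the \(T\) part of the minimum from the cap \(\Rfs\le U\), whereas the paper bounds the learner's cumulative loss in coordinate \(d\) by \(T\) directly.
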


\begin{proof}
Because rewards lie in \([0,1]\), the condition \(L_d=0\) means that some
fixed arm receives reward one in coordinate \(d\) on every round. This arm
is never removed, so the algorithm always has an available arm. Whenever
the algorithm receives a reward smaller than one in coordinate \(d\), it
removes the arm it just played. No arm can be removed twice, and the
zero-loss arm is never removed. There are therefore at most \(K-1\) rounds
with positive loss in coordinate \(d\). Each such loss is at most one, so
the algorithm's cumulative loss in that coordinate is at most
\(\min\{T,K-1\}\).

The best fixed arm receives cumulative reward \(T\) in coordinate \(d\).
Consequently, \(R_T^d\) equals the algorithm's cumulative loss in that
coordinate. Lemma~\ref{lem:scalarization} now gives
\[
 \Rfs\le\pos{R_T^d}=R_T^d\le\min\{T,K-1\}.
\]
If \(D=1\), the only coordinate is known without being supplied separately.
\end{proof}

\subsubsection{Proof of Theorem~\ref{thm:conditional-lower}}
\label{app:conditional}

The preceding construction gives zero loss in one coordinate. We now
adjust its length and append common rewards to obtain every specified
value of \(L^\star\).

\begin{proof}[Proof of Theorem~\ref{thm:conditional-lower}]
Fix \(U_0=T-L_0\). If \(U_0=0\), the claimed lower bound is zero.
If \(0<U_0<1\), choose a hidden arm uniformly, give it reward
\(U_0\one\) on round 1, and give all other arm-round pairs reward zero.
Every sequence has maximum arm-coordinate cumulative reward \(U_0\).
The first action misses the hidden arm with probability \(1-1/K\),
giving expected regret \((1-1/K)U_0\ge3U_0/4\). Since \(U_0<K\),
\(\min\{U_0,\sqrt{KU_0}\}=U_0\), so this proves the required bound in
this case.

For \(U_0\ge1\), let \(m=\lfloor U_0\rfloor\ge1\). Use the zero-loss
construction from the proof of Theorem~\ref{thm:zero-lower} for \(m\)
rounds. If \(m<T\), give every arm in every coordinate the same reward
\(U_0-m\) on round \(m+1\), and zero on all later rounds. If \(m=T\),
no padding is needed, since then \(U_0=T\). The largest cumulative
arm-coordinate reward is exactly \(m+(U_0-m)=U_0\). The added rounds
contribute the same reward to every comparator and the algorithm, so
every coordinate advantage and \(\Rfs\) are unchanged.

To apply the shorter-horizon bound, simulate the first \(m\) actions of
any proposed \(T\)-round algorithm, with its supplied values \(T,L_0\)
and its arbitrary randomization. This is an admissible \(m\)-round
algorithm. The zero-loss proof covers every \(m\ge1\), not only the
horizon range stated in the main theorem. Since \(U_0\ge1\),
\(m=\lfloor U_0\rfloor\ge U_0/2\). Thus
\(\sqrt{Km}\ge\sqrt{KU_0/2}\ge\frac12\sqrt{KU_0}\), and
\[
 \min\{m,\sqrt{Km}\}\ge
 \tfrac12\min\{U_0,\sqrt{KU_0}\}.
\]
Multiplying the zero-loss constant \(2^{-26}\) by the factor \(1/2\)
in the preceding display gives
\[
 \E[\Rfs]\ge2^{-27}\min\{U_0,\sqrt{KU_0}\}.
\]
As before, averaging over a finite
distribution yields a deterministic sequence for each algorithm.
Knowing \(L_0\) reveals neither the hidden case nor the hidden arm.
\end{proof}

\subsubsection{Proof for a prescribed unordered loss profile}
\label{app:profile-lower}

\begin{proof}[Proof of Corollary~\ref{cor:profile-lower}]
Fix \(U_0=T-L_0\). If \(U_0=0\), the all-zero sequence has profile
\((T,\ldots,T)\) and regret zero, as required. Assume first
\(0<U_0<2\). Before play, choose an arm \(I\) uniformly and give it
reward \((U_0/2)\one\) on the first round, while every other arm receives
zero. Choose a coordinate \(d^\star\) independently. On every later
round, give every arm reward \(U_0/(2(T-1))\) in coordinate
\(d^\star\) and zero in every other coordinate. All rewards lie in
\([0,1]\). The best arm's cumulative reward is \(U_0\) in coordinate
\(d^\star\) and \(U_0/2\) in every other coordinate, so the unordered
loss profile is exactly \(\lambda(L_0)\). Every arm receives the same
rewards after the first round, so the final regret is \(U_0/2\) if the
learner does not choose \(I\) on the first round, and zero otherwise. Its expectation is therefore
\((1-1/K)U_0/2\ge3U_0/8\). Since \(U_0<2\le K\), this proves
the lower bound for this case.

Now suppose \(U_0\ge2\), and set
\(m=\lfloor U_0/2\rfloor\ge1\), \(N=T-m\).
Use the two-coordinate finite distribution in the proof of
Theorem~\ref{thm:zero-lower} for the first \(m\) rounds. That proof
covers every integer horizon \(m\ge1\) and yields expected regret at
least \(2^{-26}\min\{m,\sqrt{Km}\}\) against any policy. For
\(D>2\), set each additional reward coordinate to the average of the
first two during these rounds. The learner can compute these averages
from the first two coordinates, so the added coordinates give no new
information. For each fixed arm, its reward advantage over the learner
on an added coordinate is the average of its advantages on the first two
coordinates. This average is at least their minimum, so adding these
coordinates leaves the Pareto regret over the first \(m\) rounds unchanged.

Generate this entire prefix before play, and write
\[
 M_d:=\max_i\sum_{t=1}^{m}r_d^{i,t}.
\]
There is a coordinate \(d^\star\in\{1,2\}\) with
\(M_{d^\star}=m\). When both coordinates satisfy this equality, choose
coordinate 1. On each of the last \(N\) rounds, give every arm the same reward
in coordinate \(d\), equal to
\[
 b_d:=
 \begin{cases}
  (U_0-m)/N,&d=d^\star,\\
  (U_0/2-M_d)/N,&d\ne d^\star.
 \end{cases}
\]
The inequalities \(0\le M_d\le m\le U_0/2\),
\(U_0-m\le T-m=N\), and
\(U_0/2\le T-m=N\) imply \(0\le b_d\le1\) for all \(d\),
so the appended rewards are valid.
The final best-arm cumulative reward is exactly \(U_0\) on
\(d^\star\) and \(U_0/2\) elsewhere. Every realization therefore
belongs to \(\mathcal C(\lambda(L_0))\).

We choose the entire reward sequence before play, including the
rewards added after round \(m\), so the adversary is oblivious. In
coordinate \(d\), these later rounds add the same amount \(Nb_d\) to
the learner's reward and to every arm's reward. Each arm's advantage
therefore stays the same on every coordinate, and the final Pareto
regret equals the regret over the first \(m\) rounds.

Any \(T\)-round policy that knows the unordered profile can also be
run for just the first \(m\) rounds. Knowing \(T,L_0,\lambda(L_0)\)
gives the learner no information about the hidden case, the hidden arm,
or the generated rewards, because these values are fixed independently
of the construction. The zero-loss lower bound therefore applies to
these first \(m\) rounds. Since
\(m=\lfloor U_0/2\rfloor\ge U_0/4\),
\[
 \min\{m,\sqrt{Km}\}
 \ge\frac14\min\{U_0,\sqrt{KU_0}\}.
\]
The expected regret over the finite construction is at least
\(2^{-28}\min\{U_0,\sqrt{KU_0}\}\). For every randomized policy,
at least one deterministic sequence in this finite construction gives
at least this expected regret, with expectation over the policy's
randomness. This proves the minimax lower bound. The upper bound follows
from Theorem~\ref{thm:conditional-upper}, since the learner can compute
\(L_0=\min_d\lambda_d\) from the profile and every sequence in the
profile class has \(L^\star=L_0\).
\end{proof}

\subsubsection{Full-information extension}
\label{app:full-info}

Let \(\mathcal V^{\mathrm{fi}}_{K,D,T}(L_0)\) denote the value in
\eqref{eq:conditional-value} when the algorithm observes every arm's reward
vector after each round. As in \eqref{eq:conditional-value}, the algorithm may
know \(L_0\) before play.

\begin{theorem}[Full-information minimax value]
\label{thm:full-information-app}
For \(K\ge4\), \(T\ge6\), \(D\ge2\), and every \(L_0\in[0,T]\), put
\(U_0=T-L_0\). Then
\[
 \mathcal V^{\mathrm{fi}}_{K,D,T}(L_0)
 =\Theta\!\left(\min\{U_0,\sqrt{U_0\log K}\}\right).
\]
\end{theorem}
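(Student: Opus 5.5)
The proof splits into an upper bound and a lower bound, each mirroring the bandit argument with the feedback changed.

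\emph{Upper bound.} Fix coordinate \(d'=1\) before play and run a full-information scalar algorithm on it. By Lemma~\ref{lem:fixed-coordinate-comparison}, \(\Rfs\le\pos{R_T^{1}}\) and \(\Rfs\le U_0\), so it suffices to bound \(\E[\pos{R_T^1}]\lesssim\sqrt{U_0\log K}\) when \(U_0\ge\log K\). The case \(U_0<\log K\) is covered by the trivial bound \(\Rfs\le U_0\).

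For \(U_0\ge\log K\), I would run exponential weights (Hedge) on gains with learning rate \(\eta=\sqrt{\log K/U_0}\), using the fact that every arm's cumulative gain on coordinate \(1\) is at most \(U_1\le U_0\). The second-order analysis gives
\[
\max_i G_{i,1}-\sum_t\langle p_t,r_1^{\cdot,t}\rangle\le\frac{\log K}{\eta}+\eta\sum_t\langle p_t,(r_1^{\cdot,t})^2\rangle,
\]
and the last sum is at most the learner's expected gain, hence at most \(U_0\). This bounds the regret of the mixed (expected) play by \(2\sqrt{U_0\log K}\). Because this bound holds pathwise for the weights \(p_t\) against an oblivious sequence, the positive part is not a problem: the realized regret differs from the mixed regret only by a martingale with conditional variance at most \(\sum_t\langle p_t,r_1^{\cdot,t}\rangle\le U_0\) (again using rewards in \([0,1]\)). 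Its contribution to \(\E[\pos{\cdot}]\) is therefore at most \(\sqrt{U_0}\).

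If \(L_0\) is unknown, the same reward-doubling restart used in Algorithm~\ref{alg:adaptive} would work, but since \(\mathcal V\) allows knowledge of \(L_0\), the fixed tuning suffices.

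\emph{Lower bound.} I would reuse the two-case structure of Theorem~\ref{thm:zero-lower}: a hidden case \(J\in\{A,B\}\) whose prefix block is indistinguishable, forcing a shortfall \(d_J\ge h/2\) with probability \(1/2\), followed by a hard suffix on the other coordinate. The suffix now has to be a full-information experts problem of order \(\min\{n,\sqrt{n\log K}\}\), rather than a bandit problem. For this I would replace Lemma~\ref{lem:hard-suffix-app} by the standard experts lower bound, with \(q=\lfloor K/2\rfloor\) candidates carrying i.i.d.\ fair \(\pm\) rewards. Here \(\E[\max_i C_i-S]\gtrsim\sqrt{n\log q}\) because the learner's expected reward is \(n/2\) while the maximum of \(q\) independent binomials exceeds its mean by order \(\sqrt{n\log q}\) (valid once \(n\gtrsim\log q\)). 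The comparator is now the argmax arm rather than a planted \(I\), which is fine: the argmax arm belongs to group \(J\), so it also has the zero-loss shortfall.

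\emph{Main obstacle.} I expect the crucial step to be converting this expectation into a constant-probability event, as Lemma~\ref{lem:bandit-positive-tail} does, so that it intersects the prefix event. The plan is a second-moment (Paley--Zygmund) argument: \(Z=\max_iC_i-S\) has \(\E[Z^2]\lesssim n\log q\). Choose \(h\asymp\sqrt{n\log q}/c\) so that \(V/64\ge 2h\), giving \(\Rfs\ge h/2\) with constant probability.

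Three remaining pieces are routine. The small-range case (\(\sqrt{U_0\log K}\) or \(U_0\) below a constant) uses the one-round hidden-arm construction, giving \((1-1/K)U_0\). Extending to general \(L_0\) uses the padding argument of Theorem~\ref{thm:conditional-lower} with \(m=\lfloor U_0\rfloor\). Extending to \(D>2\) duplicates a coordinate, exactly as in the bandit case.
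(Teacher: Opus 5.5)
Your proposal is correct in outline. The upper bound follows the paper's route; the lower bound uses a different suffix construction. Two points need filling in, one on each side.

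\textbf{Upper bound.} Like the paper, you run Hedge on one precommitted coordinate with learning rate tuned to $U_0$, and split the realized regret into the mixed regret plus a mean-zero martingale. The one step you assert without proof is that the mixed gain $V=\sum_t\langle p_t,g_t\rangle$ is at most $U_0$. The reason you give ("the learner's expected gain") does not work in general: a switching learner can beat every fixed arm. The paper does not use this fact; it treats $V>2U_0$ as a separate case via $\pos{z-h}\le z^2/(4h)$. For constant-rate Hedge on gains the fact is nevertheless true. By Jensen, $W_{t+1}/W_t=\sum_ip_{i,t}e^{\eta g_{i,t}}\ge e^{\eta\langle p_t,g_t\rangle}$, so $\eta V\le\log(W_{T+1}/W_1)\le\eta U_{d'}\le\eta U_0$. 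Add this line and your bound $\E[\Rfs]\le2\sqrt{U_0\log K}+\tfrac12\sqrt{U_0}$ is complete. It also shows that the paper's second case never occurs.

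\textbf{Lower bound.} You keep the paper's prefix, padding and coordinate duplication, but change the suffix. You give $q$ candidates independent fair coins and compare with the maximum of independent binomials. The paper instead labels $2^\ell$ candidates by sign vectors in $\{-1,1\}^\ell$, with $\ell=\min\{n,\lfloor\log_2q\rfloor\}$, and splits the suffix into $\ell$ blocks driven by shared signs $\epsilon_t$. The best candidate then collects exactly $n/2+\frac12\sum_j|S_j|$. This makes both moments of $Z$ elementary (Rademacher second and fourth moments with H\"older, plus Cauchy--Schwarz), and the choice of $\ell$ covers the regime $\log q\ge n$ automatically.

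Your construction is the familiar experts one, but it needs ingredients you call standard and should prove or cite in non-asymptotic form:
\begin{itemize}
\item A Gaussian-type anti-concentration bound $\Prob\{\mathrm{Bin}(n,1/2)\ge n/2+x\}\ge c\,e^{-Cx^2/n}$ for $0\le x\le n/4$. The classical experts lower bound is stated asymptotically, and a Berry--Esseen approximation is too coarse at tail level $1/q$ when $q\gg\sqrt n$.
\item A sub-Gaussian maximal inequality to get $\E[Z^2]\lesssim n\log q$.
\item Coverage of the range $n<\log q$ with $n$ large. There the target is $\Theta(U_0)$ and the one-round construction is too weak. Keeping only $2^{n}\le q$ candidates fixes this, since some candidate is then all ones with probability at least $1-1/e$.
\end{itemize}
Finally, $Z$ can be negative, so run the second-moment step on $\pos{Z}$, as the paper does: $\E[Z]\le t+\sqrt{\E[Z^2]\,\Prob\{Z\ge t\}}$.
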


We prove the zero-loss lower bound first and then obtain every value of
\(L_0\) by shortening and padding the construction. The matching upper bound
uses exponential weights on one coordinate fixed before play.

\begin{lemma}[Zero-loss lower bound with full information]
\label{lem:full-information-zero-lower}
For \(K\ge4\), \(T\ge1\), and \(D\ge2\), every full-information algorithm
has a deterministic reward sequence with \(L^\star=0\) such that
\[
 \E[\Rfs]\ge2^{-17}\min\{T,\sqrt{T\log K}\}.
\]
The algorithm may know that \(L^\star=0\) before play.
\end{lemma}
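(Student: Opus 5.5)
I will adapt the bandit zero-loss construction from the proof of Theorem~\ref{thm:zero-lower}, keeping the two-case prefix/suffix structure but replacing the hard bandit suffix with a hard full-information experts suffix. Put \(W:=\min\{T,\sqrt{T\log K}\}\). First assume \(W\) exceeds a large absolute constant (say \(2^{10}\)). Set \(h:=\lfloor W/c\rfloor\) for a suitable constant \(c\) and \(n:=T-h\). Split \(2q\) arms (\(q=\lfloor K/2\rfloor\ge2\)) into groups \(A_i\) and \(B_i\), with an extra arm \(E\) if \(K\) is odd, exactly as before. During the prefix, \(A_i\) receives \((1,0)\), \(B_i\) receives \((0,1)\), and \(E\) receives \((0,0)\). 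Draw a case \(w\in\{A,B\}\) uniformly. In case \(A\), on the suffix every \(A_i\) receives \((1,Y_{i,t})\) and every other arm \((0,1/2)\); case \(B\) is symmetric. Every realization then has \(L^\star=0\). As in the bandit proof, the prefix observations are identical in the two cases even under full information, so \(d_w=h-X_w\ge h/2\) with conditional probability at least \(1/2\). This shortfall cannot shrink afterwards, because the zero-loss group keeps receiving one in that coordinate.

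For the suffix I reveal \(w\), which only helps the algorithm, and use a full-information scalar lower bound on the other coordinate over the \(q\) candidates. Here \(Y_{i,t}\) are i.i.d.\ fair coins, with no planted arm. For any algorithm, \(\E[\max_iC_i-S]=\E[\max_iC_i]-n/2\), since the learner's expected reward is exactly \(n/2\) regardless of the information it has. The standard estimate for the expected maximum of \(q\) independent Binomial\((n,1/2)\) variables gives \(\E[\max_iC_i]-n/2\ge c'\min\{n,\sqrt{n\log q}\}\). I would cite this from \citet{cesabianchi2006prediction}, or prove it by a normal-approximation/anti-concentration bound for \(q\le e^{n}\) and the trivial case otherwise. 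Because \(\log q\ge\frac12\log K\) for \(K\ge4\), this is order \(W\).

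I need this to hold with \emph{constant probability} and for the \emph{same} arm as the prefix event. Both must be handled. For the probability, I would mimic Lemma~\ref{lem:bandit-positive-tail}. Let \(Z:=\max_iC_i-S\). Then \(\E[Z]\ge V\), and \(\E[Z^2]=O(V^2)\) follows from \(\max_iC_i-n/2\) having second moment \(O(n\log q)\) together with the fact that \(S-n/2\) is a martingale with variance at most \(n/4\). A Paley--Zygmund/Cauchy--Schwarz split then gives \(\Prob\{Z\ge V/2\}\ge\) an absolute constant. The arm attaining the maximum lies in group \(w\), and every arm of that group has the same prefix shortfall \(d_w\) in the zero-loss coordinate. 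Hence the maximizing arm is ahead by \(h/2\) there. In the other coordinate it is ahead by at least \(Z-h\ge h\) once \(c\) is large. So \(\Rfs\ge h/2\) on an event whose conditional probability given the prefix history is bounded below. This yields \(\E[\Rfs]\gtrsim W\), and tracking constants should give \(2^{-17}\).

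For small \(W\), reuse the one-round construction (a hidden arm receives \((1,1)\) in round~1). Its expected regret \(1-1/K\ge3/4\) is a constant, which dominates \(2^{-17}W\). For \(D>2\), duplicate coordinate~1. Averaging over the finite distribution then gives a deterministic sequence. I expect the main obstacle to be the second-moment and tail step for the maximum of Binomials uniformly over \(q\le e^{n}\) versus larger \(q\), where the regime \(W=T\) requires \(\log q\gtrsim n\). I would first try to handle that regime separately with the crude fact that \(\max_i C_i\ge cn\) with constant probability when \(q\ge 2^{n}\).
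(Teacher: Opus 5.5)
Your outline is correct in substance. The prefix, the same-arm-in-both-coordinates step, the one-round construction for small \(W\), and the duplication and averaging steps all match the paper. The prefix gives \(d_w\ge h/2\) with conditional probability at least \(1/2\), exactly as in the paper.

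The real difference is the suffix. The paper does not use i.i.d. fair coins. It sets \(b=\lfloor\log_2 q\rfloor\) and \(\ell=\min\{n,b\}\), and labels \(2^\ell\le q\) candidates by sign vectors \(\sigma\in\{-1,1\}^\ell\). It cuts the suffix into \(\ell\) blocks of length \(\tau=\lfloor n/\ell\rfloor\). On block \(j\), candidate \(i\) gets reward \((1+\sigma_j(i)\epsilon_t)/2\), where a single random sign \(\epsilon_t\) per round is shared by all candidates.

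The best candidate's suffix advantage is then exactly \(Q=\frac12\sum_j|S_j|\), where the \(S_j\) are Rademacher sums. With \(V=\sqrt{\ell n}\), the bounds \(\E[Q]>V/5\) and \(\E[Q^2]\le V^2/4\) follow from \(\E S_j^2=\tau\), \(\E S_j^4\le3\tau^2\) and H\"older. The choice of \(\ell\) also covers the regime \(T\lesssim\log K\) with the same formula: there \(\tau=1\) and the best candidate earns one every round. This design makes every step elementary and produces explicit constants that land at \(2^{-17}\).

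Your design is the classical experts lower bound and is conceptually simpler. However, it moves all the work onto a non-asymptotic first- and second-moment estimate for the maximum of \(q\) independent Binomial\((n,1/2)\) variables, handled in two regimes. The lower bound in \citet{cesabianchi2006prediction} is asymptotic in \(n\) and the number of experts, so it cannot be cited for a finite-horizon constant. You would need an explicit reverse-Chernoff or anti-concentration bound. Your claim that the constant comes out as \(2^{-17}\) is plausible, but it is unsupported until that bookkeeping is done.

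Two small corrections:
\begin{itemize}
\item The inequality \(\log q\ge\frac12\log K\) fails at \(K=5\), where \(q=2\). Use \(\log K\le\log(3q)\le(\log6/\log2)\log q\) instead, as the paper does.
\item When \(\log q>n\), a bound of \(O(n\log q)\) on the second moment of \(\max_iC_i-n/2\) is not enough to give \(\E[Z^2]=O(V^2)\). Bound it by \(\min\{n^2/4,\,Cn\log q\}\) instead.
\end{itemize}
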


\begin{proof}[Full-information lower bound when \(L_0=0\)]
We again prove the zero-loss statement for every integer \(T\ge1\).
Put \(q=\lfloor K/2\rfloor\), \(b=\lfloor\log_2q\rfloor\ge1\), and
\(W_f=\sqrt{T\min\{T,b\}}\). If \(W_f\ge128\), let
\[
 h=\lfloor W_f/64\rfloor,\quad n=T-h,\quad
 \ell=\min\{n,b\},\quad \tau=\lfloor n/\ell\rfloor,\quad
 V=\sqrt{\ell n}.
\]
Split \(2q\) arms into groups \(A_1,\ldots,A_q\) and
\(B_1,\ldots,B_q\), with one unused arm \(E\) when \(K\) is odd. During
the first \(h\) rounds, give every \(A_i\) reward \((1,0)\), every
\(B_i\) reward \((0,1)\), and \(E\) reward \((0,0)\). Independently choose
a case \(w\in\{A,B\}\) uniformly before play. In case \(A\), coordinate
1 will be the zero-loss coordinate of group \(A\); in case \(B\),
coordinate 2 will be the zero-loss coordinate of group \(B\). The prefix
reward matrices are identical in the two cases, so full information does not
reveal \(w\). If \(X_A,X_B,X_E\) are the algorithm's prefix pull counts,
then its deficits in the two possible zero-loss coordinates are
\[
 d_A=h-X_A,\qquad d_B=h-X_B,\qquad
 d_A+d_B=h+X_E\ge h.
\]
Conditional on the prefix history, the uniform case therefore satisfies
\(d_w\ge h/2\) with probability at least \(1/2\). We reveal \(w\) after
the prefix, which can only help the algorithm.

Within the group that keeps the zero-loss coordinate, label
\(2^\ell\le q\) candidate arms by all sign vectors in \(\{-1,1\}^\ell\).
Duplicate any labels to fill the remaining positions in this group.
Partition \(\ell\tau\) suffix rounds into \(\ell\) blocks of length
\(\tau\). On block \(j\), let candidate \(i\)'s other coordinate be
\[
 Y_{i,t}=(1+\sigma_j(i)\epsilon_t)/2,
\]
where the \(\epsilon_t\) are independent uniform signs generated before
play. For the remaining suffix rounds use \(Y_{i,t}=1/2\).
The zero-loss coordinate is one for all candidates and zero outside
their group. Every noncandidate has scalar reward \(1/2\) in the
other coordinate. Thus every support sequence still has \(L^\star=0\).

Write \(S_j=\sum_{t\text{ in block }j}\epsilon_t\). In each block \(j\),
choose the sign \(\sigma_j\) to agree with the sign of \(S_j\), using either
sign when \(S_j=0\). One candidate has this entire sign vector, so the best
candidate's scalar suffix reward is \(n/2+Q\), where
\[
 Q=\tfrac12\sum_{j=1}^{\ell}|S_j|.
\]
No other candidate can obtain more in any block, so this maximum is attained.
For a suffix round \(t\) in block \(j\), define
\[
 \xi_t:=\begin{cases}
  \sigma_j(i)\epsilon_t/2,&\text{if the algorithm selects candidate }i,\\
  0,&\text{if it selects a noncandidate.}
 \end{cases}
\]
Its scalar reward on that round is \(1/2+\xi_t\). The selected arm and its
sign are fixed before the independent sign \(\epsilon_t\) is revealed, so
\(\E[\xi_t\mid\mathcal F_{t-1}]=0\) and
\(\E[\xi_t^2\mid\mathcal F_{t-1}]\le1/4\). Full information arrives only
after the action and does not change these conditional statements. Thus
\(M:=\sum_t\xi_t\) is a martingale, the algorithm's scalar suffix reward is
\(n/2+M\), and martingale orthogonality gives \(\E[M^2]\le n/4\).

For completeness, a sum \(S_j\) of \(\tau\) independent signs satisfies
\(\E[S_j^2]=\tau\) and \(\E[S_j^4]=3\tau^2-2\tau\le3\tau^2\).
Hölder's inequality gives
\(\E[S_j^2]\le(\E[|S_j|])^{2/3}(\E[S_j^4])^{1/3}\). Rearranging and using
the two moments yields
\[
 \E[|S_j|]\ge
 \frac{(\E[S_j^2])^{3/2}}{(\E[S_j^4])^{1/2}}
 \ge\sqrt{\frac\tau3}.
\]
Because \(n/\ell\ge1\), the inequality
\(\lfloor x\rfloor\ge x/2\) for \(x\ge1\) gives
\(\ell\tau\ge n/2\). Hence
\[
 \E[Q]\ge\frac{\ell\sqrt{\tau}}{2\sqrt3}
 \ge\frac{\sqrt{\ell n}}{2\sqrt6}>V/5.
\]
By Cauchy--Schwarz,
\(Q^2=(\sum_j|S_j|)^2/4\le(\ell/4)\sum_jS_j^2\). Therefore
\(\E[Q^2]\le\ell^2\tau/4\le\ell n/4=V^2/4\).
Set \(Z=Q-M\), the best candidate's scalar suffix advantage. Then
\[
 \E[Z]\ge V/5,\qquad
 \E[Z^2]\le2\E[Q^2]+2\E[M^2]\le V^2,
\]
because \(V^2=\ell n\ge n\). Let
\(p=\Prob\{Z\ge V/10\}\). Splitting \(\pos Z\) at \(V/10\) and applying
Cauchy--Schwarz on the upper event gives
\[
 \frac V5\le\E[Z]\le\E[\pos{Z}]
 \le\frac V{10}+\sqrt{\E[Z^2],p}
 \le\frac V{10}+V\sqrt p.
\]
Subtracting \(V/10\) and dividing by \(V>0\) gives
\(1/10\le\sqrt p\). Squaring both sides gives \(p\ge1/100\).
These inequalities hold uniformly conditional on the prefix history
and the revealed case.

Since \(W_f\ge128\), \(W_f/64\ge2\), and therefore
\(h=\lfloor W_f/64\rfloor\ge W_f/128\). Also
\(h\le W_f/64\le T/64\), so \(n\ge63T/64\). This also gives
\[
 \min\{n,b\}\ge\frac{63}{64}\min\{T,b\}.
\]
Indeed, if \(b\le n\), the left side is \(b\), which is at least the
right side. If \(b>n\), then the left side is \(n\ge63T/64\), while the
right side is at most \(63T/64\). Multiplying this inequality by
\(n\ge63T/64\) and taking square roots yields
\[
 V=\sqrt{n\min\{n,b\}}
 \ge\frac{63}{64}\sqrt{T\min\{T,b\}}
 =\frac{63}{64}W_f.
\]
Since \(h\le W_f/64\) and \(63/640>1/32\), we have \(V/10\ge2h\).
On the joint event that the initial deficit is at least \(h/2\) and
\(Z\ge V/10\), choose a scalar-maximizing candidate. This same arm has
final advantage at least \(h/2\) in the zero-loss coordinate and at least
\(Z-h\ge h\) in the other coordinate. The two events have joint
probability at least \((1/2)(1/100)=1/200\), so
\[
 \E[\Rfs]\ge\frac{h}{400}\ge\frac{W_f}{51200}.
\]

To compare \(W_f\) with the claimed rate, first note that \(K\le3q\).
For \(q\ge2\),
\[
 b=\lfloor\log_2q\rfloor\ge\tfrac12\log_2q
 =\frac{\log q}{2\log2}.
\]
Moreover, \(\log K\le\log(3q)\le(\log6/\log2)\log q\), so
\(b\ge(\log K)/(2\log6)\ge(\log K)/4\). Therefore
\[
 W_f=\sqrt{T\min\{T,b\}}
 \ge\tfrac12\sqrt{T\min\{T,\log K\}}
 =\tfrac12\min\{T,\sqrt{T\log K}\}.
\]
If \(W_f<128\), the one-round hidden-arm construction in the bandit
proof gives expected regret \(1-1/K\ge3/4\) even with full information,
because the reward row is revealed only after the first action.
The target rate is then smaller than \(256\), and
\(3/4>2^{-17}\cdot256\). In the large case,
\(W_f/51200\ge2^{-17}\min\{T,\sqrt{T\log K}\}\).
Both cases therefore give the stated constant \(2^{-17}\).
Duplication of a coordinate handles \(D>2\), and averaging over the
finite reward distribution gives the deterministic-sequence statement.
\end{proof}

\label{app:full-info-conditional}

We next extend the construction from zero loss to a specified value of
\(L_0\).

\begin{lemma}[Full-information lower bound at a specified loss]
\label{lem:full-information-conditional-lower}
Under the conditions of Theorem~\ref{thm:full-information-app}, every
full-information algorithm has a deterministic reward sequence with
\(L^\star=L_0\) such that
\[
 \E[\Rfs]\ge2^{-20}\min\{U_0,\sqrt{U_0\log K}\}.
\]
\end{lemma}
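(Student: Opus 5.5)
We mirror the shortening-and-padding argument used for Theorem~\ref{thm:conditional-lower}, now starting from the full-information zero-loss bound of Lemma~\ref{lem:full-information-zero-lower}, which holds for every integer horizon \(T\ge1\) with constant \(2^{-17}\). Fix \(U_0=T-L_0\). If \(U_0=0\) the bound is trivial. If \(0<U_0<1\), we hide one uniformly chosen arm and give it \(U_0\one\) on round 1, with all other arm-round pairs zero. Full information reveals this row only after the first action, so the expected regret is \((1-1/K)U_0\ge 3U_0/4\). Since \(U_0<1\le\log K\), here \(\min\{U_0,\sqrt{U_0\log K}\}=U_0\), which settles this case.

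For \(U_0\ge1\), let \(m=\lfloor U_0\rfloor\ge1\). On the first \(m\) rounds we run the finite reward distribution from the proof of Lemma~\ref{lem:full-information-zero-lower} with horizon \(m\). If \(m<T\), we give every arm reward \((U_0-m)\one\in[0,1]^D\) on round \(m+1\) and zero afterward. The prefix has some arm with cumulative reward exactly \(m\) in a zero-loss coordinate, and no arm-coordinate total exceeds \(m\). Hence the maximal arm-coordinate total is exactly \(U_0\), so \(L^\star=L_0\) on every realization. Because the padding adds the same amount to the learner and to every arm in each coordinate, all advantages \(G_{i,d}-A_{T,d}\) are unchanged, and so \(\Rfs\) equals the prefix Pareto regret.

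Any \(T\)-round full-information algorithm, even one told \(T\) and \(L_0\), induces an admissible \(m\)-round algorithm by simulating its first \(m\) actions. The extra knowledge is independent of the hidden case, signs, and labels. Lemma~\ref{lem:full-information-zero-lower} therefore gives expected regret at least \(2^{-17}\min\{m,\sqrt{m\log K}\}\). Using \(m\ge U_0/2\) yields \(\min\{m,\sqrt{m\log K}\}\ge\frac12\min\{U_0,\sqrt{U_0\log K}\}\), so the constant becomes \(2^{-18}\ge2^{-20}\). Averaging over the finite distribution then produces one deterministic sequence with at least this expected regret.

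The main point to check is that the padding value \(U_0-m\in[0,1)\) is a valid reward and that the shortened construction really realizes \(L^\star=L_0\) exactly. This requires that in the zero-loss prefix the best arm-coordinate total is exactly \(m\) and never larger, which holds because rewards lie in \([0,1]\) and some group arm receives one in every round.
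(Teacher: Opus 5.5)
Your proposal is correct and follows the paper's proof essentially step for step. Both arguments use the one-round hidden reward \(U_0\one\) when \(0<U_0<1\), where \(\log K>1>U_0\). For \(U_0\ge1\), both shorten the full-information zero-loss construction to \(m=\lfloor U_0\rfloor\) rounds, pad with a common reward \(U_0-m\) that leaves every advantage unchanged, and use \(m\ge U_0/2\) to obtain the constant \(2^{-18}\ge2^{-20}\).
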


\begin{proof}[Full-information lower bound for a specified \(L_0\)]
For \(U_0\ge1\), apply the preceding full-information zero-loss
construction for \(m=\lfloor U_0\rfloor\) rounds, and append common
rewards of total \(U_0-m\) as in the proof of
Theorem~\ref{thm:conditional-lower}. The cumulative advantages are
unchanged and the easiest-coordinate loss is exactly \(L_0\).
Since \(m\ge U_0/2\), we have both
\(m\ge U_0/2\) and
\(\sqrt{m\log K}\ge\frac12\sqrt{U_0\log K}\). Therefore
\[
 \min\{m,\sqrt{m\log K}\}
 \ge\tfrac12\min\{U_0,\sqrt{U_0\log K}\}.
\]
For \(0<U_0<1\), use the one-round hidden reward \(U_0\one\) from that
proof. The expected regret is \((1-1/K)U_0\ge3U_0/4\). Since
\(K\ge4\) implies \(\log K>1>U_0\), the target minimum is \(U_0\) in
this range. When \(U_0=0\), the target lower bound is zero. The first case
gives constant \(2^{-18}\), and the second gives constant \(3/4\), so the
smaller constant \(2^{-20}\) is valid in all three cases.
\end{proof}

For the upper bound, exponential weights is run on one coordinate fixed
before play.

\begin{lemma}[Full-information upper bound at a specified loss]
\label{lem:full-information-conditional-upper}
Under the conditions of Theorem~\ref{thm:full-information-app}, there is a
full-information algorithm that knows \(L_0\) and satisfies, on every reward
sequence with \(L^\star=L_0\),
\[
 \E[\Rfs]\le4\min\{U_0,\sqrt{U_0\log K}\}.
\]
\end{lemma}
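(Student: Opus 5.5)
Our plan is to fix a coordinate \(d'\) before play, run a full-information scalar algorithm on it, and move its regret to Pareto regret through Lemma~\ref{lem:scalarization}, which gives \(\Rfs\le\pos{R_T^{d'}}\) and \(\Rfs\le U^\star\).

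\emph{Small scale.} If \(U_0\le\log K\), we play any fixed arm. Then \(\Rfs\le U_0=\min\{U_0,\sqrt{U_0\log K}\}\) holds deterministically.

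\emph{Large scale.} If \(U_0>\log K\), we run a deterministic algorithm, so that the bound is pathwise and there is no positive part to integrate. The natural choice is Hedge on coordinate \(d'\), playing the weighted-majority mixture deterministically. Since the reward is the inner product of the mixture with the reward vector, this is the same as receiving the expected reward of Hedge. Playing an arm drawn at random would instead add martingale noise of order \(\sqrt{T}\), which could exceed \(\sqrt{U_0\log K}\) when \(U_0\ll T\). So the deterministic reward must equal the mixture's inner product, which means the learner has to be allowed to play mixed strategies.

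If the model only allows pure arms, this step fails and is the main obstacle. In that case I would fall back to the observation that the hard sequence rewards are bounded by the reward scale: in gain form the variance is controlled by rewards, not by \(T\). Concretely, I would use a Freedman-type martingale bound in which the conditional variance of the increment \(r^{a_t,t}_{d'}-\langle p_t,r^{\cdot,t}_{d'}\rangle\) is at most \(\langle p_t,r^{\cdot,t}_{d'}\rangle\). Summed over rounds this is at most the algorithm's expected reward, which is at most \(U_0\) plus the regret. Integrating the resulting tail then gives \(\E[\pos{\cdot}]\lesssim\sqrt{U_0}\), which is dominated by \(\sqrt{U_0\log K}\).

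The regret analysis itself runs Hedge in gain form with multiplicative weights \(w_{i,t+1}=w_{i,t}e^{\eta r^{i,t}_{d'}}\). The standard potential argument, using \(e^{x}\le1+x+x^2\) for \(x\le1\) together with \(r^2\le r\), gives
\[
\max_iG_{i,d'}-\sum_t\langle p_t,r^{\cdot,t}_{d'}\rangle\le\frac{\log K}{\eta}+\eta\sum_t\langle p_t,r^{\cdot,t}_{d'}\rangle.
\]
The right-hand side is at most \(\log K/\eta+\eta U_0\), because the mixture's reward is at most \(\max_iG_{i,d'}\) whenever the regret is positive (otherwise there is nothing to prove). Choosing \(\eta=\sqrt{\log K/U_0}\le1\), which is possible since \(U_0\) is known and exceeds \(\log K\), yields \(2\sqrt{U_0\log K}\).

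Combining this with the Freedman term in the pure-arm model, or with no extra term in the mixed model, and with \(\Rfs\le U_0\), gives a bound of \(4\min\{U_0,\sqrt{U_0\log K}\}\). The constant may need adjusting in the pure-arm route.
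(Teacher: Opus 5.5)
\paragraph{Your primary route is not available.} The paper's model has no mixed actions. The learner commits to an arm \(a_t\), possibly at random, and \(\Rfs\) is a nonlinear function of the realized vector \(A_T=\sum_t r^{a_t,t}\). So ``receiving the mixture's inner product'' is not an admissible play. A deterministic pure-arm rule is also exploitable, because an oblivious adversary can simulate it. Your reason for avoiding sampling is also mistaken. Write \(g_{i,t}=r^{i,t}_{d'}\), \(V=\sum_t\langle p_t,g_t\rangle\), \(X=\sum_tg_{a_t,t}\) and \(M=V-X\). Then \(g^2\le g\) gives \(\E[M^2]\le V\), so the sampling noise is of order \(\sqrt V\), not \(\sqrt T\). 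Your fallback is therefore the actual argument, and it is the paper's route: Hedge on \(d'\) tuned with \(U_0\), sampled arms, and realized regret split as mixture regret plus mean-zero noise. Freedman is unnecessary there, since only \(\E[\pos{M}]=\frac12\E|M|\le\frac12\sqrt{\E[M^2]}\) is needed.

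\paragraph{The gap is the missing bound on \(V\).} Your sentence ``at most \(U_0\) plus the regret'' does not bound it. Since \(V=U_{d'}-(U_{d'}-V)\), \(V\) exceeds \(U_0\) exactly when the mixture's regret is negative, and you never control that case. For the same reason, ``otherwise there is nothing to prove'' fails once arms are sampled. If \(U_{d'}-V\le0\), the realized regret \((U_{d'}-V)+M\) can still be positive, and \(\E[\pos{M}]\le\frac12\sqrt V\) is useless without a bound on \(V\). The paper closes this by a case split. If \(V\le2U_0\), then \(\E[\Rfs]\le\pos{U_{d'}-V}+\frac12\sqrt{2U_0}\). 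If \(V>2U_0\), the lead \(h=V-U_{d'}>V/2\) gives \(\E[\pos{M-h}]\le\E[M^2]/(4h)<\frac12\).

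\paragraph{A shorter fix fits your write-up.} Let \(W_t\) be Hedge's total weight. By Jensen, \(\log(W_{t+1}/W_t)=\log\sum_ip_{i,t}e^{\eta g_{i,t}}\ge\eta\langle p_t,g_t\rangle\). Summing gives \(\eta V\le\log\bigl(\frac1K\sum_ie^{\eta G_{i,d'}}\bigr)\le\eta U_{d'}\), so \(V\le U_{d'}\le U_0\) always; in fact the paper's second case is empty. With \(\eta=\sqrt{\log K/U_0}\) you then get \(\E[\Rfs]\le\log K/\eta+\eta U_0+\frac12\sqrt{U_0}=2\sqrt{U_0\log K}+\frac12\sqrt{U_0}<4\sqrt{U_0\log K}\) for \(K\ge4\). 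This also settles the constant you left open.
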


\begin{proof}[Full-information upper bound for a specified \(L_0\)]
The learner may use the class parameter \(L_0\), hence \(U_0\), and it
attends to a single coordinate.  Fix a coordinate \(d'\) before play and a
deterministic reward sequence in the exact class \(L^\star=L_0\), and write
\[
 g_{i,t}:=r_{d'}^{i,t},\qquad
 U_{d'}=\max_i\sum_{t=1}^Tg_{i,t}\le U_0,
\]
where the inequality holds because \(U_{d'}\le U^\star=T-L^\star=U_0\) on the
class.  If \(U_0\le\log K\), play any fixed arm.  The pathwise cap
\eqref{eq:fixed-coordinate}, which reads \(\Rfs\le U_0\) on the class, gives
\(\E[\Rfs]\le U_0=\min\{U_0,\sqrt{U_0\log K}\}\), the equality because
\(U_0\le\log K\) makes \(U_0\) the smaller term, and this includes
\(U_0=0\).

Suppose now that \(U_0>\log K\).  Run exponential weights on the rewards
of coordinate \(d'\), that is, the Hedge algorithm applied to gains rather
than losses, with its learning rate set using the reward scale \(U_0\)
instead of the horizon.  Before round \(t\), set
\[
 p_{i,t}
 =\frac{\exp(\eta_0\sum_{s<t}g_{i,s})}
        {\sum_{j=1}^K\exp(\eta_0\sum_{s<t}g_{j,s})},
 \qquad
 \eta_0:=\sqrt{\frac{\log K}{2U_0}}<1,
\]
and draw \(a_t\sim p_t\) with fresh randomness.  The update uses the full
reward row and not the sampled action.  Consequently, for each fixed reward
sequence, the sequence \((p_t)_{t\le T}\) and the mixed cumulative gain
\[
 V:=\sum_{t=1}^T p_t\mathbin\cdot g_t,
\]
the reward that the distributions \(p_t\) collect in expectation, are
deterministic.

Let \(W_t=\sum_i\exp(\eta_0\sum_{s<t}g_{i,s})\) be the total weight
before round \(t\), so \(W_1=K\). Convexity gives
\(e^{\eta_0x}\le1+(e^{\eta_0}-1)x\) for \(x\in[0,1]\). Hence
\begin{align*}
 \frac{W_{t+1}}{W_t}
 &=\sum_{i=1}^Kp_{i,t}e^{\eta_0g_{i,t}}\\
 &\le1+(e^{\eta_0}-1)p_t\mathbin\cdot g_t.
\end{align*}
Using \(\log(1+x)\le x\) and summing over \(t\) gives
\[
 \log\frac{W_{T+1}}{W_1}
 \le(e^{\eta_0}-1)\sum_{t=1}^Tp_t\mathbin\cdot g_t
 =(e^{\eta_0}-1)V.
\]
If \(i^\star\) is a best arm, its term in \(W_{T+1}\) equals
\(e^{\eta_0U_{d'}}\). Therefore
\[
 \eta_0U_{d'}-\log K
 \le\log\frac{W_{T+1}}{W_1}
 \le(e^{\eta_0}-1)V.
\]
Subtracting \(\eta_0V\), dividing by \(\eta_0\), and using
\(e^x\le1+x+x^2\) for \(x\in[0,1]\) gives
\begin{equation}
 U_{d'}-V
 \le\frac{\log K}{\eta_0}+\eta_0V.
 \label{eq:conditional-fi-potential}
\end{equation}
Put
\[
 X:=\sum_{t=1}^Tg_{a_t,t},\qquad M:=V-X,
\]
so that \(X\) is the learner's realized reward in coordinate \(d'\) and
\(M\) is the sampling noise. Each increment
\(p_t\mathbin\cdot g_t-g_{a_t,t}\) has mean zero given the history
\(\mathcal F_{t-1}\) before round \(t\), so \(M\) is a mean-zero
martingale. Martingale orthogonality and \(g_{i,t}^2\le g_{i,t}\) give
\begin{equation}
 \E[M^2]
 =\sum_{t=1}^T\E[\operatorname{Var}(g_{a_t,t}\mid\mathcal F_{t-1})]
 \le\sum_{t=1}^T\E[p_t\mathbin\cdot g_t^2]
 \le V.
 \label{eq:conditional-fi-variance}
\end{equation}
In particular, since \(M\) has mean zero, its positive and negative parts
have equal expectation, and the Cauchy--Schwarz inequality gives
\(\E[\pos{M}]=\frac12\E[|M|]\le\frac12\sqrt V\).

If \(V\le2U_0\), the mixed gain stays within the reward scale and the
potential bound controls the regret.  Then \eqref{eq:fixed-coordinate},
which gives \(\Rfs\le\pos{U_{d'}-X}=\pos{(U_{d'}-V)+M}\), the bound
\eqref{eq:conditional-fi-potential}, and the positive-part triangle
inequality \(\pos{x+y}\le\pos{x}+\pos{y}\) give
\begin{align*}
 \E[\Rfs]
 &\le \pos{U_{d'}-V} + \E[\pos{M}]\\
 &\le \frac{\log K}{\eta_0}+2\eta_0U_0
      +\frac12\sqrt{2U_0}\\
 &=2\sqrt{2U_0\log K}+\sqrt{U_0/2}
 <4\sqrt{U_0\log K},
\end{align*}
where the last inequality follows from
\(2\sqrt2+1/\sqrt{2\log K}<4\) for \(K\ge4\).

If \(V>2U_0\), the mixed gain is so far ahead of every fixed arm that only
a large sampling deviation can make the regret positive.  Let
\(h:=V-U_{d'}\) be this lead, so \(h>V/2\) because \(U_{d'}\le U_0<V/2\).
Pathwise,
\(\Rfs\le\pos{U_{d'}-X}=\pos{M-h}\). For \(z<h\),
\(\pos{z-h}=0\). For \(z\ge h\), the inequality \((z-2h)^2\ge0\) is
equivalent to \(z-h\le z^2/(4h)\). Thus
\(\pos{z-h}\le z^2/(4h)\) for every real \(z\) and \(h>0\). Apply this
inequality with \(z=M\), and then use
\eqref{eq:conditional-fi-variance}, to obtain
\[
 \E[\Rfs]
 \le\frac{\E[M^2]}{4h}
 \le\frac{V}{4(V-U_{d'})}
 <\frac12
 <\sqrt{U_0\log K}.
\]
The final inequality holds because
\(U_0\log K>(\log K)^2\ge(\log4)^2>1/4\).  Combining
the two cases with the pathwise cap \(\Rfs\le U_0\) proves
\[
 \E[\Rfs]\le4\min\{U_0,\sqrt{U_0\log K}\}.
\]
The policy uses \(U_0=T-L_0\), which the learner may know because the
conditional minimax value lets the learner use the class parameter \(L_0\)
(Section~\ref{sec:objective}).
\end{proof}

\begin{proof}[Proof of Theorem~\ref{thm:full-information-app}]
Lemma~\ref{lem:full-information-conditional-lower} says that for every
full-information algorithm there is a reward sequence in the class
\(L^\star=L_0\) on which its expected regret is at least
\(2^{-20}\min\{U_0,\sqrt{U_0\log K}\}\). Taking the supremum over that
class and then the infimum over algorithms therefore gives the same lower
bound on \(\mathcal V^{\mathrm{fi}}_{K,D,T}(L_0)\).
Lemma~\ref{lem:full-information-conditional-upper} provides one algorithm
whose expected regret is at most
\(4\min\{U_0,\sqrt{U_0\log K}\}\) on every sequence in the class. Taking
the same supremum and infimum gives the upper bound. The two inequalities
prove the theorem.
\end{proof}

\FloatBarrier
\subsection{Detailed comparison guarantees}
\label{app:comparison-statements}

We first give the guarantees for EXP3 and its variants. Each method
uses its original update rule, and its parameters are chosen without
knowing \(L_0\).

\begin{proposition}[EXP3 variants when \(L_0\) is unknown]
\label{prop:existing-bounds}\label{prop:original-gain-bounds}
Fix \(K\ge2,T\ge2,D\ge1\) and choose \(d'\) before play. On every
reward sequence with \(L^\star=L_0\),
\begin{equation}
\begin{aligned}
 \E[\Rfs]
 &\lesssim \min\{U_0,\sqrt{KT\log(KT)}\}
 &&\text{for EXP3.P and adversarial MO-KS},\\
 \E[\Rfs]
 &\lesssim \min\{U_0,\sqrt{KT\log K}\}
 &&\text{for EXP3-IX},\\
 \E[\Rfs]
 &\lesssim \min\{U_0,\sqrt{K L_{d'}\log(KT)}+K\log(KT)\}
 &&\text{for GREEN-IX},\\
 \E[\Rfs]
 &\lesssim \min\{U_0,(K\log K)^{1/3}T^{2/3}\}
 &&\text{for gain-based EXP3}.
\end{aligned}
\label{eq:unknown-l0-standard-bounds}
\end{equation}
Original EXP3.1 satisfies
\[
 \E[\Rfs]
 \lesssim \min\left\{U_0,\sqrt{KU_0\log K}
 +\left(\frac{KU_0^3}{\log K}\right)^{1/4}+K\log K\right\}.
\]
\end{proposition}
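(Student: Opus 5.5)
The plan is to reduce every bound to a single fixed coordinate and then control the positive part of the scalar regret. By Lemma~\ref{lem:fixed-coordinate-comparison}, on every path \(\Rfs\le\min\{U_0,\pos{R_T^{d'}}\}\), since \(U^\star=U_0\) on the class \(L^\star=L_0\). So \(\E[\Rfs]\le\min\{U_0,\E[\pos{R_T^{d'}}]\}\), and it suffices to bound \(\E[\pos{R_T^{d'}}]\) for each scalar algorithm run on the rewards \(g_{i,t}=r^{i,t}_{d'}\). This is an oblivious scalar sequence with \(U_{d'}\le U_0\) and \(L_{d'}\ge L_0\).

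For EXP3.P (and adversarial MO-KS, which runs EXP3.P on a fixed coordinate), EXP3-IX, and GREEN-IX, I would use their published high-probability guarantees, which hold for every confidence level \(\delta\).
\begin{itemize}
\item EXP3.P: the standard tuning with \(\delta\) proportional to \(1/T\) gives \(R_T^{d'}\lesssim\sqrt{KT\log(KT/\delta)}\) with probability \(1-\delta\). Since \(R_T^{d'}\le T\) always, the failure event contributes at most \(\delta T=O(1)\), which gives \(\sqrt{KT\log(KT)}\).
\item EXP3-IX: the guarantee of \citet{neu2015explore} with a \(\delta\)-free tuning gives \(R_T^{d'}\lesssim\sqrt{KT\log K}+\sqrt{KT/\log K}\,\log(1/\delta)\) for all \(\delta\) simultaneously. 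Writing \(\E[\pos{R}]=\int_0^\infty\Pr\{R>x\}\,dx\) and integrating this tail gives \(\sqrt{KT\log K}\).
\item GREEN-IX: the first-order high-probability bound in terms of the best arm's loss \(L_{d'}\), with \(\delta=1/T\) and the cap \(T\) on the failure event, gives \(\sqrt{KL_{d'}\log(KT)}+K\log(KT)\).
\end{itemize}

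For gain-based EXP3 the key step is to bound the positive part, not just the mean. The classical analysis gives \(\E[R]\lesssim\gamma G+K\log K/\gamma\) with \(G\le T\). The exploration term \(\gamma\) forces roughly \(\gamma T/K\) pulls of each arm, so the realized regret concentrates around the mixed regret. I would write \(\pos{R}\le\pos{\bar R}+\pos{M}\), where \(\bar R\) is the regret of the deterministic-conditional mixed gains and \(M\) is a martingale. As in the proof of Lemma~\ref{lem:full-information-conditional-upper}, \(\E[\pos{M}]\le\sqrt{T}\). The potential argument bounds \(\bar R\) pathwise, but that argument involves the importance-weighted estimates. Their deviation has variance of order \(KT/\gamma\), which is where the balance \((K\log K)^{1/3}T^{2/3}\) arises. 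Choosing \(\gamma\asymp(K\log K/T)^{1/3}\) then gives the stated rate.

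For the original EXP3.1 I would follow the epoch analysis of Auer et al. Its epoch \(r\) ends when the estimated best gain exceeds \(g_r-K/\gamma_r\), and the number of epochs is governed by the estimated maximal gain, which is itself of order \(U_{d'}+\)deviation. The regret summed over epochs gives \(\sqrt{KU_0\log K}+K\log K\). The extra \((KU_0^3/\log K)^{1/4}\) term comes from the positive-part conversion: the martingale deviation of the estimated gains, of order \(\sqrt{KU_0/\gamma}\) with \(\gamma\asymp\sqrt{K\log K/U_0}\), is not absorbed into the epoch sum.

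The main obstacle is this last family of positive-part conversions, gain-based EXP3 and EXP3.1. Their published results control only signed expectations, so I must redo the potential argument pathwise and bound a separate estimator martingale, whose second moment I would bound by \(K\E[\sum_t 1/p_{i,t}]\le K^2T/\gamma\) restricted to the comparator arm. Done carefully, that gives \(KT/\gamma\) rather than \(K^2T/\gamma\). If this gives a worse constant exponent, I would fall back to a fixed horizon of \(T\) epochs and accept logarithmic losses hidden in \(\lesssim\).
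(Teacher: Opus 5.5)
\textbf{Verdict: there is a genuine gap, in the EXP3.1 part.} Everything before it follows the paper's proof. That covers the reduction $\Rfs\le\min\{U_0,\pos{R_T^{d'}}\}$; the fixed-$\delta$ argument with the cap on the failure event for EXP3.P/MO-KS and GREEN-IX; and tail integration for the $\delta$-free EXP3-IX tuning. For gain-based EXP3 it covers the pathwise potential inequality, the comparator-estimator martingale with second moment at most $KU_0/\gamma\le KT/\gamma$, and the nonnegative quadratic term handled in expectation. Your extra mixed-gain martingale there is harmless but unnecessary, since $\langle p_t,\widehat g_t\rangle=g_{a_t,t}$ exactly.

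\textbf{The gap in EXP3.1.} Here $\gamma_r=2^{-r}$ is random. The comparator error $M=\sum_t(g_{k,t}-\widehat g_{k,t})$ satisfies $\E[M^2]\le K\sum_t g_{k,t}^2\,\E[2^{r(t)}]$, so you must control the epoch index actually reached. Your claim that the estimated maximal gain is ``of order $U_{d'}$ plus deviation'' is circular. That deviation is a maximum over $K$ estimates whose variances are driven by the same random $\gamma_r$. Controlling it by a union bound over arms or restarts costs logarithmic factors, which the statement does not allow, since $\lesssim$ hides only numerical constants.

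Even the natural pathwise route is not enough on its own. The epoch inequality gives, at every prefix, $A\ge x-a\sqrt x-b$ with $x=\max_i\widehat G_i$, $a=8\sqrt{(e-1)K\log K}$ and $b=4K\log K+8(e-1)K$. The restart rule gives $2^r\le 1+2K/c+2\sqrt{x/c}$ with $c=K\log K/(e-1)$. But the only a priori bound on the collected reward is $A\le\sum_t\max_i g_{i,t}\le\min\{T,KU_0\}$. That yields $2^r\lesssim 1+\sqrt{U_0/\log K}$ and a term $(K^2U_0^3/\log K)^{1/4}$, which is a factor $K^{1/4}$ larger than the claimed one. Using $A\le T$ instead gives a $T$-dependent term.

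\textbf{The missing idea is a stopping time.} Let $\tau$ be the first round at which the collected reward in coordinate $d'$ reaches $G^\star=U_{d'}$, or $\tau=T$ if it never does.
\begin{itemize}
\item Rewards are nonnegative, so $\pos{G^\star-A_T}=\pos{\sum_{t\le\tau}g_{k,t}-A_\tau}$, and $A_\tau\le U_0+1$.
\item Since the epoch inequality holds at every prefix, $x\le H:=2(U_0+1)+a^2+2b$ at $\tau$. You need to verify this prefix property yourself, including for empty epochs and a truncated last epoch.
\item Hence $2^r\le s:=1+2K/c+2\sqrt{H/c}=O\bigl(1+\sqrt{U_0/(K\log K)}\bigr)$ deterministically, and $p_{k,t}\ge1/(Ks)$ for all $t\le\tau$.
\item The stopped martingale then has $\E[M_\tau^2]\le KsU_0$, which gives exactly the $(KU_0^3/\log K)^{1/4}$ term.
\item Jensen's inequality on $a\sqrt{\widehat G_{k,\tau}}$, with $\E[\widehat G_{k,\tau}]\le U_0$, gives the $\sqrt{KU_0\log K}$ term.
\end{itemize}

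Your fallback does not repair this. Using a fixed horizon of $T$ epochs and accepting hidden logarithmic losses changes the algorithm, and it would not prove the stated bound.
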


The first four bounds depend on \(T\) or on the chosen coordinate's
loss \(L_{d'}\). The EXP3.1 bound depends on \(U_0\), with extra terms
beyond \(\sqrt{KU_0\log K}\). These bounds can therefore be larger
than the bound in Theorem~\ref{thm:adaptive}.

For Tsallis-INF, EXP3++, and MO-US, the available guarantees bound
signed expected scalar regret. To bound the expectation of its positive part, we add a
term measuring the extra reward from choosing the best arm separately
on each round.

\begin{proposition}[Original Tsallis-INF, EXP3++, and MO-US]
\label{prop:tsallis-mous-positive}
For a coordinate \(d'\) fixed before play, define
\[
 S_{d'}:=\sum_{t=1}^T\max_i r_{d'}^{i,t}-U_{d'}.
\]
The quantity \(S_{d'}\) measures how much more reward we could obtain
by choosing the best arm on each round than by choosing the best fixed
arm. Each method satisfies
\begin{equation}
\begin{aligned}
 \E[\Rfs]
 &\lesssim \min\{U_0,\sqrt{KT}+S_{d'}\}
 &&\text{for Tsallis-INF (IW)},\\
 \E[\Rfs]
 &\lesssim \min\{U_0,\sqrt{KT}+K\log T+S_{d'}\}
 &&\text{for Tsallis-INF (RV)},\\
 \E[\Rfs]
 &\lesssim \min\{U_0,\sqrt{KT\log K}+S_{d'}\}
 &&\text{for EXP3++},\\
 \E[\Rfs]
 &\lesssim \min\{U_0,\sqrt{KT\log K}+K+S_{d'}\}
 &&\text{for MO-US}.
\end{aligned}
\label{eq:unknown-l0-switching-bounds}
\end{equation}
The same bounds hold with \(L_{d'}\) in place of \(S_{d'}\), because
\(S_{d'}\le L_{d'}\).
\end{proposition}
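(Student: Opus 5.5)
The reduction in Lemma~\ref{lem:scalarization} gives \(\Rfs\le\pos{R_T^{d'}}\) pathwise for the fixed coordinate \(d'\), together with the cap \(\Rfs\le U_0\). It therefore suffices to bound \(\E[\pos{R_T^{d'}}]\) for each algorithm on a deterministic reward sequence with \(L^\star=L_0\), and then take the minimum with \(U_0\). The difficulty is that published guarantees for Tsallis-INF, EXP3++ and MO-US bound only the signed expected regret \(\E[R_T^{d'}]\). I plan to bound the positive part by relating the realized regret to the switching comparator.

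Write \(G^{\max}_{d'}:=\sum_t\max_i r_{d'}^{i,t}\), so \(U_{d'}=G^{\max}_{d'}-S_{d'}\). Pathwise, \(R_T^{d'}=U_{d'}-A_{T,d'}=(G^{\max}_{d'}-A_{T,d'})-S_{d'}\). The quantity \(Y:=G^{\max}_{d'}-A_{T,d'}\) is nonnegative on every path, because each round's learner reward is at most that round's maximum. Hence \(\pos{R_T^{d'}}\le Y\), and
\[
\E[\pos{R_T^{d'}}]\le\E[Y]=\E[R_T^{d'}]+S_{d'}.
\]
Taking expectations removes the sign problem at the cost of exactly the term \(S_{d'}\). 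This is why \(S_{d'}\) appears additively in \eqref{eq:unknown-l0-switching-bounds}.

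It then remains to plug in the known signed-expectation guarantees for oblivious adversaries. Tsallis-INF with importance weighting gives \(O(\sqrt{KT})\) \citep{zimmert2021tsallis}, and the reduced-variance version gives \(O(\sqrt{KT}+K\log T)\). EXP3++ gives \(O(\sqrt{KT\log K})\) \citep{seldin2017improved}, and MO-US, being EXP3++ run on one coordinate with the additive constant from \citet{xu2023pareto}, gives \(O(\sqrt{KT\log K}+K)\). Each of these applies because the scalar sequence on \(d'\) lies in \([0,1]\) and is fixed before play. Combining each with the display above and with \(\Rfs\le U_0\) yields the four bounds.

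Finally, \(S_{d'}\le L_{d'}\) holds because \(G^{\max}_{d'}\le T\), so \(S_{d'}\le T-U_{d'}=L_{d'}\).

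The main obstacle is checking that each cited guarantee is really a bound on \(\E[R_T^{d'}]\) against the best fixed arm in hindsight for a fixed oblivious sequence. Some of them are stated as pseudo-regret or in loss form. For an oblivious adversary, pseudo-regret and expected regret coincide, and the gain-to-loss conversion \(1-r\) leaves regret unchanged. The one place this may fail is the reduced-variance estimator, whose constants depend on the range and the horizon, so I would recheck that guarantee specifically.
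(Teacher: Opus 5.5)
Your proposal is correct and follows the paper's proof. The paper's conversion lemma (Lemma~\ref{lem:signed-positive}) is exactly your inequality $\E[\pos{R_T^{d'}}]\le\E[R_T^{d'}]+S_{d'}$, written there as $\pos{R}=R+\pos{-R}$ with $\pos{-R}\le S_{d'}$; the paper then plugs in the signed pseudo-regret bounds of Zimmert--Seldin ($4\sqrt{KT}+1$ and $2\sqrt{KT}+10K\log T+16$) and of Seldin--Lugosi ($4\sqrt{KT\log K}$), and takes the minimum with $U_0$. The one place the paper goes beyond citation is MO-US: it reruns the EXP3++ potential argument, charging at most $K$ for the initial pulls and $1$ for initial estimates in $[0,1]$, to get $\E[R_T^{d'}]\le4\sqrt{KT\log K}+K+1$, which is the check you flagged as the remaining obstacle.
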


The algorithms do not need to know \(S_{d'}\) or \(L_{d'}\).
These quantities appear in the bounds and can be larger than
\(\min\{U_0,\sqrt{KU_0}\}\), even when \(L^\star=L_0\). For example, if \(D=1\), \(T\) is divisible by
\(K\), and the arms take turns receiving the only unit reward, then
\(U_0=T/K\) and \(S_{d'}=T-T/K\). The bound that includes \(S_{d'}\) can therefore be larger than the
optimal rate in this case.

We also consider versions that restart with fresh randomness after
blocks of a fixed length. Their bounds depend on \(K\) and \(T\),
so we no longer need the term \(S_{d'}\).

\begin{proposition}[Fixed-block versions of Tsallis-INF, EXP3++, and MO-US]
\label{prop:restarted-methods}
Algorithm~\ref{alg:block-restarts}, with block length
\(h=\min\{T,\lceil\sqrt{KT}\rceil\}\), does not use \(L_0\) and
satisfies
\begin{equation}
\begin{aligned}
 \E[\Rfs]
 &\lesssim \min\{U_0,K^{1/4}T^{3/4}\}
 &&\text{for Tsallis-INF (IW)},\\
 \E[\Rfs]
 &\lesssim \min\{U_0,K^{1/4}T^{3/4}\log(2T)\}
 &&\text{for Tsallis-INF (RV) and MO-US},\\
 \E[\Rfs]
 &\lesssim \min\{U_0,K^{1/4}T^{3/4}\sqrt{\log K}\}
 &&\text{for EXP3++}.
\end{aligned}
\label{eq:unknown-l0-restart-bounds}
\end{equation}
\end{proposition}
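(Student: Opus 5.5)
The plan is to prove each line of \eqref{eq:unknown-l0-restart-bounds} by working on a single block and then summing over blocks. By Lemma~\ref{lem:fixed-coordinate-comparison}, \(\Rfs\le\pos{R_T^{d'}}\) and \(\Rfs\le U_0\) on every path. It therefore suffices to bound \(\E[\pos{R_T^{d'}}]\) by the stated \(K^{1/4}T^{3/4}\) quantities and then take the minimum with the pathwise cap.

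Split \([T]\) into \(B=\lceil T/h\rceil\) consecutive blocks \(\mathcal I_1,\ldots,\mathcal I_B\) of length at most \(h\). Let \(i^\star\) be a best fixed arm on coordinate \(d'\), and write
\[
R_T^{d'}=\sum_{b=1}^B\Bigl(\sum_{t\in\mathcal I_b}r_{d'}^{i^\star,t}-\sum_{t\in\mathcal I_b}r_{d'}^{a_t,t}\Bigr)\le\sum_{b=1}^B R^{(b)},
\]
where \(R^{(b)}\) is the block regret against the best arm within block \(b\). The positive part is then at most \(\sum_b\pos{R^{(b)}}\).

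Each block runs a fresh copy with fresh randomness on an oblivious subsequence of length \(|\mathcal I_b|\le h\). The available guarantees of the base methods give \(\E[R^{(b)}]\lesssim\sqrt{Kh}\), with the additional \(\log\) factors for the RV, MO-US and EXP3++ variants. To pass to the positive part, I would use Proposition~\ref{prop:tsallis-mous-positive} applied within the block. This gives \(\E[\pos{R^{(b)}}]\lesssim\sqrt{Kh}\cdot(\text{factor})+S^{(b)}\). I would then replace \(S^{(b)}\) by the crude bound \(S^{(b)}\le|\mathcal I_b|\le h\). Alternatively, one could bound \(\pos{R^{(b)}}\le R^{(b)}+|\min(R^{(b)},0)|\), but the crude bound already suffices.

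Summing over the \(B\le 2T/h\) blocks gives
\[
\E[\pos{R_T^{d'}}]\lesssim \frac{T}{h}\bigl(\sqrt{Kh}\,\cdot\text{factor}+h\bigr).
\]
This is wasteful: the \(h\) term contributes \(T\). So the real point, and the step I expect to be the main obstacle, is not to use \(S^{(b)}\le h\) blockwise. Instead the bound must use that the comparator is \emph{fixed across blocks}. The signed decomposition \(R_T^{d'}=\sum_b(\text{block regret vs. }i^\star)\) shows that each summand has expectation at most \(\sqrt{Kh}\cdot\)factor, because the block guarantee holds against every fixed arm, \(i^\star\) included. Each summand is also a bounded quantity (in \([-h,h]\)), and the blocks are driven by independent fresh randomness. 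Hence \(R_T^{d'}\) is a sum of \(B\) independent terms (given the oblivious sequence) with mean at most \(B\sqrt{Kh}\cdot\)factor and variance at most \(Bh^2\). This yields
\[
\E[\pos{R_T^{d'}}]\le B\sqrt{Kh}\cdot\text{factor}+\sqrt{B}\,h\lesssim\sqrt{KT^2/h}\cdot\text{factor}+\sqrt{Th}.
\]

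With \(h=\lceil\sqrt{KT}\rceil\) (or \(T\) when smaller), both terms are of order \(K^{1/4}T^{3/4}\) times the method's factor: \(1\) for Tsallis-INF (IW), \(\log(2T)\) for RV and MO-US (from \(K\log h\) and \(K\) terms, absorbed since \(K\le\sqrt{KT}\) when \(h<T\)), and \(\sqrt{\log K}\) for EXP3++. The case \(h=T\), that is \(T\le K\), is handled trivially by \(\Rfs\le U_0\le T\le K^{1/4}T^{3/4}\). Finally, I would combine with the cap \(\Rfs\le U_0\) to conclude the stated minimum.
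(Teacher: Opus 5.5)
Your final argument is correct and is essentially the paper's proof. The paper's block-restart lemma also keeps one full-horizon best arm as comparator in every block, uses the independence of the block copies so that the per-block signed bounds control the mean and the per-block variances (at most the squared block length) control the fluctuation via \(\E[\pos{X}]\le\pos{\E X}+\sqrt{\operatorname{Var}X}\), and then substitutes \(h=\min\{T,\lceil\sqrt{KT}\rceil\}\). One small imprecision in your trivial case: \(h=T\) does not mean exactly \(T\le K\) (e.g.\ \(K=2,T=3\)), but it forces \(T-1<\sqrt{KT}\), hence \(K>T/4\), so the cap \(\Rfs\le U_0\le T<\sqrt2\,K^{1/4}T^{3/4}\) still covers that case up to a constant.
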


The \(T^{3/4}\) terms can be larger than the optimal rate, so the
bounds for these restarted methods are weaker than the bound in
Theorem~\ref{thm:adaptive}.

The bounds also hold when the algorithms use a fixed weighted average
of coordinates. For GREEN-IX, Tsallis-INF, EXP3++, and MO-US, we then compute the
loss or switching term from these weighted rewards.

\subsection{Algorithm specifications}
\label{app:algorithms}

This section gives pseudocode only for algorithms introduced or modified
in Section~\ref{sec:alg}. The order follows the main text. Every method
uses rewards already observed, and the coordinate or fixed weighted
average is chosen before play. 

\subsubsection{Scalar Poly-INF}

Both optimal algorithms use the scalar policy below. Its scalar input is
\(g_{i,t}=r_{d'}^{i,t}\) for a coordinate \(d'\) fixed before play. The
policy and its logarithmic reward estimator are from
\citet{audibert2010regret}.

\begin{algorithm}[!ht]
  \caption{\(\polyinf(B)\) with a logarithmic reward estimator}
  \label{alg:polyinf-full}
  \begin{algorithmic}[1]
    \Require Number of arms \(K\ge2\), scale parameter \(B\ge81K\)
    \State Set \(\eta\gets2\sqrt B\),
    \(\gamma\gets3\sqrt{K/B}\), and
    \(\lambda\gets1/\sqrt{2KB}\).
    \State Define \(\psi:(-\infty,0)\to(0,\infty)\) by
    \(\psi(x)=(\eta/(-x))^2+\gamma/K\).
    \State Set \(V_{i,0}\gets0\) and \(p_{i,1}\gets1/K\) for all \(i\).
    \For{local rounds \(s=1,2,\ldots\)}
      \State Draw \(I_s\sim p_s\), receive reward \(g_{I_s,s}\in[0,1]\).
      \State Set \(v_{i,s}\gets0\) for every \(i\ne I_s\), and set
      \[
        v_{I_s,s}
        \gets-\frac1\lambda
        \log\!\left(1-\frac{\lambda g_{I_s,s}}{p_{I_s,s}}\right).
      \]
      \State Set \(V_{i,s}\gets V_{i,s-1}+v_{i,s}\) for every \(i\).
      \State Find the unique \(z_s>\max_iV_{i,s}\) such that
      \(\sum_{i=1}^K\psi(V_{i,s}-z_s)=1\).
      \State Set \(p_{i,s+1}\gets\psi(V_{i,s}-z_s)\) for every \(i\).
    \EndFor
  \end{algorithmic}
\end{algorithm}
\FloatBarrier

The estimator and normalization in Algorithm~\ref{alg:polyinf-full} are
well defined. Since \(p_{I_s,s}\ge\gamma/K\),
\[
 \frac{\lambda g_{I_s,s}}{p_{I_s,s}}
 \le\frac{\lambda K}{\gamma}=\frac1{3\sqrt2}<1,
\]
so the logarithm has a positive argument. The map
\(c\mapsto\sum_i\psi(V_{i,s}-c)\) is continuous and strictly decreasing
on \((\max_iV_{i,s},\infty)\), diverges at the left endpoint, and tends
to \(\gamma\le1/3\) at infinity. Hence \(z_s\) exists and is unique. It
lies in
\[
 \left(\max_iV_{i,s}+\eta,\;
 \max_iV_{i,s}+\eta\sqrt{K/(1-\gamma)}\right].
\]
At the left endpoint the largest summand exceeds one. At the right
endpoint each summand is at most \((1-\gamma)/K+\gamma/K\), so bisection
on this interval computes \(z_s\).

\subsubsection{Algorithms used when \texorpdfstring{\(L_0\)}{L0} is known}

The first algorithm receives \(B=U_0=T-L_0\). It runs Poly-INF when the
supplied reward scale is large enough for its guarantee and otherwise
uses the pathwise bound \(\Rfs\le U_0\).

\begin{algorithm}[!ht]
  \caption{Fixed-coordinate policy when \(L_0\) is known}
  \label{alg:known-scale}
  \begin{algorithmic}[1]
    \Require Arms \(K\ge2\), horizon \(T\), scale parameter \(B\in[0,T]\),
    precommitted coordinate \(d'\in[D]\) (e.g., \(d'=1\))
    \If{\(B\ge81K\)}
      \State Start a fresh copy of \(\polyinf(B)\)
      (Algorithm~\ref{alg:polyinf-full}).
      \For{\(t=1,\ldots,T\)}
        \State Draw \(a_t\) from the arm distribution maintained by the
        current copy and observe \(r^{a_t,t}\).
        \State Feed back the scalar reward \(r_{d'}^{a_t,t}\).
      \EndFor
    \Else
      \State Play arm 1 on every round.
    \EndIf
  \end{algorithmic}
\end{algorithm}
\FloatBarrier

Section~\ref{sec:known-l0} next considers modified EXP3 and gain-based
EXP3. Modified EXP3 uses the logarithmic estimator below. Gain-based
EXP3 keeps the original update and changes only the theoretically chosen
exploration parameter.

\begin{algorithm}[!ht]
  \caption{EXP3 with logarithmic reward estimates}
  \label{alg:exp3-log-rewards}
  \begin{algorithmic}[1]
    \Require Arms \(K\ge2\), horizon \(T\), reward bound \(B=U_0=T-L_0\),
      coordinate \(d'\) chosen before play
    \State Set \(H=\log(3K)\).
    \If{\(B\le4KH\)}
      \State Play one fixed arm on every round.
    \Else
      \State Set \(\gamma=2\sqrt{KH/B}\),
        \(\eta=2\sqrt{H/(KB)}\), and \(\beta=\sqrt{H/(2KB)}\).
      \State Initialize \(w_{i,1}=1\) for every arm \(i\).
      \For{\(t=1,\ldots,T\)}
        \State Set \(q_{i,t}=w_{i,t}/\sum_jw_{j,t}\) and
          \(p_{i,t}=(1-\gamma)q_{i,t}+\gamma/K\) for every \(i\).
        \State Draw \(a_t\sim p_t\) and observe \(g_t=r_{d'}^{a_t,t}\).
        \State Set \(v_{a_t,t}=-\beta^{-1}\log(1-\beta g_t/p_{a_t,t})\)
          and \(v_{i,t}=0\) for \(i\ne a_t\).
        \State Set \(w_{i,t+1}=w_{i,t}\exp(\eta v_{i,t})\) for every \(i\).
      \EndFor
    \EndIf
  \end{algorithmic}
\end{algorithm}
\FloatBarrier

\begin{algorithm}[!ht]
 \caption{Gain-based EXP3 with a supplied exploration parameter}
 \label{alg:original-exp3}
 \begin{algorithmic}[1]
  \Require \(K,T\), \(\gamma\in(0,1]\), a coordinate or fixed weights
  \State Set \(\eta=\gamma/K\) and \(w_i=1\) for all \(i\).
  \For{\(t=1,\ldots,T\)}
   \State Set \(q_i=w_i/\sum_jw_j\) and \(p_i=(1-\gamma)q_i+\gamma/K\).
   \State Draw \(a_t\sim p\) and observe its scalar reward \(g_{a_t,t}\).
   \State Set \(\widehat g_{i,t}=g_{a_t,t}\one\{i=a_t\}/p_i\).
   \State Update \(w_i\gets w_i\exp(\eta\widehat g_{i,t})\) for all \(i\).
  \EndFor
 \end{algorithmic}
\end{algorithm}
\FloatBarrier

EXP3.P, EXP3-IX, and the adversarial branch of MO-KS retain their
published updates. Their parameter choices and the resulting bounds are
given with their proofs in Appendix~\ref{app:known-l0-high-probability}.

\subsubsection{Algorithms used when \texorpdfstring{\(L_0\)}{L0} is unknown}

Reward-doubling Poly-INF starts from \(B=81K\) and doubles this value only
when the reward collected by the current copy reaches \(B\).

\begin{algorithm}[!ht]
  \caption{Reward-doubling \(\polyinf\)}
  \label{alg:adaptive}
  \begin{algorithmic}[1]
    \Require Arms \(K\ge2\), horizon \(T\), precommitted coordinate
    \(d'\in[D]\) (e.g., \(d'=1\))
    \State Set \(e\gets0\), \(B_e\gets81K\), \(Y_e\gets0\), and start a
    fresh, independently randomized \(\polyinf(B_e)\).
    \For{\(t=1,\ldots,T\)}
      \State Draw \(a_t\) from the current copy, observe
      \(g_t=r_{d'}^{a_t,t}\), update the copy, and set
      \(Y_e\gets Y_e+g_t\).
      \If{\(Y_e\ge B_e\) and \(t<T\)}
        \State Set \(e\gets e+1\), \(B_e\gets2B_{e-1}\), \(Y_e\gets0\),
        and start a fresh independent \(\polyinf(B_e)\) on round \(t+1\).
      \EndIf
    \EndFor
  \end{algorithmic}
\end{algorithm}
\FloatBarrier

The unknown-\(L_0\) comparison also considers fixed-block versions of
Tsallis-INF, EXP3++, and MO-US. The wrapper below restarts the selected
published algorithm independently in each block.

\begin{algorithm}[!ht]
 \caption{Fixed-block restarts of a scalar algorithm}
 \label{alg:block-restarts}
 \begin{algorithmic}[1]
  \Require Scalar algorithm \(\mathcal A\), \(K,T\), integer
   \(h\in\{1,\ldots,T\}\), a coordinate or fixed weights
  \For{blocks of consecutive rounds of length \(h\), with a shorter last block}
   \State Initialize a fresh copy of \(\mathcal A\), with independent
    random choices and its local time set to one.
   \State If \(\mathcal A\) needs a horizon, supply the length of this block.
   \State Run that copy for the entire block, supplying only its chosen
    arm's scalar reward or loss on each round.
  \EndFor
 \end{algorithmic}
\end{algorithm}
\FloatBarrier

\FloatBarrier
\subsection{Upper bounds when \texorpdfstring{\(L_0\)}{L0} is known}
\label{app:upper-proofs}

We follow Section~\ref{sec:known-l0}. We first record the scalar
reduction used by all comparison algorithms and the Poly-INF guarantee
used by both optimal algorithms. We then prove
Theorem~\ref{thm:conditional-upper} and analyze the methods in
Proposition~\ref{prop:known-l0-comparisons} in their main-text order.

\begingroup
\label{app:comparison}

\subsubsection{Common scalar reduction}

The scalar algorithms compared in Section~\ref{sec:alg} observe only
\(g_{i,t}=r_{d'}^{i,t}\) in the coordinate fixed before play.  A method
written for losses instead receives \(\ell_{i,t}=1-g_{i,t}\).  These
conventions give the same realized scalar regret,
\[
 R_T^{d'}=\max_i\sum_t g_{i,t}-\sum_t g_{a_t,t}
 =\sum_t\ell_{a_t,t}-\min_i\sum_t\ell_{i,t}.
\]
The best-arm loss in this scalar problem is \(L_{d'}\), not necessarily
\(L^\star\).  On the exact class, \(U_0=T-L_0\) and
\[
 0\le\Rfs\le\min\{U_0,\pos{R_T^{d'}}\}.
\]
This inequality is the basis for all upper-bound conversions below.

\subsubsection{Signed regret and positive regret}
\label{app:positive-conversion}

Fix the scalar rewards \(g_{i,t}\in[0,1]\) before play. Write
\(G_i=\sum_tg_{i,t}\), \(G^\star=\max_iG_i\),
\(A=\sum_tg_{a_t,t}\), and \(L_g=T-G^\star\).
For one coordinate, \(L_g=L_{d'}\). For a fixed weighted average it
is the best-arm loss for that average, not necessarily \(L^\star\).
Throughout these comparisons \(G^\star\le U_0\).

\begin{lemma}[Conversion with a bound on negative regret]
\label{lem:signed-positive}
Suppose a scalar algorithm satisfies \(\E[G^\star-A]\le B\), where
\(B\ge0\). Define \(S_g=\sum_t\max_i g_{i,t}-G^\star\). Then
\[
 \E[\Rfs]\le\min\{U_0,B+S_g\}
 \le\min\{U_0,B+L_g\}.
\]
\end{lemma}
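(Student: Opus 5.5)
The plan is to control the negative part of the scalar regret pathwise, and then convert the signed bound into a bound on the positive part. Write \(R=G^\star-A\) for the realized scalar regret on the fixed coordinate (or the fixed weighted average). The identity \(\pos{R}=R+\pos{-R}\) gives
\[
 \E[\pos{R}]=\E[R]+\E[\pos{-R}]\le B+\E[\pos{A-G^\star}].
\]
Hence it suffices to show that \(A-G^\star\le S_g\) on every play path. Since \(g_{a_t,t}\le\max_ig_{i,t}\) for every round, summing gives
\[
 A\le\sum_t\max_ig_{i,t}=G^\star+S_g.
\]
So \(\pos{A-G^\star}\le S_g\), with \(S_g\ge0\) deterministic because the rewards are fixed before play. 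This yields \(\E[\pos{R}]\le B+S_g\).

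Next we transfer this to Pareto regret. By Lemma~\ref{lem:scalarization}, on every path \(\Rfs\le\pos{R}\) (in either the coordinate or the weighted form) and \(\Rfs\le U^\star\). On the exact class, \(U^\star=T-L^\star=U_0\). Taking expectations gives \(\E[\Rfs]\le U_0\) and \(\E[\Rfs]\le\E[\pos{R}]\le B+S_g\). Hence \(\E[\Rfs]\le\min\{U_0,B+S_g\}\).

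Finally, we show \(S_g\le L_g\). Because \(\max_ig_{i,t}\le1\), we have \(\sum_t\max_ig_{i,t}\le T\), so \(S_g\le T-G^\star=L_g\). The minimum is monotone, which gives the second inequality.

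No step is really delicate. The only point needing care is the first one: we must bound the negative part of the regret pathwise rather than in expectation. The round-by-round domination by the per-round maximum is what makes this work. It requires the oblivious (fixed-before-play) rewards so that \(S_g\) is a constant. It also requires that in the weighted case \(g_{i,t}\in[0,1]\) still holds, so that \(S_g\le L_g\).
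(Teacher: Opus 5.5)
Your proposal is correct and follows the paper's proof essentially step for step. Both use the identity \(\pos{R}=R+\pos{-R}\), the pathwise bound \(A\le\sum_t\max_ig_{i,t}\) giving \(\pos{-R}\le S_g\le T-G^\star=L_g\), and then the pathwise facts \(\Rfs\le\pos{R}\) and \(\Rfs\le U_0\).
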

\begin{proof}
Put \(R=G^\star-A\). Since \(A\le\sum_t\max_i g_{i,t}\),
\(\pos{-R}\le S_g\le T-G^\star=L_g\).
The identity \(\pos R=R+\pos{-R}\) gives
\[
 \E[\pos{R}]=\E[R]+\E[\pos{-R}]\le B+S_g\le B+L_g.
\]
Separately, the pathwise cap gives \(\E[\Rfs]\le U_0\). Combining this
cap with \(\Rfs\le\pos R\) proves both minima in the lemma.
\end{proof}

The extra term cannot simply be dropped. It measures how much an
algorithm could gain by switching arms, relative to the best fixed arm.
It is zero if one arm has the largest scalar reward on every round.
The lemma gives a valid bound for an unchanged algorithm, but in general
does not give a useful bound uniform over \(L^\star=L_0\).

\endgroup

\subsubsection{The scalar Poly-INF bound}
\label{app:polyinf}

Integrating the scalar tail bound over confidence levels gives the
positive-part expectation bound used by Algorithm~\ref{alg:known-scale}.
For reward doubling, we use the deterministic inequality within the
source proof and control the estimation error across all epochs together.
Neither that inequality nor the full scalar tail bound requires the
scale parameter \(B\) to exceed the best arm's reward. This matters because
an epoch can start with a value of \(B\) below that reward. The assumption
that all rewards are fixed before play makes the full-horizon best arm and
the scale index used below deterministic.

For a value \(B\), \(\polyinf(B)\) denotes the policy in
Algorithm~\ref{alg:polyinf-full}, with exponent \(2\) and parameters
\((\eta,\gamma,\lambda)=(2\sqrt B,\,3\sqrt{K/B},\,1/\sqrt{2KB})\).
The next lemma states the scalar guarantee used by both algorithms.

\begin{lemma}[Positive-regret bound for \(\polyinf\)]
\label{lem:polyinf-main}
For a deterministic \(K\)-armed scalar reward sequence \(g_{i,t}\in[0,1]\)
of length \(n\ge K\), write \(G_n^\star:=\max_i\sum_{t=1}^n g_{i,t}\) and
\(R_n:=G_n^\star-\sum_{t=1}^n g_{a_t,t}\).  For every
\(B\ge81K\) and every \(\delta\in(0,1)\), the policy \(\polyinf(B)\), which
does not depend on \(\delta\), satisfies
\begin{equation}
 \Prob\!\left\{
 R_n>
 4.5G_n^\star\sqrt{\frac KB}+4\sqrt{KB}
 +\sqrt{2KB}\log(\delta^{-1})
 \right\}\le\delta.
 \label{eq:polyinf-full-tail}
\end{equation}
If \(G_n^\star\le B\), then
\begin{equation}
  \E\!\left[\pos{R_n}\right]
  \le(8.5+\sqrt2)\sqrt{KB}<10\sqrt{KB}.
  \label{eq:polyinf-positive}
\end{equation}
\end{lemma}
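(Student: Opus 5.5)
The plan has two parts. First, obtain the tail bound \eqref{eq:polyinf-full-tail} by importing the high-probability analysis of \citet[Theorem~18]{audibert2010regret} for the INF policy with \(\psi(x)=(\eta/(-x))^2+\gamma/K\) and the logarithmic estimator with parameter \(\lambda\). Then integrate that tail to get \eqref{eq:polyinf-positive}.

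For the tail bound, I will not use the final, pre-optimized form of their theorem. Instead I will retrace its deterministic core. That core is a potential-based inequality, valid pathwise. It bounds \(G_n^\star-\sum_t g_{a_t,t}\) by three pieces: a term of order \(\gamma G_n^\star\) coming from forced exploration; a term of order \(\eta\sqrt K+nK/\eta\) type, which with these parameters becomes order \(\sqrt{KB}\); and the estimation gap \(\max_i(G_{i}-V_{i,n})\) between true and estimated cumulative rewards.

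The logarithmic estimator satisfies \(\E[\exp(\lambda(g_{i,s}-v_{i,s}))\mid\mathcal F_{s-1}]\le1\). This is the standard computation, using \(\lambda g/p\le 1/(3\sqrt2)<1\) as verified after Algorithm~\ref{alg:polyinf-full}. So \(\exp(\lambda(G_{i,s}-V_{i,s}))\) is a supermartingale for each \(i\). Markov's inequality combined with a union bound, or better with a sum over arms weighted by \(p\), yields \(\max_i(G_i-V_{i,n})\le\lambda^{-1}\log(\cdot/\delta)=\sqrt{2KB}\log(\delta^{-1})\) plus a lower-order term with probability \(1-\delta\). The residual variance-type term \(\lambda\sum_t\sum_i v_{i,t}\)-like quantities contribute another \(O(\sqrt{KB})\).

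Plugging in \(\eta=2\sqrt B\), \(\gamma=3\sqrt{K/B}\), \(\lambda=1/\sqrt{2KB}\) and collecting constants should give the coefficients \(4.5G_n^\star\sqrt{K/B}\) and \(4\sqrt{KB}\). Two features matter for later use. Nowhere is \(G_n^\star\le B\) used: \(B\) enters only through the parameters, and \(G_n^\star\) appears explicitly. Also, \(n\ge K\) and \(B\ge81K\) are needed so that \(\gamma\le1/3\) and the normalization is well defined.

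The main obstacle is the constant bookkeeping in this step. Specifically, I must check that the pathwise potential inequality in the source proof does not secretly assume \(G_n^\star\le B\) or \(n\le B\). If it does, I would redo the final step of their argument, keeping \(\gamma G_n^\star\) and \(\lambda^{-1}\log\delta^{-1}\) separate rather than bounding \(G_n^\star\) by \(B\).

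For the expectation bound, assume \(G_n^\star\le B\). Then the deterministic part is at most \(4.5\sqrt{KB}+4\sqrt{KB}=8.5\sqrt{KB}\). Write \(c=8.5\sqrt{KB}\) and \(s=\sqrt{2KB}\). The tail gives \(\Prob\{R_n>c+s\log(1/\delta)\}\le\delta\), that is, \(\Prob\{R_n>c+x\}\le e^{-x/s}\) for \(x>0\); this holds for all \(\delta\) because the policy does not depend on \(\delta\). Then
\[
\E[\pos{R_n}]\le c+\int_0^\infty\Prob\{R_n>c+x\}\,dx\le c+s=(8.5+\sqrt2)\sqrt{KB},
\]
and \(8.5+\sqrt2<10\) finishes the proof.
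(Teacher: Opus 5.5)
Your integration step is the same as the paper's: with $a=8.5\sqrt{KB}$ and $b=\sqrt{2KB}$, the layer-cake identity gives $a+b$, and this is legitimate because $\polyinf(B)$ does not depend on $\delta$. For \eqref{eq:polyinf-full-tail}, the paper does not re-derive anything. It cites the deterministic-adversary bound \citet[Theorem~18, Eq.~(18)]{audibert2010regret} as stated, after checking that it does not assume $G_{\max}\le G_0$. Citing it the same way would complete your proof. The retracing you sketch instead has two steps that would not produce the stated bound.

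The concentration step is the real gap. A union bound on $\max_i(\sum_{t\le n}g_{i,t}-V_{i,n})$ gives $\lambda^{-1}\log(K/\delta)=\sqrt{2KB}\,(\log K+\log\delta^{-1})$. The extra $\sqrt{2KB}\log K$ is not lower order: it exceeds $4\sqrt{KB}$ once $K\ge17$, and it is absent from \eqref{eq:polyinf-full-tail}. Weighting by $p$ does not repair this. A combination $\sum_iw_{i,t}\exp(\lambda(\cdot))$ with time-varying predictable weights is not a supermartingale, and fixed uniform weights just return $\log(K/\delta)$.

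The missing idea is that the reward sequence is deterministic, so a best arm $k^\star$ is fixed before play. You then need only $G_n^\star-V_{k^\star,n}\le\lambda^{-1}\log\delta^{-1}$. This is Markov's inequality for the single supermartingale $\exp\bigl(\lambda\sum_{s\le t}(g_{k^\star,s}-v_{k^\star,s})\bigr)$, followed by $\max_iV_{i,n}\ge V_{k^\star,n}$ in the potential inequality. This single-arm argument is what yields $\log\delta^{-1}$ rather than $\log(K\delta^{-1})$ in the deterministic-adversary bound. The paper reuses the same device in Lemma~\ref{lem:monotone-supermartingale}.

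Your description of the deterministic core also fails in a way that matters. A second-order term of type $nK/\eta$ leaves $n\sqrt{K/B}$ in the bound, which is not $O(\sqrt{KB})$ when $n\gg B$, whereas \eqref{eq:polyinf-full-tail} contains no $n$. The inequality actually used is gain-relative. It is $(1-\gamma)\max_iV_{i,n}-(1+\gamma\xi)\sum_tp_{a_t,t}v_{a_t,t}\le2\eta\sqrt K=4\sqrt{KB}$, which is the form \eqref{eq:source-eight}. It is combined with the pathwise bound $p_{a_t,t}v_{a_t,t}\le g_{a_t,t}+\nu\lambda\sum_iv_{i,t}$ and with $\sum_iV_{i,n}\le K\max_iV_{i,n}$. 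All second-order terms then become $O(\sqrt{K/B})$ times the gain, which is where $4.5G_n^\star\sqrt{K/B}$ comes from.

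In particular, your variance-type term satisfies $\lambda\sum_{t,i}v_{i,t}\le\lambda K\max_iV_{i,n}$ and belongs in that multiplicative term. Calling it $O(\sqrt{KB})$ tacitly assumes $G_n^\star\lesssim B$. That contradicts your correct remark that \eqref{eq:polyinf-full-tail} must not use $G_n^\star\le B$.
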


\begin{proof}[Proof of Lemma~\ref{lem:polyinf-main}]
The deterministic-adversary part of
\citet[Theorem~18, Eq.\ (18)]{audibert2010regret} considers a fixed gain
matrix in \([0,1]^{K\times n}\), where \(n\ge K\ge2\), and a parameter
\(G_0\ge81K\). With
\[
 q=2,\qquad \eta=2\sqrt{G_0},\qquad
 \gamma=3\sqrt{K/G_0},\qquad
 \beta_{\mathrm{AB}}=\frac1{\sqrt{2KG_0}},
\]
that theorem gives, with probability at least \(1-\delta\),
\[
 R_n^{\mathrm{AB}}
 \le4.5G_{\max}^{\mathrm{AB}}\sqrt{\frac K{G_0}}
     +4\sqrt{KG_0}
     +\sqrt{2KG_0}\log(\delta^{-1}).
\]
The source statement does not require
\(G_{\max}^{\mathrm{AB}}\le G_0\). Its notation corresponds to ours as
\[
 G_0\mapsto B,\quad
 G_{\max}^{\mathrm{AB}}\mapsto G_n^\star,\quad
 I_t^{\mathrm{AB}}\mapsto a_t,\quad
 R_n^{\mathrm{AB}}\mapsto G_n^\star-\sum_{t=1}^n g_{a_t,t},\quad
 \beta_{\mathrm{AB}}\mapsto\lambda.
\]
The update in Algorithm~\ref{alg:polyinf-full} is the same policy after
these substitutions. It depends on \(K\) and \(B\), but not on \(n\) or
\(\delta\), so every confidence level concerns the same algorithm. The
source bound is therefore exactly \eqref{eq:polyinf-full-tail}.

It remains to prove \eqref{eq:polyinf-positive}.  If \(G_n^\star\le B\),
the first two terms in
\eqref{eq:polyinf-full-tail} are at most
\(4.5\sqrt{KB}+4\sqrt{KB}=8.5\sqrt{KB}\).  Set
\(a=8.5\sqrt{KB}\) and \(b=\sqrt{2KB}\).  For \(x>a\), substitute
\(\delta=\exp(-(x-a)/b)\in(0,1)\) into \eqref{eq:polyinf-full-tail} to
obtain \(\Prob\{R_n>x\}\le\exp(-(x-a)/b)\).  The layer-cake identity,
which writes the expectation of a nonnegative random variable as the
integral of its tail probabilities, gives
\[
 \E\!\left[\pos{R_n}\right]
 =\int_0^\infty\Prob\{R_n>x\}\,dx
 \le a+\int_a^\infty e^{-(x-a)/b}\,dx
 =a+b,
\]
which is \eqref{eq:polyinf-positive}.
\end{proof}

Lemma~\ref{lem:polyinf-main} speaks about a scalar bandit problem, so we
record how its quantities read once the scalar rewards are those of the
precommitted coordinate.
When Algorithm~\ref{alg:known-scale} runs \(\polyinf(B)\) on the scalar
rewards \(g_{i,t}=r_{d'}^{i,t}\) over the full horizon \(n=T\), the scalar
quantities in Lemma~\ref{lem:polyinf-main} are
\begin{equation}
 G_T^\star=\max_i\sum_{t=1}^Tr_{d'}^{i,t}\le U^\star,
 \qquad
 R_T=R_T^{d'},
 \qquad
 \Rfs\le\pos{R_T^{d'}}.
 \label{eq:fixed-dim-reduction}
\end{equation}
The first identity is the definition of scalar regret for this coordinate.
The inequality \(G_T^\star\le U^\star\) follows because \(U^\star\) is the maximum
cumulative reward over all arm-coordinate pairs. The last inequality is
the coordinate case of Lemma~\ref{lem:scalarization}.

\subsubsection{Known-scale Poly-INF}
\label{app:known-upper}

We now apply the scalar guarantee with the supplied value \(U_0\).

\begin{proof}[Proof of Theorem~\ref{thm:conditional-upper}]
Knowing \(L_0\) determines \(B=U_0\) for \(\polyinf\) and also gives a
pathwise cap.  The two branches of Algorithm~\ref{alg:known-scale} use one
of these facts each.  On the class \(L^\star=L_0\), \eqref{eq:LstarU} gives
\(\max_i\sum_tr_{d'}^{i,t}\le U^\star=U_0\) for every coordinate \(d'\), in
particular for the coordinate \(d'\) fixed before play on which
Algorithm~\ref{alg:known-scale} runs, so the best cumulative reward the
policy faces is at most the scale \(B=U_0\), and the pathwise bound
\eqref{eq:fixed-coordinate} gives \(\Rfs\le U_0\) on every reward sequence
in the class and every play path.  If \(U_0\ge81K\),
Algorithm~\ref{alg:known-scale} runs \(\polyinf(U_0)\) over the full
horizon \(T\ge U_0\ge K\), so Lemma~\ref{lem:polyinf-main} applies, and
its expectation bound \eqref{eq:polyinf-positive} together with the
reduction \eqref{eq:fixed-dim-reduction} gives
\(\E[\Rfs]<10\sqrt{KU_0}\), where \(\sqrt{KU_0}=\min\{U_0,\sqrt{KU_0}\}\)
because \(U_0\ge K\).  If \(U_0<81K\), Algorithm~\ref{alg:known-scale}
plays a fixed arm and the pathwise bound yields
\[
 \Rfs\le U_0\le9\min\{U_0,\sqrt{KU_0}\}.
\]
When \(U_0\le K\), the minimum is \(U_0\), so the inequality holds with
factor one. When \(K<U_0<81K\), the minimum is \(\sqrt{KU_0}\), and
dividing \(U_0\) by \(\sqrt{KU_0}\) gives \(\sqrt{U_0/K}<9\).  In both
branches \(\E[\Rfs]\le10\min\{U_0,\sqrt{KU_0}\}\), which proves the
theorem.
\end{proof}

The remaining proofs follow Proposition~\ref{prop:known-l0-comparisons}.
They analyze modified EXP3, gain-based EXP3, EXP3.P and MO-KS, and
EXP3-IX in that order.

\begingroup
\subsubsection{EXP3 with logarithmic reward estimates}
\label{app:logarithmic-exp3}
\label{app:known-comparisons}

Algorithm~\ref{alg:exp3-log-rewards} uses the exponential-weights policy
and logarithmic reward estimate of
\citet[Theorem~21]{audibert2010regret}.  The fixed-arm branch covers the
values of \(B\) excluded by that theorem.  It changes neither the
guarantee in Proposition~\ref{prop:exp3-log-rewards} nor the information
available to the algorithm.  Only the chosen arm's reward is used in
each update.

\begin{proof}[Modified-EXP3 part of
Proposition~\ref{prop:known-l0-comparisons}]
We prove the explicit bound
\[
 \E[\Rfs]\le
 \min\!\left\{U_0,(3+\sqrt2)\sqrt{KU_0\log(3K)}\right\}.
\]
Write \(B=U_0\), \(H=\log(3K)\), and
\(G_T^\star=\max_i\sum_tg_{i,t}=U_{d'}\le B\).
If \(B=0\), the bound \(\Rfs\le B\) proves the claim.  If
\(0<B\le4KH\), the same bound applies to the fixed-arm rule, and
\[
 B\le2\sqrt{KBH}\le(3+\sqrt2)\sqrt{KBH}.
\]
Thus the minimum in this explicit bound equals \(B\) in this case.

Suppose now that \(B>4KH\), so \(0<\gamma<1\).  The logarithmic
estimate is well defined because
\[
 0\le\frac{\beta g_t}{p_{a_t,t}}
 \le\frac{\beta K}{\gamma}=\frac1{2\sqrt2}<1.
\]
With \(G_0=B\), Algorithm~\ref{alg:exp3-log-rewards} is the policy in
\citet[Theorem~21]{audibert2010regret}.  Indeed, normalizing the
exponential part of its probability rule gives
\(p_{i,t}=(1-\gamma)w_{i,t}/\sum_jw_{j,t}+\gamma/K\).
For every \(\delta\in(0,1)\), that theorem gives
\[
 \Prob\!\left\{
 R_T^{d'}>
 \frac52G_T^\star\sqrt{\frac{KH}{B}}
 +\frac12\sqrt{KBH}
 +\sqrt{\frac{2KB}{H}}\log\frac K\delta
 \right\}\le\delta.
\]
The parameters do not depend on \(\delta\), so all these bounds concern
the same algorithm.  Since \(G_T^\star\le B\), define
\[
 s=\sqrt{KBH},\qquad b=\sqrt{\frac{2KB}{H}},\qquad
 a=3s+b\log K.
\]
Substituting \(G_T^\star\le B\) and
\(\log(K/\delta)=\log K+\log(1/\delta)\) into the preceding display gives
\(\Prob\{R_T^{d'}>a+b\log(1/\delta)\}\le\delta\).
For \(x>a\), set \(\delta=e^{-(x-a)/b}\). Then
\(\Prob\{R_T^{d'}>x\}\le e^{-(x-a)/b}\), so
\begin{align*}
 \E[\pos{R_T^{d'}}]
 &=\int_0^\infty\Prob\{R_T^{d'}>x\}\,dx\\
 &\le a+\int_a^\infty e^{-(x-a)/b}\,dx
 =3s+b(\log K+1)\\
 &\le(3+\sqrt2)s.
\end{align*}
The last inequality uses \(\log K+1\le\log(3K)=H\).
Finally, \(\Rfs\le U_0\) and
\(\Rfs\le\pos{R_T^{d'}}\) prove the modified-EXP3 bound in
Proposition~\ref{prop:known-l0-comparisons}.
\end{proof}

\endgroup

\begingroup
\subsubsection{Original gain-based EXP3}
\label{app:original-exp3}

This is the gain-based EXP3 update of \citet{auer2002nonstochastic}.
Only the choice of \(\gamma\) changes between the parameter settings below; no
bias or logarithmic transformation is added to its reward estimate.

\begin{proof}[Gain-based EXP3 bounds in Propositions~\ref{prop:known-l0-comparisons}
and~\ref{prop:existing-bounds}]
Choose a deterministic \(k\in\arg\max_iG_i\). If \(\gamma=1\), the
claimed bound follows from \(\Rfs\le U_0\), so suppose \(\gamma<1\).
Since \(0\le\eta\widehat g_{i,t}\le1\),
\(e^x\le1+x+(e-2)x^2\) on this interval. If
\(W_t=\sum_iw_{i,t}\), then
\begin{align*}
 \log\frac{W_{t+1}}{W_t}
 &=\log\sum_iq_{i,t}e^{\eta\widehat g_{i,t}}\\
 &\le\log\!\left(
  1+\eta\langle q_t,\widehat g_t\rangle
  +(e-2)\eta^2\langle q_t,\widehat g_t^2\rangle\right)\\
 &\le\eta\langle q_t,\widehat g_t\rangle
  +(e-2)\eta^2\langle q_t,\widehat g_t^2\rangle.
\end{align*}
For the comparator arm \(k\),
\(W_{T+1}\ge\exp(\eta\sum_t\widehat g_{k,t})\), while \(W_1=K\).
Summing the preceding upper bound, applying this lower bound, and dividing
by \(\eta\) gives
\[
 \sum_t\widehat g_{k,t}-\sum_t\langle q_t,\widehat g_t\rangle
 \le\frac{\log K}{\eta}
 +(e-2)\eta\sum_t\langle q_t,\widehat g_t^2\rangle.
\]
Here \(\langle p_t,\widehat g_t\rangle=g_{a_t,t}\) and
\((1-\gamma)q_{i,t}\le p_{i,t}\). Also,
\[
 A=\sum_t\langle p_t,\widehat g_t\rangle
 =(1-\gamma)\sum_t\langle q_t,\widehat g_t\rangle
   +\frac{\gamma}{K}\sum_{t,i}\widehat g_{i,t}.
\]
Multiply the potential inequality by \(1-\gamma\), use
\(\eta=\gamma/K\), and drop the final nonpositive term
\(-(\gamma/K)\sum_{t,i}\widehat g_{i,t}\). This yields
\[
 G^\star-A\le\gamma G^\star+(1-\gamma)M
 +\frac{(1-\gamma)K\log K}{\gamma}+Z,
\]
where
\[
 M=\sum_t(g_{k,t}-\widehat g_{k,t}),\qquad
 Z=(e-2)\frac\gamma K\sum_{t,i}p_{i,t}\widehat g_{i,t}^2\ge0.
\]
Conditional unbiasedness and martingale orthogonality give
\[
 \E[M]=0,\qquad
 \E[M^2]
 =\sum_tg_{k,t}^2\E[1/p_{k,t}-1]
 \le\frac K\gamma\sum_tg_{k,t}^2
 \le\frac{KU_0}{\gamma}.
\]
For any integrable mean-zero variable,
\(\E[\pos{M}]=\E[|M|]/2\).
Also, since each arm's total scalar reward is at most \(U_0\),
\[
 \E[Z]=(e-2)\frac\gamma K\sum_{t,i}g_{i,t}^2
 \le(e-2)\frac\gamma K\sum_iG_i
 \le(e-2)\gamma U_0.
\]
Take positive parts before expectation, use Cauchy--Schwarz for \(M\),
and drop factors \(1-\gamma\le1\). Together with \(\Rfs\le U_0\), this gives
\begin{equation}
 \E[\Rfs]\le\min\left\{U_0,
 (e-1)\gamma U_0+\frac{K\log K}{\gamma}
 +\frac12\sqrt{\frac{KU_0}{\gamma}}\right\}.
\label{eq:gain-exp3-parameter}
\end{equation}
This is the parameter-dependent EXP3 bound used in both comparison
propositions.

For \(V\ge U_0\), take
\(\gamma_V=\min\{1,(K\log K/V)^{1/3}\}\), with
\(\gamma_0=1\). If \(V\le K\log K\), the bound \(U_0\) suffices.
Otherwise, write \(C=K\log K\le V\), so
\(\gamma_V=(C/V)^{1/3}\). Since \(U_0\le V\), the three terms satisfy
\begin{align*}
 (e-1)\gamma_VU_0
 &\le(e-1)C^{1/3}V^{2/3},\\
 \frac{C}{\gamma_V}
 &=C^{2/3}V^{1/3}\le C^{1/3}V^{2/3},\\
 \frac12\sqrt{\frac{KU_0}{\gamma_V}}
 &\le\frac12K^{1/3}(\log K)^{-1/6}V^{2/3}
 \le\frac{1}{2\sqrt{\log2}}C^{1/3}V^{2/3}.
\end{align*}
Thus their sum is
\(O((K\log K)^{1/3}V^{2/3})\). Setting \(V=T\) proves the unknown-\(L_0\)
rate; setting \(V=U_0\) proves the known-\(L_0\) rate.
The displayed parameter-dependent bound also applies to the original
square-root parameter choice. That choice is used in the saved experiments.
\end{proof}

\endgroup

\begingroup
\subsubsection{EXP3.P, MO-KS, and EXP3-IX with known \texorpdfstring{\(L_0\)}{L0}}
\label{app:known-l0-high-probability}

We next check whether supplying \(L_0\) improves the parameter-dependent
bounds for EXP3.P or EXP3-IX. This gives the EXP3.P bound with
\(U_0\)-based parameters in Proposition~\ref{prop:known-l0-comparisons} and explains
why setting the EXP3-IX learning rate using \(U_0\) does not improve the bound
in our analysis. Neither calculation
replaces the method's reward or loss estimate.

\begin{lemma}[EXP3.P and EXP3-IX when \(L_0\) is known]
\label{lem:known-l0-comparisons}
Fix \(K\ge2,T\ge2,D\ge1\), \(L_0\in[0,T]\), and a coordinate \(d'\)
before play.  Put \(U_0=T-L_0\).  On every reward sequence with
\(L^\star=L_0\), the following bounds hold.
\begin{enumerate}
  \item Run EXP3.P on rewards in coordinate \(d'\), with
  \[
   q=\log\!\bigl(KT(T+1)^2\bigr),\qquad \alpha=2\sqrt q,\qquad
   \gamma=\min\!\left\{\frac35,
                  3\sqrt{\frac{K\log K}{5U_0}}\right\}.
  \]
  At \(U_0=0\), set \(\gamma=3/5\).  Then
  \begin{equation}
   \E[\Rfs]\le\min\!\left\{
   U_0,\;2\sqrt{5KU_0\log K}+4\sqrt{KTq}+8q
                       +\frac{U_0}{(T+1)^2}\right\}.
   \label{eq:exp3p-known-explicit}
  \end{equation}
  The same result applies to the adversarial branch of MO-KS with this
  parameter choice. In particular, this is an
  \(O(\min\{U_0,\sqrt{KT\log(KT)}\})\) upper bound.
  \item Run EXP3-IX on losses \(\ell_{i,t}=1-r_{d'}^{i,t}\), with any
  constant \(\eta>0\) and \(\gamma=\eta/2\).  Then
  \begin{equation}
   \E[\Rfs]\le\min\!\left\{
   U_0,\;\frac{2\log K+1+\log2}{\eta}+\eta KT+1+\log2\right\}.
   \label{eq:exp3ix-parameter-bound}
  \end{equation}
  In particular, the choice \(\eta=\sqrt{2\log K/(KT)}\) gives
  \(O(\min\{U_0,\sqrt{KT\log K}\})\).
\end{enumerate}
\end{lemma}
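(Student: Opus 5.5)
Both parts rest on the same two-step reduction already used for Poly-INF. Fix the coordinate \(d'\) and a reward sequence in the class \(L^\star=L_0\). By Lemma~\ref{lem:scalarization} and the common scalar reduction we have \(0\le\Rfs\le\min\{U_0,\pos{R_T^{d'}}\}\), and the scalar best reward satisfies \(G_T^\star=U_{d'}\le U^\star=U_0\). So the cap \(U_0\) comes for free. It then suffices to bound \(\E[\pos{R_T^{d'}}]\) by the second expression in each minimum. As in Lemma~\ref{lem:polyinf-main}, I would do this by integrating a tail bound whose algorithm does not depend on \(\delta\), rather than by bounding signed regret. Signed regret would force the extra \(S_{d'}\) term of Lemma~\ref{lem:signed-positive}.

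For EXP3.P I would use the high-probability analysis of \citet{auer2002nonstochastic}, carried through with the stated \(\gamma\) and with \(\alpha=2\sqrt q\), where \(q=\log(KT(T+1)^2)\) plays the role of \(\log(K/\delta)\) at the fixed level \(\delta=1/(T+1)^2\).

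\begin{enumerate}
\item Keep the comparator term \(\gamma G^\star\) explicit, and bound it by \(\gamma U_0\) instead of \(\gamma T\). This, together with the \(K\log K/\gamma\) term, optimizes at the chosen \(\gamma\) to \(2\sqrt{5KU_0\log K}\).
\item The confidence-width terms \(\alpha\sqrt{KT}\) cannot be tied to \(U_0\), because they come from the upper-confidence bias added on every round. I expect them to produce \(4\sqrt{KTq}+8q\).
\item Off the good event, which has probability at most \(1/(T+1)^2\), use the pathwise bound \(\pos{R_T^{d'}}\le U_0\). This contributes \(U_0/(T+1)^2\).
\item MO-KS's adversarial branch is literally EXP3.P on one coordinate, so the same bound applies to it.
\item The \(O(\cdot)\) form follows because \(\sqrt{KU_0\log K}\le\sqrt{KTq}\) and \(q\lesssim\sqrt{KTq}\) whenever the minimum is not already \(U_0\).
\end{enumerate}

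The main obstacle is re-deriving the EXP3.P concentration step, the bound \(\hat G_i+\alpha\sigma_i\ge G_i\) for all arms simultaneously, with constants and with \(G^\star\) kept in place of \(T\).

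For EXP3-IX I would use Neu's (2015) high-probability bound on losses \(\ell=1-r\) with \(\gamma=\eta/2\). The proof is as follows.

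\begin{enumerate}
\item Neu's bound gives, with probability at least \(1-\delta\),
\[
 R_T\le\frac{\log K}{\eta}+\frac{\eta}{2}\sum_t\sum_i\ell_{i,t}+\frac{\log(2/\delta)}{\eta}+\text{const}.
\]
Since \(\sum_{t,i}\ell_{i,t}\le KT\), this leads to the stated expression. Note that the losses are \(1-r\), so this sum is not controlled by \(U_0\). That is why tuning \(\eta\) by \(U_0\) gives nothing better than \(\sqrt{KT\log K}\).
\item Because \(\eta\) does not depend on \(\delta\), write the bound as \(R_T\le a+\log(1/\delta)/\eta\) and integrate the tail as in Lemma~\ref{lem:polyinf-main}. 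This adds \(1/\eta\) to the threshold \(a\). Collecting terms, with \(\log 2\) and the IX constant, gives \((2\log K+1+\log2)/\eta+\eta KT+1+\log2\) up to the paper's bookkeeping.
\item Substituting \(\eta=\sqrt{2\log K/(KT)}\) gives the \(O(\sqrt{KT\log K})\) rate, and the minimum with \(U_0\) is again the pathwise cap.
\end{enumerate}
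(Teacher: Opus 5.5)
Your plan follows the paper's route in both parts: the cap \(\Rfs\le\min\{U_0,\pos{R_T^{d'}}\}\), the high-probability analysis of \citet{auer2002nonstochastic} at the fixed level \(\delta_0=(T+1)^{-2}\) for EXP3.P, and tail integration of a \(\delta\)-free bound for EXP3-IX. However, two steps do not go through as written.

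\textbf{EXP3.P.} Nothing needs re-deriving. Lemmas~6.1--6.2 of \citet{auer2002nonstochastic} apply because \(q\le KT\), and hence \(\alpha\le2\sqrt{KT}\), for \(K,T\ge2\). Note also that \(q=\log(KT/\delta_0)\), not \(\log(K/\delta_0)\). With probability at least \(1-\delta_0\), these lemmas already give \(\widehat U\ge G_T^\star\) and \(G_T^{\mathrm{alg}}\ge(1-5\gamma/3)\widehat U-3K\log K/\gamma-2\alpha\sqrt{KT}-2\alpha^2\). Since \(\gamma\le3/5\), you may replace \(\widehat U\) by \(G_T^\star\le U_0\). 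This yields \(R_T^{d'}\le\frac{5\gamma}{3}U_0+\frac{3K\log K}{\gamma}+4\sqrt{KTq}+8q\).

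The failing step is your claim that the first two terms become \(2\sqrt{5KU_0\log K}\) at the chosen \(\gamma\). That holds only when \(U_0\ge5K\log K\). For \(0<U_0<5K\log K\), the minimum defining \(\gamma\) is \(3/5\), and the two terms equal \(U_0+5K\log K>2\sqrt{5KU_0\log K}\). This regime must be closed separately. There \(2\sqrt{5KU_0\log K}>2U_0\), so the stated minimum equals \(U_0\) and the pathwise cap suffices, as it does at \(U_0=0\).

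\textbf{EXP3-IX.} The inequality you attribute to \citet{neu2015explore} is not what the analysis gives at \(\gamma=\eta/2\). The exponential-weights step reads \(\sum_t\ell_{a_t,t}-\sum_t\widetilde\ell_{j,t}\le\log K/\eta+(\eta/2+\gamma)\sum_{t,i}\widetilde\ell_{i,t}\). Three corrections follow.
\begin{itemize}
\item The implicit-exploration bias makes the coefficient \(\eta\), not \(\eta/2\); this is where \(\eta KT\) comes from.
\item Converting \(\eta\sum_{t,i}\widetilde\ell_{i,t}\) to \(\eta\sum_{t,i}\ell_{i,t}\) costs \(\log(2/\delta)\), which is not a constant.
\item The comparator's deviation, with the union over arms, costs \(\log(2K/\delta)/\eta\).
\end{itemize}
The correct parameter-dependent tail is therefore \(\Prob\{R_T^{d'}>2\log K/\eta+\eta KT+(1/\eta+1)\log(2/\delta)\}\le\delta\).

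Now set \(c=2\log K/\eta+\eta KT\) and \(d=1/\eta+1\). Integrating from \(c+d\log2\) gives \(\E[\pos{R_T^{d'}}]\le c+(1+\log2)d\), which is exactly the stated expression. The trailing \(1+\log2\) comes from the coefficient \(1\) on \(\log(2/\delta)\) that your ``const'' hid. The integration adds all of \(d\), not just \(1/\eta\).
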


\begin{proof}
For EXP3.P, the sampling probabilities and updates are
\[
 p_{i,t}=(1-\gamma)\frac{w_{i,t}}{\sum_jw_{j,t}}+\frac\gamma K,
 \qquad
 w_{i,t+1}=w_{i,t}\exp\!\left[
 \frac{\gamma}{3K}\left(
 \frac{g_{i,t}\one\{a_t=i\}}{p_{i,t}}
 +\frac{\alpha}{p_{i,t}\sqrt{KT}}\right)\right],
\]
with \(w_{i,1}=1\).  The common initial factor in the source algorithm
cancels from its probabilities.  Set \(\delta_0=(T+1)^{-2}\), so
\(q=\log(KT/\delta_0)\).  The inequality \(q\le KT\) holds for
\(K,T\ge2\): \(KT-\log(KT(T+1)^2)\) is positive at \((2,2)\) and
increasing in each variable on this range.  Thus
\(\alpha\le2\sqrt{KT}\), as required by
\citet[Lemmas~6.1 and~6.2]{auer2002nonstochastic}.

Let \(G_T^\star=U_{d'}\le U_0\), and let \(\widehat U\) be the largest
upper confidence estimate in those lemmas.  With probability at least
\(1-\delta_0\), \(\widehat U\ge G_T^\star\), and the collected reward
\(G_T^{\rm alg}\) satisfies
\[
 G_T^{\rm alg}\ge
 \left(1-\frac{5\gamma}{3}\right)\widehat U
 -\frac{3K\log K}{\gamma}-2\alpha\sqrt{KT}-2\alpha^2.
\]
Since \(\gamma\le3/5\), this gives on the same event
\[
 R_T^{d'}\le
 \frac{5\gamma}{3}U_0+\frac{3K\log K}{\gamma}
 +4\sqrt{KTq}+8q.
\]
If \(U_0\ge5K\log K\), the chosen \(\gamma\) makes the first two
terms sum to \(2\sqrt{5KU_0\log K}\).  On the complementary event,
\(\Rfs\le U_0\), so its contribution to the expectation is at most
\(U_0\delta_0\).  This proves \eqref{eq:exp3p-known-explicit} in this
range.  If \(0<U_0<5K\log K\), then
\(2\sqrt{5KU_0\log K}>2U_0\), so the displayed minimum is \(U_0\),
which bounds every algorithm's regret.  The case \(U_0=0\) follows
from the same bound.  MO-KS runs this scalar method on a coordinate
chosen before play \citep[Algorithm~1]{xu2023pareto}, so the argument
also applies after conditioning on that choice.

To simplify \eqref{eq:exp3p-known-explicit}, first note that
\[
 \log(KT)\le q=\log\!\bigl(KT(T+1)^2\bigr)\le3\log(KT),
\]
because \(T+1\le KT\) for \(K,T\ge2\). Moreover, \(q\le KT\), as
verified above. Hence
\[
 \sqrt{KU_0\log K}\le\sqrt{KTq},\qquad
 q\le\sqrt{KTq},\qquad
 \frac{U_0}{(T+1)^2}\le1\le\sqrt{KTq}.
\]
Every term outside \(U_0\) in \eqref{eq:exp3p-known-explicit} is therefore
at most a numerical constant times \(\sqrt{KT\log(KT)}\). This proves the
compact upper bound stated in the lemma.

For EXP3-IX, the parameter-dependent inequality in the proof of
\citet[Theorem~1]{neu2015explore}, with \(\gamma=\eta/2\), gives
for every \(\delta\in(0,1)\)
\[
 \Prob\!\left\{R_T^{d'}>
 \frac{2\log K}{\eta}+\eta KT
 +\left(\frac1\eta+1\right)\log\frac2\delta\right\}\le\delta.
\]
For fixed \(\eta\), all confidence levels concern the same algorithm.
Put \(c=2\log K/\eta+\eta KT\) and \(d=1/\eta+1\). For
\(x>c+d\log2\), choosing
\(\delta=2e^{-(x-c)/d}\) gives
\(\Prob\{R_T^{d'}>x\}\le e^{-(x-c-d\log2)/d}\). Thus
\[
 \E[\pos{R_T^{d'}}]
 \le c+d\log2+\int_{c+d\log2}^{\infty}
       e^{-(x-c-d\log2)/d}\,dx
 =c+(1+\log2)d.
\]
Combining this with \(\Rfs\le\min\{U_0,\pos{R_T^{d'}}\}\) proves
\eqref{eq:exp3ix-parameter-bound}.
The \(\eta\)-dependent part of this bound is
\((2\log K+1+\log2)/\eta+\eta KT\), whose minimizing value is
\(\eta=\sqrt{(2\log K+1+\log2)/(KT)}\). It does not depend on \(U_0\),
which is why supplying \(L_0\) does not improve this EXP3-IX bound.
\end{proof}

\endgroup

\subsection{Upper bounds when \texorpdfstring{\(L_0\)}{L0} is unknown}

We now follow Section~\ref{sec:unknown-l0}. The optimal reward-doubling
method comes first, followed by the horizon-scale and fixed-action
baselines and then Propositions~\ref{prop:existing-bounds},
\ref{prop:tsallis-mous-positive}, and~\ref{prop:restarted-methods}.

\subsubsection{Reward-doubling Poly-INF}
\label{app:adaptive}

Restart times depend on the rewards collected by the algorithm, whereas
Lemma~\ref{lem:polyinf-main} concerns a fixed number of rounds. We
therefore use the deterministic inequality within the proof of
\citet[Appendix~C.9]{audibert2010regret} and control estimation error
across all epochs with one supermartingale. Keeping the same comparator
arm in every epoch makes the regrets add up to the full scalar regret.
When that regret is positive, the best arm's cumulative reward bounds
the largest epoch scale. This bound lets us choose one supermartingale
weight that is valid throughout those epochs, without taking a union
bound over restart times. We first prove a high-probability bound, then
integrate it to obtain Theorem~\ref{thm:adaptive}.

\begin{lemma}[Explicit high-probability form of Theorem~\ref{thm:adaptive}]
\label{lem:adaptive-explicit}
For \(K\ge2,T\ge2,D\ge1\) and \(\delta\in(0,1)\), put
\(A(\delta):=48+\sqrt2\log(\delta^{-1})\).  On every deterministic reward
sequence fixed before play, Algorithm~\ref{alg:adaptive} satisfies, with
probability at least \(1-\delta\),
\begin{equation}
 \Rfs\le\pos{R_T^{d'}}
 \le\min\Bigl\{U^\star,\;A(\delta)\bigl(\sqrt{KU_{d'}}+9K\bigr)\Bigr\},
 \label{eq:adaptive-hp}
\end{equation}
where \(d'\) is the coordinate fixed before play in
Algorithm~\ref{alg:adaptive} and \(U_{d'}\) is the best cumulative reward
in that coordinate.
\end{lemma}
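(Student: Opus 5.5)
The first inequality \(\Rfs\le\pos{R_T^{d'}}\) is Lemma~\ref{lem:scalarization}, and \(\pos{R_T^{d'}}\le U_{d'}\le U^\star\) holds pathwise because the learner's reward is nonnegative. So the work is the high-probability bound \(\pos{R_T^{d'}}\le A(\delta)(\sqrt{KU_{d'}}+9K)\). Fix a deterministic best arm \(k\) in coordinate \(d'\), with \(G_k=U_{d'}\). Write the epochs \(e=0,\ldots,E\) as random intervals \([\tau_e,\tau_{e+1})\). Epoch regret is measured against this same \(k\): \(\rho_e:=\sum_{t\in\text{epoch }e}(g_{k,t}-g_{a_t,t})\). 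Then \(R_T^{d'}=\sum_e\rho_e\) exactly. It therefore suffices to bound \(\sum_e\rho_e\) on the event \(R_T^{d'}>0\).

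The first step is to import, from the proof of \citet[Theorem~18]{audibert2010regret} (Appendix~C.9), the \emph{deterministic} inequality for one run of \(\polyinf(B)\). This inequality holds for any length and does not need \(G^\star\le B\). It has the form
\[
\sum_{t}\tilde v_{k,t}-\sum_t g_{a_t,t}\le 4.5\sqrt{K/B}\,\textstyle\sum_t g_{k,t}+4\sqrt{KB},
\]
where \(\tilde v_{k,t}\) are the logarithmic estimates. The only random error is then \(\sum_t(g_{k,t}-v_{k,t})\). Inside one epoch the learner's collected reward is at most \(B_e+1\), and \(g_{k,t}\) summed over the epoch is at most \(U_{d'}\). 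Hence each epoch gives
\[
\rho_e\le 4.5\sqrt{K/B_e}\,G_k^{(e)}+4\sqrt{KB_e}+\textstyle\sum_{t\in e}(g_{k,t}-v_{k,t}).
\]

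For the estimation error, the logarithmic estimator satisfies \(\E_{t-1}[\exp(\lambda v_{k,t})]\le\exp(\lambda g_{k,t})\). Consequently \(\exp\bigl(\lambda_e\sum(g_{k,t}-v_{k,t})\bigr)\) is a supermartingale factor, where \(\lambda_e=1/\sqrt{2KB_e}\) varies by epoch. To use a single supermartingale, I would fix one weight \(\lambda_\star=1/\sqrt{2KB_{e^\star}}\), where \(e^\star\) is the deterministic index with \(B_{e^\star}\) of order \(\max\{81K,U_{d'}\}\). The key claim is that if \(R_T^{d'}>0\), then the algorithm never passes epoch \(e^\star+1\). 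Every completed epoch collected at least \(B_e\) reward, so the total learner reward is at least \(\sum_{e<E}B_e\approx B_E\); positive regret forces this to be below \(U_{d'}\), which gives \(B_E\le 2\max\{81K,U_{d'}\}\). For epochs with \(\lambda_e\ge\lambda_\star\), the scalings \(\lambda_\star v\) and \(\lambda_e v\) compare monotonically, since \(-\log(1-\lambda x)/\lambda\) is increasing in \(\lambda\). That lets the estimator bound hold with the common weight \(\lambda_\star\). Ville's inequality applied to the product over all rounds then gives, with probability \(1-\delta\), \(\sum_{t}(g_{k,t}-v_{k,t})\le\sqrt{2KB_{e^\star}}\log(1/\delta)\) uniformly. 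This step needs no union bound over restarts.

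Summing the epoch bounds gives \(\sum_e\sqrt{KB_e}\le(2+\sqrt2)\sqrt{KB_E}\) by the geometric growth. The gain term is bounded by \(\sum_e\sqrt{K/B_e}\,G_k^{(e)}\le\sqrt{K/81K}\,U_{d'}\)-type bounds. I expect this to be the main obstacle, because a naive bound is linear in \(U_{d'}\) in the first epochs, where \(B_e\) is small while \(k\) may collect much reward. I would handle it by noting that within epoch \(e\) the learner collects at most \(B_e+1\). If \(G_k^{(e)}\gg B_e\), the deterministic inequality itself must be refined. As a first attempt I would instead use the source inequality in the form with \(\sum_t v_{k,t}\) and the normalization \(z_s\), bounding \(\max_i V_{i}\) by \(B_e\)-plus-regret. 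Then \(B_E\lesssim U_{d'}+K\) converts everything to \(A(\delta)(\sqrt{KU_{d'}}+9K)\), with constants collected into \(48\) and \(\sqrt2\log(1/\delta)\). On \(R_T^{d'}\le0\) the bound is trivial.
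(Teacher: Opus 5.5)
\paragraph{Verdict: the overall plan matches the paper, but the per-epoch inequality is missing.}
Your architecture is the paper's. You fix one deterministic comparator \(k\) for all epochs, so that \(R_T^{d'}=\sum_e\rho_e\). You take a pathwise per-epoch inequality from the source's Appendix~C.9. You use a deterministic high-water index \(e^\star\) with \(e_{\max}\le e^\star\) on \(\{R_T^{d'}>0\}\), and a single exponential supermartingale with weight \(\lambda_{e^\star}\) plus Ville's inequality. Your monotonicity argument for the common weight is correct; it is the paper's Bernoulli-inequality step \((1-x)^a\le1-ax\) in another form.

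The genuine gap is the per-epoch inequality, which is where the proof actually lives. The bound you import has overhead \(4.5\sqrt{K/B}\sum_tg_{k,t}\) and coefficient one on \(\sum_{t\in e}(g_{k,t}-v_{k,t})\). This is not the pathwise statement in the source: the overhead lands on the true comparator reward only after the concentration step of Theorem~18. If you rearrange the pathwise inequality so that the overhead multiplies \(G_k^{(e)}\), the coefficient on the epoch deficit becomes \(a_e/(1+\gamma_e\xi_e)<1\). Negative deficits then no longer add up to the single sum \(\sum_t(g_{k,t}-v_{k,t})\) that one supermartingale controls. And, as you observe yourself, the overhead \(\sum_e4.5\sqrt{K/B_e}\,G_k^{(e)}\) can be as large as \(U_{d'}/2\) when \(k\) earns most of its reward during epoch \(0\). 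Your closing suggestion points the right way, but the proposal never proves the epoch bound it needs.

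\paragraph{The missing step: keep the overhead on the learner's epoch reward.}
\begin{enumerate}
\item Start from the pathwise inequality \((1-\gamma_e)\max_iV_i^{(e)}-(1+\gamma_e\xi_e)\sum_{t\in\mathcal T_e}p_{a_t,t}v_{a_t,t}\le4\sqrt{KB_e}\).
\item Convert the mixed estimate using \(p_{a_t,t}v_{a_t,t}\le g_{a_t,t}+\nu\lambda_e\sum_iv_{i,t}\) and \(\sum_iV_i^{(e)}\le K\max_iV_i^{(e)}\). This gives \(a_e\max_iV_i^{(e)}\le(1+\gamma_e\xi_e)Y_e+4\sqrt{KB_e}\).
\item Bound \(\max_iV_i^{(e)}\) below by \(V_k^{(e)}=Y_e+r_e-D_e\). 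This yields \(r_e\le D_e+\beta_eY_e+(4/a_e)\sqrt{KB_e}\) with \(\beta_e=O(\sqrt{K/B_e})\).
\end{enumerate}
The doubling rule caps \(Y_e\le B_e+1\) regardless of what \(k\) earns. Hence \(r_e\le D_e+14\sqrt{KB_e}\), and the deficits enter with coefficient exactly one.

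\paragraph{Two smaller repairs.}
\begin{enumerate}
\item Your moment property has the wrong sign. With \(\lambda=\lambda_{e(t)}\), in fact \(\E_{t-1}[e^{\lambda v_{k,t}}]\ge e^{\lambda g_{k,t}}\). The property that yields your supermartingale is \(\E_{t-1}[e^{-\lambda v_{k,t}}]=1-\lambda g_{k,t}\le e^{-\lambda g_{k,t}}\).
\item Ville's inequality must be applied to the process stopped at the end of epoch \(e^\star\). In later epochs \(\lambda_e<\lambda_\star\), so the one-step bound fails there. On \(\{R_T^{d'}>0\}\) that stopping time equals \(T\), so nothing is lost.
\end{enumerate}
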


Fix the coordinate \(d'\) of Algorithm~\ref{alg:adaptive} and write
\(g_{i,t}:=r_{d'}^{i,t}\in[0,1]\).  The reward sequence is fixed before
play, so the smallest index \(k^\star\) attaining
\(\max_{i\in[K]}\sum_{t=1}^Tg_{i,t}\) is a deterministic arm, with
\(U_{d'}=\sum_{t=1}^Tg_{k^\star,t}\le U^\star\), and by
\eqref{eq:coordinate-regret}
\(R_T^{d'}=\sum_{t=1}^T(g_{k^\star,t}-g_{a_t,t})\).  The started epochs
are \(e=0,\ldots,e_{\max}\) with \(B_e=81K\,2^e\).  Epoch \(e\) starts at
round \(t_e\), occupies the rounds
\(\mathcal T_e=\{t_e,\ldots,t_{e+1}-1\}\) with \(t_{e_{\max}+1}:=T+1\),
and has length \(n_e:=|\mathcal T_e|\).  Its \(\polyinf(B_e)\) copy uses
\((\eta_e,\gamma_e,\lambda_e)=(2\sqrt{B_e},\,3\sqrt{K/B_e},\,1/\sqrt{2KB_e})\).
If the threshold of epoch \(e_{\max}\) is crossed on round \(T\), no
further epoch is created, because the restart condition requires
\(t<T\).  Write \(e(t)\) for the epoch of round \(t\), \(p_t\) for the
arm distribution of the copy active on round \(t\), and \(v_{i,t}\) for
that copy's estimate, so that
\(v_{a_t,t}=-\lambda_{e(t)}^{-1}\log(1-\lambda_{e(t)}g_{a_t,t}/p_{a_t,t})\)
and \(v_{i,t}=0\) for \(i\ne a_t\).  The normalization of
Algorithm~\ref{alg:polyinf-full} gives \(p_{i,t}\ge\gamma_{e(t)}/K\),
hence \(\lambda_{e(t)}g_{i,t}/p_{i,t}\le\lambda_{e(t)}K/\gamma_{e(t)}=1/(3\sqrt2)\).

Let \(\mathcal F_t\) be the sigma-field generated by \(a_1,\ldots,a_t\),
with \(\mathcal F_0\) trivial.  Because the reward sequence is
deterministic, the epoch index \(e(t)\), the vector \(p_t\), and each
event \(\{t\in\mathcal T_e\}\) are \(\mathcal F_{t-1}\)-measurable, since
\(\{t\in\mathcal T_e\}=\{t_e\le t<t_{e+1}\}\) is decided by the rewards
collected before round \(t\).  Each copy draws its action from \(p_t\)
with randomness independent of the past, so
\(\Prob\{a_t=i\mid\mathcal F_{t-1}\}=p_{i,t}\) for every \(i\).  This is
the only property of the learner's randomization used below.

For every started epoch \(e\) put
\[
 Y_e:=\sum_{t\in\mathcal T_e}g_{a_t,t},\qquad
 G_e:=\sum_{t\in\mathcal T_e}g_{k^\star,t},\qquad
 V_i^{(e)}:=\sum_{t\in\mathcal T_e}v_{i,t},
\]
\[
 r_e:=G_e-Y_e,\qquad
 D_e:=G_e-V_{k^\star}^{(e)} .
\]
We call \(r_e\) the epoch regret against \(k^\star\) and \(D_e\) the epoch
deficit of the estimator of \(k^\star\).  Both may be negative.  Because
the epochs partition \([T]\) and the comparator is the same arm in every
epoch,
\begin{equation}
 R_T^{d'}=\sum_{e=0}^{e_{\max}}r_e,
 \qquad
 \sum_{e=0}^{e_{\max}}D_e
 =\sum_{t=1}^T\bigl(g_{k^\star,t}-v_{k^\star,t}\bigr).
 \label{eq:fixed-comparator}
\end{equation}

Every started epoch satisfies \(Y_e\le B_e+1\), because a crossing adds
one reward in \([0,1]\) to a total below \(B_e\), while an epoch that
reaches round \(T\) without crossing has \(Y_e<B_e\).  Every epoch
\(e<e_{\max}\) was completed, so \(Y_e\ge B_e\).  Doubling keeps the sum
of the square roots of the scales within a constant factor of the last
one,
\begin{equation}
\sum_{e=0}^{e_{\max}}\sqrt{B_e}
=\sqrt{B_{e_{\max}}}\sum_{s=0}^{e_{\max}}2^{-s/2}
\le(2+\sqrt2)\sqrt{B_{e_{\max}}}.
\label{eq:geometric-scales}
\end{equation}
The last inequality uses
\(\sum_{s=0}^{\infty}2^{-s/2}
=(1-2^{-1/2})^{-1}=2+\sqrt2\).
If \(R_T^{d'}>0\), then \(\sum_{e\le e_{\max}}Y_e<U_{d'}\), and since
rewards are nonnegative,
\begin{equation}
 B_{e_{\max}}-81K=\sum_{e<e_{\max}}B_e\le\sum_{e<e_{\max}}Y_e<U_{d'},
 \label{eq:banked-gain}
\end{equation}
so that \(2^{e_{\max}}<1+U_{d'}/(81K)\).  Define the deterministic index
and scale
\begin{equation}
 e^\star:=\Bigl\lfloor\log_2\Bigl(1+\frac{U_{d'}}{81K}\Bigr)\Bigr\rfloor,
 \qquad
 B_{e^\star}=81K\,2^{e^\star}\le U_{d'}+81K .
 \label{eq:high-water}
\end{equation}
Then \(e_{\max}\le e^\star\) on the event \(\{R_T^{d'}>0\}\), and
\(\sqrt{KB_{e^\star}}\le\sqrt{KU_{d'}}+9K\) by
\(\sqrt{a+b}\le\sqrt a+\sqrt b\) and \(\sqrt{81K^2}=9K\).

\begin{lemma}[Epoch inequality]
\label{lem:epoch-potential}
For every started epoch \(e\), on every play path,
\begin{equation}
 r_e\le D_e+14\sqrt{KB_e}.
 \label{eq:epoch-potential}
\end{equation}
\end{lemma}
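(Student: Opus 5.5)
The plan is to reproduce, for a single epoch, the deterministic potential inequality behind \citet[Theorem~18]{audibert2010regret} (their Appendix~C.9), applied to the fresh copy \(\polyinf(B_e)\) on its local rounds \(\mathcal T_e\). Two features make this possible. First, the inequality involves only the realized estimates \(v_{i,t}\) and distributions \(p_t\), with no expectation, so it holds on every play path. Second, it does not use \(n_e\) or any assumption \(G_e\le B_e\). Hence it applies even though \(t_e\) and \(n_e\) are random.

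\textbf{Step 1: potential inequality for the comparator estimate.} With exponent \(2\), \(\eta_e=2\sqrt{B_e}\) and \(\gamma_e=3\sqrt{K/B_e}\), I will use the INF potential argument (normalization \(\sum_i\psi(V_i-z)=1\), convexity of \(\psi^{-1}\), and the bound \(\lambda_e v/p\le 1/(3\sqrt2)\) to control the second-order term). The target is an inequality of the form
\[
(1-c_1\gamma_e)\,V^{(e)}_{k^\star}\le \sum_{t\in\mathcal T_e}\sum_i p_{i,t}v_{i,t}+c_2\sqrt{KB_e}.
\]
Here the \(c_2\sqrt{KB_e}\) term collects the \(\eta_e\sqrt K\) term from the potential and the \(\lambda_e\)-dependent second-order terms. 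The estimator gives \(\sum_ip_{i,t}v_{i,t}=p_{a_t,t}v_{a_t,t}=-(p_{a_t,t}/\lambda_e)\log(1-\lambda_e g_{a_t,t}/p_{a_t,t})\). Using \(-\log(1-x)\le x+x^2/(2(1-x))\) for \(x\le 1/(3\sqrt2)\), this equals \(g_{a_t,t}\) plus a correction that is absorbed into the source's second-order bookkeeping. After that bookkeeping the display becomes
\[
(1-c_1\gamma_e)V^{(e)}_{k^\star}\le Y_e+c_2\sqrt{KB_e}.
\]
I will take \(c_1,c_2\) from the source with the parameter substitutions listed in the proof of Lemma~\ref{lem:polyinf-main}.

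\textbf{Step 2: convert to the epoch regret.} Write \(r_e=G_e-Y_e=D_e+(V^{(e)}_{k^\star}-Y_e)\). By Step~1,
\[
V^{(e)}_{k^\star}-Y_e\le c_1\gamma_e V^{(e)}_{k^\star}+c_2\sqrt{KB_e}.
\]
To bound the exploration term \(c_1\gamma_eV^{(e)}_{k^\star}\) without any control of \(G_e\), I will reuse Step~1 once more:
\[
V^{(e)}_{k^\star}\le \frac{Y_e+c_2\sqrt{KB_e}}{1-c_1\gamma_e}.
\]
Then I apply the pathwise fact \(Y_e\le B_e+1\) established before the lemma. Since \(B_e\ge 81K\), we have \(\gamma_e\le 1/3\) and \(\sqrt{K/B_e}\le 1/9\). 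Therefore \(\gamma_e(B_e+1)\le 3\sqrt{KB_e}+1/3\), and the \(\gamma_e\sqrt{KB_e}\) cross term is \(O(\sqrt K)\le O(\sqrt{KB_e})\). Summing the constants should give at most \(14\sqrt{KB_e}\), and I will check this arithmetic at the end.

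\textbf{Main obstacle.} The delicate point is Step~1, in particular verifying that every term in the source's deterministic inequality is bounded by multiples of \(\sqrt{KB_e}\) or of \(\gamma_e V^{(e)}_{k^\star}\). The estimator-bias correction \(p v-g\) involves \(1/p_{a_t,t}\) factors, and these must not leave a random term that grows with \(n_e\). In the source, such terms are kept on the estimator side, as part of \(\sum_i V_i\)-type quantities, which is exactly why they reappear as the deficit \(D_e\) and are later handled by the supermartingale. So I will first try to match the source's intermediate inequality verbatim. I will then confirm that any residual term is either absorbed into \(D_e\), since it is stated in terms of \(v_{k^\star,t}\), or bounded by \(\gamma_e\) times an estimator sum, which Step~2 then handles.
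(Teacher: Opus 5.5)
Your skeleton matches the paper's. You apply the source's pathwise Poly-INF inequality to the epoch's fresh copy, replace $\sum_{t\in\mathcal T_e}p_{a_t,t}v_{a_t,t}$ by $Y_e$, write $V^{(e)}_{k^\star}=Y_e+r_e-D_e$, and insert $Y_e\le B_e+1$. Your Step~2 is exactly the paper's final conversion $r_e\le D_e+\beta_eY_e+(4/a_e)\sqrt{KB_e}$.

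\textbf{The deferred step.} The gap is the step you defer as ``bookkeeping.'' That is where the lemma's work lies, and your account of it is wrong. The correction $p_{a_t,t}v_{a_t,t}-g_{a_t,t}$ depends on the \emph{played} arm's estimate, not on $v_{k^\star,t}$, so it cannot be pushed into $D_e$. The deficit, like the later supermartingale, involves only $g_{k^\star,t}-v_{k^\star,t}$, and it enters solely through $\max_iV_i^{(e)}\ge V^{(e)}_{k^\star}$.

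What the paper does is the following.
\begin{enumerate}
\item It proves the pathwise bound $p_{a_t,t}v_{a_t,t}\le g_{a_t,t}+\nu\lambda_e\sum_iv_{i,t}$. Here $\nu=\phi(x_0)<0.5224$ for the increasing function $\phi(y)=1/y+1/\log(1-y)$.
\item It sums this bound and uses $v_{i,t}\ge0$ to get $\nu\lambda_e\sum_iV_i^{(e)}\le\nu\lambda_eK\max_iV_i^{(e)}$.
\item Because $\lambda_eK=\gamma_e/(3\sqrt2)$, it moves this term to the left of $(1-\gamma_e)\max_iV_i^{(e)}\le(1+\gamma_e\xi_e)\sum_tp_{a_t,t}v_{a_t,t}+4\sqrt{KB_e}$. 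This yields $a_e\max_iV_i^{(e)}\le(1+\gamma_e\xi_e)Y_e+4\sqrt{KB_e}$ with $a_e>0.6204$.
\end{enumerate}
Of the two options in your last paragraph, only the second is viable, and only if the estimator sum is $\max_iV_i^{(e)}$. Your Step~1 target has $V^{(e)}_{k^\star}$ alone on the left, so it cannot absorb a $\sum_iV_i^{(e)}$ term. You must keep the maximum until the bias has been absorbed, and only then pass to $k^\star$.

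\textbf{The constant.} The constant $14$ has little slack, so the choice of elementary inequality matters. The paper's chain gives $13.78\sqrt{KB_e}+0.815$, and its bounds $s_e\le1/9$, $y_e\le1/3$, $\sqrt{KB_e}\ge18$ are all attained at $K=2$, $B_e=81K$. Your inequality $-\log(1-x)\le x+x^2/(2(1-x))$, combined with the natural step $g_{a_t,t}x_t\le-\log(1-x_t)=\lambda_ev_{a_t,t}$, gives the correction coefficient $1/(2(1-x_0))\approx0.654$ in place of $\nu\approx0.522$. Redoing the paper's arithmetic then gives $a_e\gtrsim0.609$, $\beta_e\lesssim7.64\,s_e$, and a final bound of about $14.2\sqrt{KB_e}+0.85$. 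That does not establish \eqref{eq:epoch-potential}. To reach $14$, use the exact ratio $\bigl(p_{a_t,t}v_{a_t,t}-g_{a_t,t}\bigr)/\bigl(g_{a_t,t}\lambda_ev_{a_t,t}\bigr)=\phi(x_t)\le\phi(x_0)$, as the paper does.
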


\begin{proof}
Write \(s_e:=\sqrt{K/B_e}\le1/9\), so that \(\gamma_e=3s_e\),
\(\lambda_eK=s_e/\sqrt2\), and \(s_eB_e=\sqrt{KB_e}\). Define
\[
 x_0:=\frac1{3\sqrt2},\qquad
 c_{\mathrm{AB}}:=\frac{-\log(1-x_0)}{x_0},\qquad
 \nu:=\frac1{x_0}+\frac1{\log(1-x_0)},
\]
and, with \(y_e:=(K/B_e)^{1/4}\),
\[
 \mu_e:=\exp\!\left\{\sqrt3c_{\mathrm{AB}}y_e
 \left(1-\frac{c_{\mathrm{AB}}y_e}{4\sqrt3}\right)^{-2}\right\},
 \qquad
 \xi_e:=\left(\frac{c_{\mathrm{AB}}\mu_e(1+\mu_e)}{12}\right)^2.
\]
These are the constants in the pathwise Poly-INF inequality used below
after setting its exponent to \(2\).

The copy of epoch
\(e\) is \(\polyinf(B_e)\) started from \(V=0\) and run for \(n_e\) rounds

on the realized reward segment \((g_{i,t})_{t\in\mathcal T_e}\).
The segment starts and ends at random rounds. The

inequality below is applied on each play path with the realized value of
\(n_e\), which is legitimate because it is a deterministic statement for
every number of rounds \(n\ge1\); Theorem~2 of
\citet{audibert2010regret} is stated for every positive integer time. Its
estimates have the form \(v_{i,t}=(c_t/p_{i,t})\one\{i=a_t\}\) with
\[
 c_t=-\frac{p_{a_t,t}}{\lambda_e}
 \log\Bigl(1-\frac{\lambda_eg_{a_t,t}}{p_{a_t,t}}\Bigr)
 =g_{a_t,t}\cdot\frac{-\log(1-x_t)}{x_t},
 \qquad
 x_t:=\frac{\lambda_eg_{a_t,t}}{p_{a_t,t}}\in\Bigl[0,\frac1{3\sqrt2}\Bigr],
\]
and \(c_t=0\) when \(g_{a_t,t}=0\).  The map \(x\mapsto-\log(1-x)/x\) is
increasing on \((0,1)\) and \(g_{a_t,t}\le1\), so
\(0\le c_t\le c_{\mathrm{AB}}<1.1405\). Moreover,
\[
 2\eta_e\sqrt{\gamma_e/K}
 =4\sqrt3\,(B_e/K)^{1/4}\ge12\sqrt3>c_{\mathrm{AB}}.
\]
These are the hypotheses of
\citet[Theorem~2, Eq.~(8)]{audibert2010regret} with \(q=2\),
\(\eta=\eta_e\), \(\gamma=\gamma_e\), and \(c=c_{\mathrm{AB}}\).  That
statement concerns an arbitrary finite sequence of nonnegative reals
together with the probabilities that the \(\polyinf\) recursion computes
from it, holds on every play path, and imposes no lower bound on the
number of rounds.  It gives
\begin{equation}
 (1-\gamma_e)\max_iV_i^{(e)}
 -(1+\gamma_e\xi_e)\sum_{t\in\mathcal T_e}p_{a_t,t}v_{a_t,t}
 \le2\eta_e\sqrt K=4\sqrt{KB_e}.
 \label{eq:source-eight}
\end{equation}
This is Eq.~(39) in the source's Appendix~C.9 and is the only external
inequality used in this epoch proof.

We next derive a pathwise bound on the learner's mixed estimate. Fix
\(t\in\mathcal T_e\) and write \(p=p_{a_t,t}\),
\(g=g_{a_t,t}\), \(x=x_t\).  If \(g=0\) then \(v_{a_t,t}=0\) and
\eqref{eq:pathwise-forty} below is trivial, so let \(g>0\), hence
\(x>0\).  The map \(\phi(y)=1/y+1/\log(1-y)\) is increasing on \((0,1)\),
because \(\phi'(y)\ge0\) is equivalent, with \(u=-\log(1-y)>0\), to
\(2\sinh(u/2)\ge u\). Hence \(\phi(x)\le\phi(x_0)=\nu<0.5224\).
Multiplying
\(\phi(x)\le\nu\) by \(-g\log(1-x)\ge0\) gives
\(-(g/x)\log(1-x)\le g-\nu g\log(1-x)\).  Since
\(p\,v_{a_t,t}=-(p/\lambda_e)\log(1-x)=-(g/x)\log(1-x)\),
\(-\log(1-x)=\lambda_ev_{a_t,t}\), and \(g\le1\),
\begin{equation}
 \sum_{i=1}^Kp_{i,t}v_{i,t}
 =p\,v_{a_t,t}
 \le g_{a_t,t}+\nu\lambda_e\,g\,v_{a_t,t}
 \le g_{a_t,t}+\nu\lambda_e\sum_{i=1}^Kv_{i,t}.
 \label{eq:pathwise-forty}
\end{equation}
Summing
\eqref{eq:pathwise-forty} over \(t\in\mathcal T_e\), using
\(\sum_iV_i^{(e)}\le K\max_iV_i^{(e)}\), which holds because every
\(v_{i,t}\) is nonnegative, and inserting into \eqref{eq:source-eight}
gives
\begin{equation}
 a_e\max_iV_i^{(e)}\le(1+\gamma_e\xi_e)Y_e+4\sqrt{KB_e},
 \qquad
 a_e:=1-\gamma_e-(1+\gamma_e\xi_e)\nu\lambda_eK.
 \label{eq:source-fortyone}
\end{equation}

We now bound the constants. The functions defining \(\mu_e\) and \(\xi_e\)
are increasing in \(y_e\), and \(y_e\le1/3\) because \(B_e\ge81K\).
The preceding formulas give
\begin{align*}
 c_{\mathrm{AB}}
 &=\frac{-\log(1-1/(3\sqrt2))}{1/(3\sqrt2)}
   =1.140412\ldots<1.1405,\\
 \nu
 &=3\sqrt2+\frac1{\log(1-1/(3\sqrt2))}
   =0.522372\ldots<0.5224,\\
 \mu_e
 &\le\exp\!\left\{\frac{\sqrt3c_{\mathrm{AB}}}{3}
       \left(1-\frac{c_{\mathrm{AB}}}{12\sqrt3}\right)^{-2}\right\}
   =2.089831\ldots<2.090,\\
 \xi_e
 &\le\left(\frac{c_{\mathrm{AB}}(2.089831\ldots)
          (1+2.089831\ldots)}{12}\right)^2
   =0.376576\ldots<0.3766.
\end{align*}
The quantity \(a_e\) is decreasing in
\(\xi_e\), \(\nu\), and \(s_e\), so
\[
 a_e\ge1-\frac13-\Bigl(1+\frac{0.3766}3\Bigr)\frac{0.5224}{\sqrt2}\cdot\frac19
 >0.6204,
\]
and
\[
 1+\gamma_e\xi_e-a_e
 =s_e\Bigl[3(1+\xi_e)+(1+\gamma_e\xi_e)\frac{\nu}{\sqrt2}\Bigr]
 \le s_e\bigl[3(1.3766)+(1.1256)(0.3694)\bigr]
 \le4.546\,s_e .
\]
Since \(a_e>0\) and
\(\max_iV_i^{(e)}\ge V_{k^\star}^{(e)}=G_e-D_e=Y_e+r_e-D_e\),
inequality \eqref{eq:source-fortyone} gives
\(a_e(r_e-D_e)\le(1+\gamma_e\xi_e-a_e)Y_e+4\sqrt{KB_e}\), that is,
\[
 r_e\le D_e+\beta_eY_e+\frac4{a_e}\sqrt{KB_e},
 \qquad
 \beta_e:=\frac{1+\gamma_e\xi_e-a_e}{a_e}\le\frac{4.546}{0.6204}\,s_e\le7.33\,s_e .
\]
Insert \(Y_e\le B_e+1\), \(s_eB_e=\sqrt{KB_e}\), and \(s_e\le1/9\),
\[
 r_e\le D_e+\Bigl(7.33+\frac4{0.6204}\Bigr)\sqrt{KB_e}+\frac{7.33}9
 \le D_e+13.78\sqrt{KB_e}+0.815
 \le D_e+14\sqrt{KB_e},
\]
where the last step uses \(\sqrt{KB_e}\ge9K\ge18\), so that
\(0.815\le0.046\sqrt{KB_e}\).
\end{proof}

\begin{lemma}[Monotone exponential supermartingale]
\label{lem:monotone-supermartingale}
For every round \(t\in[T]\) and every deterministic \(\theta>0\), on the
\(\mathcal F_{t-1}\)-measurable event \(\{\theta\le\lambda_{e(t)}\}\),
\[
 \E\!\left[\exp\bigl(\theta(g_{k^\star,t}-v_{k^\star,t})\bigr)
 \,\middle|\,\mathcal F_{t-1}\right]\le1 .
\]
\end{lemma}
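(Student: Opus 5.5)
The plan is a direct one-step computation of the conditional moment generating function, with \(g_{k^\star,t}\) treated as a constant. The reward sequence is deterministic and \(k^\star\) is a deterministic arm, so \(g:=g_{k^\star,t}\in[0,1]\) is not random. Conditional on \(\mathcal F_{t-1}\), the epoch index \(e(t)\), hence \(\lambda:=\lambda_{e(t)}\), and the vector \(p_t\) are all fixed, as noted before Lemma~\ref{lem:epoch-potential}. The only randomness left is the draw \(a_t\), and \(\Prob\{a_t=k^\star\mid\mathcal F_{t-1}\}=p:=p_{k^\star,t}\). It therefore suffices to bound \(\E[\exp(-\theta v_{k^\star,t})\mid\mathcal F_{t-1}]\) by \(e^{-\theta g}\) on the event \(\{\theta\le\lambda\}\).

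The estimate takes only two values. Put \(x:=\lambda g/p\in[0,1/(3\sqrt2)]\), which lies in this interval by the bound \(p_{i,t}\ge\gamma_{e(t)}/K\) recorded earlier. Then \(v_{k^\star,t}=-\lambda^{-1}\log(1-x)\) with probability \(p\), and \(v_{k^\star,t}=0\) otherwise. Writing \(a:=\theta/\lambda\in(0,1]\) on the event in question,
\[
 \E\bigl[e^{-\theta v_{k^\star,t}}\mid\mathcal F_{t-1}\bigr]
 =p\,(1-x)^{a}+(1-p).
\]
The key step is Bernoulli's inequality \((1-x)^a\le1-ax\), valid for \(a\in[0,1]\) and \(x\in[0,1)\) by concavity of \(y\mapsto y^a\). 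With it, the right-hand side is at most \(1-pax=1-\theta g\).

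Multiplying by the constant factor \(e^{\theta g}\) gives the conditional expectation bound \(e^{\theta g}(1-\theta g)\). The elementary inequality \(e^{y}(1-y)\le1\) holds for all real \(y\), since the function has derivative \(-ye^y\) and so is maximized at \(y=0\). This bound is therefore at most one. The case \(g=0\) is trivial, because then \(v_{k^\star,t}=0\).

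The only delicate point is the role of the restriction \(\theta\le\lambda_{e(t)}\), which I expect to be the main thing to get right. Bernoulli's inequality goes the wrong way when \(a>1\), so the supermartingale property genuinely requires the exponent to stay within the current epoch's \(\lambda\). Because \(\lambda_e=1/\sqrt{2KB_e}\) decreases as \(B_e\) doubles, this is why a single deterministic \(\theta\) must later be chosen at the scale of the deterministic high-water index \(e^\star\) from \eqref{eq:high-water}. On \(\{R_T^{d'}>0\}\), where \(e_{\max}\le e^\star\), every active epoch then satisfies \(\theta\le\lambda_{e^\star}\le\lambda_{e(t)}\). The measurability of the event \(\{\theta\le\lambda_{e(t)}\}\) with respect to \(\mathcal F_{t-1}\) was already established, so the statement holds as a conditional inequality on that event.
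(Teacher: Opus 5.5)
Your proposal is correct and matches the paper's proof essentially line by line. You use the same two-point conditional law of $v_{k^\star,t}$, the same identity $\E[e^{-\theta v_{k^\star,t}}\mid\mathcal F_{t-1}]=p(1-x)^a+(1-p)$, and Bernoulli's inequality to reach $1-\theta g$; the only cosmetic difference is that you close with $e^{y}(1-y)\le 1$ after multiplying by $e^{\theta g}$, whereas the paper applies the equivalent bound $1-\theta g\le e^{-\theta g}$ before multiplying.
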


\begin{proof}
Write \(\lambda=\lambda_{e(t)}\), \(p=p_{k^\star,t}\), \(g=g_{k^\star,t}\),
\(x=\lambda g/p\in[0,1/(3\sqrt2)]\), and \(a=\theta/\lambda\), which lies in
\((0,1]\) on the event considered.
Given \(\mathcal F_{t-1}\), the estimate \(v_{k^\star,t}\) equals
\(-\lambda^{-1}\log(1-x)\) with probability \(p\) and \(0\) otherwise, so
\[
 \E\!\left[e^{-\theta v_{k^\star,t}}\,\middle|\,\mathcal F_{t-1}\right]
 =p(1-x)^a+(1-p)
 \le p(1-ax)+(1-p)
 =1-\theta g
 \le e^{-\theta g},
\]
where the first inequality is Bernoulli's inequality \((1-x)^a\le1-ax\)
for \(a\in(0,1]\) and \(x\in[0,1)\), the concavity of \(x\mapsto(1-x)^a\),
and the middle equality uses \(apx=\theta g\).  Multiplying by
\(e^{\theta g}\) proves the claim.
\end{proof}

Let
\begin{equation}
 \theta^\star:=\lambda_{e^\star}=\frac1{\sqrt{2KB_{e^\star}}},
 \qquad
 \tau:=\begin{cases}
 t_{e^\star+1}-1&\text{if epoch \(e^\star+1\) starts,}\\
 T&\text{otherwise,}
 \end{cases}
 \label{eq:estar}
\end{equation}
and \(M_0:=1\),
\begin{equation}
 M_t:=\exp\Bigl(\theta^\star\sum_{s=1}^{t\wedge\tau}
 \bigl(g_{k^\star,s}-v_{k^\star,s}\bigr)\Bigr),
 \qquad t=1,\ldots,T .
 \label{eq:single-supermartingale}
\end{equation}

\begin{lemma}[One maximal inequality]
\label{lem:ville-once}
\((M_t)_{0\le t\le T}\) is a nonnegative supermartingale for
\((\mathcal F_t)_{0\le t\le T}\) with \(M_0=1\).  Consequently, for every
\(\delta\in(0,1)\), the event
\begin{equation}
 \mathcal E_\delta:=\Bigl\{
 \sum_{s=1}^{t}\bigl(g_{k^\star,s}-v_{k^\star,s}\bigr)
 \le\sqrt{2KB_{e^\star}}\,\log(\delta^{-1})
 \ \text{for every }t\le\tau\Bigr\}
 \label{eq:ville-event}
\end{equation}
has probability at least \(1-\delta\).
\end{lemma}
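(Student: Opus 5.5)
The plan is to verify the supermartingale property one step at a time using Lemma~\ref{lem:monotone-supermartingale}, and then apply Ville's maximal inequality once.

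\textbf{Measurability and integrability.} Nonnegativity and $M_0=1$ are immediate from the definition. For adaptedness, note that $\{s\le\tau\}$ is $\mathcal F_{s-1}$-measurable. Indeed, $\tau\ge s$ holds exactly when epoch $e^\star+1$ has not started by round $s$. Whether epoch $e^\star+1$ starts at a round $\le s$ is decided by the restart rule at rounds $\le s-1$, and that rule depends only on rewards collected up to then. Since the sequence is deterministic, these rewards are functions of $a_1,\ldots,a_{s-1}$. Also, $v_{k^\star,s}$ is $\mathcal F_s$-measurable, because $p_s$ is $\mathcal F_{s-1}$-measurable and $a_s$ is observed. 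Hence $M_t$ is $\mathcal F_t$-measurable. Boundedness is also clear: $g\le1$ and $v\ge0$, so $M_t\le e^{\theta^\star T}$.

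\textbf{One-step inequality.} Write
\[
M_t=M_{t-1}\exp\bigl(\theta^\star(g_{k^\star,t}-v_{k^\star,t})\one\{t\le\tau\}\bigr).
\]
On the $\mathcal F_{t-1}$-event $\{t>\tau\}$ the factor equals $1$, so $\E[M_t\mid\mathcal F_{t-1}]=M_{t-1}$ there. On $\{t\le\tau\}$, the active epoch satisfies $e(t)\le e^\star$, because round $t$ precedes the start of epoch $e^\star+1$. Since $\lambda_e=1/\sqrt{2KB_e}$ decreases in $e$, this gives $\theta^\star=\lambda_{e^\star}\le\lambda_{e(t)}$. This is exactly the event $\{\theta^\star\le\lambda_{e(t)}\}$ in Lemma~\ref{lem:monotone-supermartingale}, and $\theta^\star$ is deterministic because $e^\star$ depends only on $U_{d'}$ and $K$. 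The lemma then yields conditional expectation at most $1$ for the factor. Pulling out $M_{t-1}$ proves $\E[M_t\mid\mathcal F_{t-1}]\le M_{t-1}$.

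\textbf{Maximal inequality.} Ville's inequality (Doob's maximal inequality for nonnegative supermartingales) gives
\[
\Prob\Bigl\{\max_{t\le T}M_t\ge\delta^{-1}\Bigr\}\le\delta\,\E[M_0]=\delta.
\]
On the complement we have $M_t<\delta^{-1}$ for all $t$. Taking logarithms and dividing by $\theta^\star=1/\sqrt{2KB_{e^\star}}$ gives, for every $t\le\tau$ (where $t\wedge\tau=t$),
\[
\sum_{s\le t}(g_{k^\star,s}-v_{k^\star,s})<\sqrt{2KB_{e^\star}}\log(\delta^{-1}).
\]
This is contained in $\mathcal E_\delta$, so $\Prob(\mathcal E_\delta)\ge1-\delta$.

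The only delicate step is the predictability of $\{t\le\tau\}$ together with the inequality $e(t)\le e^\star$ on it. This rests on the restart decision being made from information available before round $t$, and on the deterministic reward sequence making $e^\star$, and hence $\theta^\star$, nonrandom.
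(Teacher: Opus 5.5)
Your proof is correct and follows essentially the same route as the paper: a one-step application of Lemma~\ref{lem:monotone-supermartingale} on the predictable event $\{t\le\tau\}$, using $e(t)\le e^\star$ there and $\lambda_{e(t)}\ge\lambda_{e^\star}=\theta^\star$, followed by a single application of Ville's inequality. The only differences are cosmetic. You check predictability of $\{t\le\tau\}$ directly, whereas the paper shows that $\tau$ is a stopping time. You also cite Ville's inequality, whereas the paper derives it from optional stopping and Markov's inequality.
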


\begin{proof}

For \(t<T\), the event \(\{\tau\le t\}\) occurs exactly when epoch
\(e^\star+1\) starts by round \(t+1\). This is determined by the
rewards collected through round \(t\), so the event belongs to
\(\mathcal F_t\). For \(t=T\), the event is the whole sample space.
Thus \(\tau\) is a stopping time for \((\mathcal F_t)\).  On
\(\{\tau\le t-1\}\in\mathcal F_{t-1}\) we have \(M_t=M_{t-1}\).  On
\(\{\tau\ge t\}\), round \(t\) lies in an epoch \(e(t)\le e^\star\), so \(\{\tau\ge t\}\) is contained in \(\{\theta^\star\le\lambda_{e(t)}\}\) and
Lemma~\ref{lem:monotone-supermartingale} with \(\theta=\theta^\star\)
gives \(\E[M_t\mid\mathcal F_{t-1}]\le M_{t-1}\).  Since \(v_{k^\star,s}\ge0\) and \(g_{k^\star,s}\le1\), we have
\(0\le M_t\le e^{\theta^\star T}\), so each \(M_t\) is integrable and the
supermartingale property holds.  Let
\(\sigma:=\min\{t\le T:M_t\ge\delta^{-1}\}\), with \(\sigma:=T\) if no
such \(t\) exists.  Then \(\sigma\) is a bounded stopping time, the
optional stopping theorem gives \(\E[M_\sigma]\le\E[M_0]=1\), and
\(\{\max_{0\le t\le T}M_t\ge\delta^{-1}\}=\{M_\sigma\ge\delta^{-1}\}\), so
Markov's inequality gives
\[
 \Prob\Bigl\{\max_{0\le t\le T}M_t\ge\delta^{-1}\Bigr\}\le\delta ,
\]
which is Ville's maximal inequality for nonnegative supermartingales.  On
the complement,
\(\theta^\star\sum_{s\le t}(g_{k^\star,s}-v_{k^\star,s})<\log(\delta^{-1})\)
for every \(t\le\tau\), which is \eqref{eq:ville-event}.
\end{proof}

The event \(\mathcal E_\delta\) controls the total estimation error
across the epochs up to \(e^\star\). The next proof uses it at
\(t=T\) when \(\tau=T\). There is no union bound over epochs or
prefix lengths, and no continuation of an epoch beyond its end.

\begin{proof}[Proof of Lemma~\ref{lem:adaptive-explicit}]
Fix \(\delta\in(0,1)\) and work on the event \(\mathcal E_\delta\) of
Lemma~\ref{lem:ville-once}, which has probability at least \(1-\delta\).
On every play path \(\pos{R_T^{d'}}\le U_{d'}\le U^\star\), because the
learner's cumulative reward in coordinate \(d'\) is nonnegative, and
\(\Rfs\le\pos{R_T^{d'}}\) by \eqref{eq:fixed-coordinate}.  It remains to
bound \(\pos{R_T^{d'}}\) on \(\mathcal E_\delta\).  If \(R_T^{d'}\le0\)
there is nothing to prove, so suppose \(R_T^{d'}>0\).  Then
\(e_{\max}\le e^\star\) by \eqref{eq:high-water}, epoch \(e^\star+1\)
never starts, and \(\tau=T\).  Lemma~\ref{lem:epoch-potential} applies to
every started epoch, so by \eqref{eq:fixed-comparator},
\eqref{eq:geometric-scales}, and \(B_{e_{\max}}\le B_{e^\star}\),
\begin{align}
 R_T^{d'}
 =\sum_{e=0}^{e_{\max}}r_e
 &\le\sum_{t=1}^T\bigl(g_{k^\star,t}-v_{k^\star,t}\bigr)
 +14\sum_{e=0}^{e_{\max}}\sqrt{KB_e}\nonumber\\
 &\le\sqrt{2KB_{e^\star}}\,\log(\delta^{-1})
 +14(2+\sqrt2)\sqrt{KB_{e^\star}}
 \le\bigl(47.8+\sqrt2\log(\delta^{-1})\bigr)\sqrt{KB_{e^\star}},
 \label{eq:assembled}
\end{align}
where the comparator sum is the sum in \eqref{eq:ville-event} at
\(t=T=\tau\), and \(14(2+\sqrt2)<47.8\).  With
\(\sqrt{KB_{e^\star}}\le\sqrt{KU_{d'}}+9K\) from \eqref{eq:high-water},
on \(\mathcal E_\delta\),
\begin{equation}
 \pos{R_T^{d'}}
 \le\bigl(47.8+\sqrt2\log(\delta^{-1})\bigr)\bigl(\sqrt{KU_{d'}}+9K\bigr)
 \le A(\delta)\bigl(\sqrt{KU_{d'}}+9K\bigr),
 \label{eq:hp-positive-part}
\end{equation}
which proves \eqref{eq:adaptive-hp}.
\end{proof}

\begin{proof}[Proof of Theorem~\ref{thm:adaptive}]
Put \(a:=47.8\sqrt{KB_{e^\star}}\) and \(b:=\sqrt{2KB_{e^\star}}\), both
deterministic. On \(\mathcal E_\delta\), a nonpositive
\(R_T^{d'}\) has zero positive part, while a positive \(R_T^{d'}\)
satisfies \eqref{eq:assembled}. Thus, for \(x>a\), the choice
\(\delta:=\exp(-(x-a)/b)\in(0,1)\) gives
\(\pos{R_T^{d'}}\le x\) on \(\mathcal E_\delta\), and hence
\(\Prob\{\pos{R_T^{d'}}>x\}\le\Prob\{\mathcal E_\delta^c\}\le e^{-(x-a)/b}\).
The layer-cake identity gives
\begin{align*}
 \E\!\left[\pos{R_T^{d'}}\right]
 &=\int_0^\infty\Prob\{\pos{R_T^{d'}}>x\}\,dx
 \le a+\int_a^\infty e^{-(x-a)/b}\,dx
 =a+b\\
 &=(47.8+\sqrt2)\sqrt{KB_{e^\star}}
 <50\bigl(\sqrt{KU_{d'}}+9K\bigr).
\end{align*}
Together with \(\pos{R_T^{d'}}\le U^\star\) on every play path and
\(\Rfs\le\pos{R_T^{d'}}\), this gives
\[
 \E[\Rfs]\le\min\{U^\star,50(\sqrt{KU_{d'}}+9K)\}.
\]
To obtain \eqref{eq:adaptive-expected}, use \(U_{d'}\le U^\star\). If
\(U^\star\le81K\), the minimum is at most \(U^\star\), and
\(U^\star\le9\min\{U^\star,\sqrt{KU^\star}\}\), because \(U^\star\) is the minimum when
\(U^\star\le K\), while \(U^\star=\sqrt{U^\star}\sqrt{U^\star}\le9\sqrt K\sqrt{U^\star}\) when
\(K<U^\star\le81K\).  If \(U^\star>81K\), then \(9K<\sqrt{KU^\star}\), so
\(50(\sqrt{KU_{d'}}+9K)<100\sqrt{KU^\star}=100\min\{U^\star,\sqrt{KU^\star}\}\).  On a
reward sequence with \(L^\star=L_0\) we have \(U^\star=U_0=T-L_0\) by
\eqref{eq:LstarU}, which gives the last claim.
\end{proof}

The absence of logarithmic factors in \(K\) or \(T\) comes from
controlling all estimation errors together. The index \(e^\star\)
is deterministic, and its weight \(\lambda_{e^\star}\) is valid in
every earlier epoch. Thus the proof needs neither a union over arms
nor a union over restart times.

\begin{proof}[Proof of the horizon-scale Poly-INF comparison]
The two branches of Algorithm~\ref{alg:known-scale} with \(B=T\) are
handled separately. If \(T\ge81K\), the best cumulative reward in
coordinate \(d'\) is at most \(T=B\), so
\eqref{eq:fixed-coordinate} and \eqref{eq:polyinf-positive} give
\(\E[\Rfs]<10\sqrt{KT}=10\WKT\), where \(\WKT=\sqrt{KT}\) because
\(T>K\). If \(T<81K\), the algorithm plays arm 1 throughout, and the
pathwise cap \(\Rfs\le U^\star\le T\) is at most \(9\WKT\). Indeed,
\(\WKT=T\) when \(T\le K\), while for \(K<T<81K\),
\(T/\WKT=\sqrt{T/K}<9\).
\end{proof}

\begingroup
\subsubsection{Uniform and fixed-arm play}
\label{app:uniform-fixed}

\begin{proof}[Simple-baseline bounds]
Fix a coordinate and a deterministic best arm \(k\) for that coordinate,
with total reward \(G^\star\le U_0\). Since every reward is nonnegative,
the algorithm's collected reward is at least the reward it collects
on rounds when it plays \(k\). Hence, pathwise,
\[
 \Rfs\le\pos{G^\star-A}
 \le G^\star-\sum_t g_{k,t}\one\{a_t=k\}.
\]
Uniform play selects \(k\) with probability \(1/K\) on every round,
so its expected regret is at most \((1-1/K)G^\star\le(1-1/K)U_0\).
The pathwise bound \(\Rfs\le U_0\) also covers specified fixed-arm
play, including the endpoint \(U_0=0\).
\end{proof}

\endgroup

\begingroup
\subsubsection{EXP3.P, MO-KS, EXP3-IX, and GREEN-IX}
\label{app:high-probability-methods}
\label{app:unknown-comparisons}

\begin{proof}[High-probability part of
Proposition~\ref{prop:existing-bounds}]
First consider EXP3.P.  Set \(\delta_0=(T+1)^{-2}\), and use
\[
 \gamma=\min\!\left\{\frac35,
             2\sqrt{\frac{3K\log K}{5T}}\right\},
 \qquad \alpha=2\sqrt{\log(KT/\delta_0)}.
\]
Theorem~6.3 of \citet{auer2002nonstochastic} bounds the realized scalar
regret by
\[
 b_{K,T}=4\sqrt{KT\log(KT/\delta_0)}
       +4\sqrt{\frac53KT\log K}+8\log(KT/\delta_0)
\]
with probability at least \(1-\delta_0\).  On that event,
\(\Rfs\le b_{K,T}\); on its complement, \(\Rfs\le U_0\le T\).
Thus
\[
 \E[\Rfs]\le(1-\delta_0)b_{K,T}+\delta_0U_0
 \le b_{K,T}+T\delta_0.
\]
Combining this inequality with the pathwise bound \(\Rfs\le U_0\) gives
\[
\begin{gathered}
 \E[\Rfs]\le\min\{U_0,b_{K,T}+T\delta_0\}
 \lesssim\min\{U_0,\sqrt{KT\log(KT)}\}.
\end{gathered}
\]
Here \(q=\log(KT(T+1)^2)\le KT\) and
\(q\le3\log(KT)\) for \(K,T\ge2\), so the additive \(q\)
term is absorbed by \(\sqrt{KTq}\).
The parameter \(\delta_0\) is fixed before play, so this argument does
not integrate bounds for different algorithms.  The adversarial branch
of MO-KS is this scalar EXP3.P procedure on a fixed coordinate
\citep[Algorithm~1]{xu2023pareto}; choosing its parameters as above gives
the same conclusion.  Randomizing the coordinate before play also
preserves this bound by conditioning on that choice.

For EXP3-IX, set \(\eta=2\gamma=\sqrt{2\log K/(KT)}\), use the loss
estimate
\[
 \widetilde\ell_{i,t}
 =\frac{\ell_{i,t}\one\{a_t=i\}}{p_{i,t}+\gamma},
 \qquad
 p_{i,t}\propto\exp\!\left(-\eta\sum_{s<t}
                                     \widetilde\ell_{i,s}\right).
\]
Theorem~1 of \citet{neu2015explore} gives, for the same algorithm at every
\(\delta\in(0,1)\),
\[
 \Prob\{R_T^{d'}>a+b\log(2/\delta)\}\le\delta,
 \quad
 a=2\sqrt{2KT\log K},\qquad
 b=\sqrt{\frac{2KT}{\log K}}+1.
\]
Put \(y=a+b\log2\). For \(x>y\), choose
\(\delta=2e^{-(x-a)/b}\in(0,1)\). The preceding probability bound then
gives \(\Prob\{R_T^{d'}>x\}\le e^{-(x-y)/b}\). Therefore
\[
 \E[\pos{R_T^{d'}}]
 \le y+\int_y^\infty e^{-(x-y)/b}\,dx
 =a+(1+\log2)b.
\]
For \(K,T\ge2\) this is \(O(\sqrt{KT\log K})\). Combining it with
the bound by \(U_0\) proves the EXP3-IX bound in
Proposition~\ref{prop:existing-bounds}.

For GREEN-IX, apply Corollary~4.2 of \citet{lykouris2018small} to the
losses in coordinate \(d'\), with \(\delta_0=(T+1)^{-2}\).  Its number
of arms, denoted \(d\) in that source, is our \(K\).  The adaptive
loss-doubling procedure gives, with probability at least \(1-\delta_0\),
\[
\begin{gathered}
R_T^{d'}\lesssim
 \sqrt{K L_{d'}\log(KT)}+K\log(KT),
\end{gathered}
\]
where
\(\log(K/\delta_0)=\log(K(T+1)^2)\le3\log(KT)\) for
\(K,T\ge2\). The loss-doubling overhead in Lemma~3.8 of the same source
enters through \(\log(1+\log(L_{d'}+1))\), which is also absorbed into
\(O(\log(KT))\) since \(L_{d'}\le T\).
The procedure does not need \(L_{d'}\) or \(L_0\) as an input. Let
\(b_G\) denote the displayed right side, including the numerical constant
in the cited result. On the event, which has probability at least
\(1-\delta_0\),
\(\Rfs\le\pos{R_T^{d'}}\le b_G\); on the complement, \(\Rfs\le U_0\).
Hence
\[
 \E[\Rfs]\le\min\{U_0,b_G+U_0\delta_0\}.
\]
Since \(U_0\delta_0\le T/(T+1)^2<1\), this proves the GREEN-IX bound
in Proposition~\ref{prop:existing-bounds}. Taking a positive part
introduces no extra loss-level dependence.
\end{proof}

\endgroup

\begingroup
\subsubsection{Gain-based EXP3}

The calculation in Appendix~\ref{app:original-exp3} proves
\eqref{eq:gain-exp3-parameter} for every supplied \(V\ge U_0\). Choosing
\(V=T\) gives
\[
\begin{gathered}
\E[\Rfs]\lesssim \min\{U_0,(K\log K)^{1/3}T^{2/3}\}
\end{gathered},
\]
which is the gain-based EXP3 statement in
Proposition~\ref{prop:existing-bounds}.

\subsubsection{Original EXP3.1}
\label{app:original-exp31}

EXP3.1 \citep{auer2002nonstochastic} restarts the gain-based EXP3
weights, but retains its cumulative reward estimates across epochs.
For completeness, set
\[
 c=\frac{K\log K}{e-1},\qquad b_r=c4^r,\qquad\gamma_r=2^{-r}.
\]
Epoch \(r\) starts fresh EXP3 weights with parameter \(\gamma_r\), but
keeps the cumulative importance-weighted reward estimates
\(\widehat G_i\) from all earlier epochs, initially zero at \(r=0\).
Before every action, including the first action of an epoch, check whether
\(\max_i\widehat G_i\le b_r-K/\gamma_r\). If this fails, increment \(r\),
reset the weights, and repeat the check without taking an action or
changing the retained estimates. Otherwise, take one EXP3 action and add
its importance-weighted reward to \(\widehat G_i\). This is the pre-action
check in \citet[Figure~2]{auer2002nonstochastic}; it permits empty epochs.
The retained estimates, rather than collected rewards, determine the
restart times.

\begin{proof}[Proof of the EXP3.1 part of Proposition~\ref{prop:original-gain-bounds}]
Write \(\ell=\log K\), \(c=K\ell/(e-1)\). All the inequalities
in the first part of the proof hold on every realized history and at
every prefix of that history. Let \(r\) be the epoch of the last
completed action, \(x=\max_i\widehat G_i\), and \(A\) the collected
reward at that prefix.

Within a nonempty epoch \(s\), every action begins with
\(\widehat G_i\le b_s-K/\gamma_s\). Since its estimate increment is at most
\(K/\gamma_s\), every global estimate immediately after that action is
at most \(b_s\), including at the end of the epoch or any prefix of it.
Nonnegativity gives the same bound for the epoch increment, denoted
\(\widehat G_{i,s}\).
The EXP3 potential argument above, with
\(p_{i,t}\widehat g_{i,t}^2\le\widehat g_{i,t}\), gives for every arm
\(j\)
\[
 A_s\ge\widehat G_{j,s}
 -(e-1)\gamma_s b_s-K\ell/\gamma_s
 =\widehat G_{j,s}-2K\ell\,2^s.
\]
Empty epochs satisfy this inequality as well. This reproduces the
epoch inequality of \citet[Lemma~4.2]{auer2002nonstochastic} and also
shows explicitly why it holds for a truncated last epoch. Summing over
epochs and maximizing over \(j\) yields
\begin{equation}
 A\ge x-4K\ell\,2^r.
 \label{eq:exp31-prefix-potential}
\end{equation}
If \(r\ge1\), leaving epoch \(r-1\) required
\(x>c4^{r-1}-K2^{r-1}\). Put \(y=2^r\). This inequality is
\(cy^2-2Ky-4x<0\), so \(y\) is smaller than its positive root:
\[
 y<\frac{K+\sqrt{K^2+4cx}}{c}
 \le\frac{2K}{c}+2\sqrt{\frac xc},
\]
where the last inequality uses
\(\sqrt{K^2+4cx}\le K+2\sqrt{cx}\). When \(r=0\), \(2^r=1\).
Combining the two cases gives
\begin{equation}
 2^r\le1+2K/c+2\sqrt{x/c}.
 \label{eq:exp31-prefix-scale}
\end{equation}
Consequently
\begin{equation}
 A\ge x-a\sqrt x-b,\qquad
 a=8\sqrt{(e-1)K\ell},\quad b=4K\ell+8(e-1)K.
 \label{eq:exp31-prefix-lower}
\end{equation}
Indeed, substituting \eqref{eq:exp31-prefix-scale} into
\eqref{eq:exp31-prefix-potential} gives
\[
 A\ge x-4K\ell-\frac{8K^2\ell}{c}
          -\frac{8K\ell}{\sqrt c}\sqrt x,
\]
and \(c=K\ell/(e-1)\) gives the displayed values of \(a\) and \(b\).
For \(0\le y\le x\),
\(x-a\sqrt x\ge y-a\sqrt y-a^2/4\). To see this, write \(x=y+z\)
with \(z\ge0\), use \(\sqrt{y+z}\le\sqrt y+\sqrt z\), and note that
\[
 z-a\sqrt z=(\sqrt z-a/2)^2-a^2/4\ge-a^2/4.
\]
Thus \eqref{eq:exp31-prefix-lower} implies, for
each fixed comparator \(k\),
\begin{equation}
 A\ge\widehat G_k-a\sqrt{\widehat G_k}-b-a^2/4.
 \label{eq:exp31-fixed-comparator}
\end{equation}

We next bound the estimation error without assuming that the algorithm
knows \(U_0\). Fix a deterministic best arm \(k\), and let \(\tau\)
be the first round at which collected reward reaches \(G^\star\),
or \(T\) if this never occurs. If \(G^\star=0\), nonnegative rewards
imply that every arm receives zero reward on every round, so
\(\Rfs=0\). We therefore assume \(G^\star>0\).
This stopping time is used only in the proof. Nonnegative rewards imply
\[
 \pos{G^\star-A_T}
 =\pos{\sum_{t\le\tau}g_{k,t}-A_\tau},\qquad
 A_\tau\le U_0+1.
\]
Indeed, if the threshold is reached, then
\(A_\tau\ge G^\star\) and \(A_T\ge A_\tau\), so both positive parts are
zero. If it is not reached, then \(\tau=T\), and the two expressions are
identical. Moreover, immediately before a crossing the collected reward is
smaller than \(G^\star\le U_0\), and the crossing reward is at most one;
hence \(A_\tau\le U_0+1\). From
\eqref{eq:exp31-prefix-lower} and
\(a\sqrt x\le(x+a^2)/2\), at this stopped prefix
\[
 x\le H:=2(U_0+1)+a^2+2b.
\]
Epoch indices only increase. Hence every action up to \(\tau\)
occurs in an epoch \(u\le r\), and its sampling probability satisfies
\(p_{k,t}\ge\gamma_u/K=1/(K2^u)\ge1/(K2^r)\). By
\eqref{eq:exp31-prefix-scale}, \(2^r\le s\), where the deterministic
quantity \(s\) is
\[
 s=1+2K/c+2\sqrt{H/c}.
\]
The stopped estimation error
\(M_\tau=\sum_{t\le\tau}(g_{k,t}-\widehat g_{k,t})\) has mean zero.
Its increments include the predictable indicator \(\one\{t\le\tau\}\),
so conditional unbiasedness and orthogonality give
\[
 \E[M_\tau^2]\le Ks\sum_tg_{k,t}^2\le KsU_0,
 \qquad
 \E[\widehat G_{k,\tau}]=\E\!\left[\sum_{t\le\tau}g_{k,t}\right]
 \le U_0.
\]
At the stopped prefix, \eqref{eq:exp31-fixed-comparator} gives
\[
 \sum_{t\le\tau}g_{k,t}-A_\tau
 \le M_\tau+a\sqrt{\widehat G_{k,\tau}}+b+a^2/4.
\]
Taking positive parts and expectations, using
\(\E[\pos{M_\tau}]\le\tfrac12\sqrt{\E[M_\tau^2]}\), and applying Jensen's
inequality to \(\sqrt{\widehat G_{k,\tau}}\) proves
\begin{equation}
 \E[\Rfs]\le
 \min\{U_0,a\sqrt{U_0}+b+a^2/4+\tfrac12\sqrt{KsU_0}\}.
 \label{eq:exp31-explicit}
\end{equation}
For \(K\ge2\), \(\ell=\log K\ge\log2\),
\(a^2=64(e-1)K\ell\), and \(b=4K\ell+8(e-1)K=O(K\ell)\).
Consequently,
\[
 H=O(U_0+K\ell),\qquad
 s=1+\frac{2(e-1)}{\ell}+2\sqrt{\frac{H(e-1)}{K\ell}}
 =O\!\left(1+\sqrt{\frac{U_0}{K\ell}}\right).
\]
Since \(s=O(1+\sqrt{U_0/(K\ell)})\), taking square roots gives
\(\sqrt{s}=O(1+(U_0/(K\ell))^{1/4})\). Therefore
\[
 \sqrt{KsU_0}
 =O\!\left(
   \sqrt{KU_0}
   +
   \left(\frac{KU_0^3}{\ell}\right)^{1/4}\right).
\]
The term \(a\sqrt{U_0}\) is
\(O(\sqrt{KU_0\ell})\), while \(b+a^2/4=O(K\ell)\).
Since \(\sqrt{KU_0}\le(\log2)^{-1/2}\sqrt{KU_0\ell}\), substitution in
\eqref{eq:exp31-explicit} gives
\[
\begin{gathered}
\E[\Rfs]\lesssim \min\left\{U_0,\sqrt{KU_0\log K}
 +\left(\frac{KU_0^3}{\log K}\right)^{1/4}+K\log K\right\},
\end{gathered}
\]
which is the EXP3.1 bound in
Proposition~\ref{prop:original-gain-bounds}. This proof concerns the original
EXP3.1 procedure, not reward-doubling Poly-INF and not a modified EXP3
estimator.
\end{proof}

\subsubsection{Tsallis-INF, EXP3++, and MO-US}
\label{app:tsallis-exp3pp}

\begin{proof}[Proof of Proposition~\ref{prop:tsallis-mous-positive}]
Use Tsallis-INF with parameter \(\alpha=1/2\), symmetric regularization,
and the estimators in \citet[Theorem~1]{zimmert2021tsallis}. Its learning
rates are \(2/\sqrt t\) for importance-weighted (IW) estimates and
\(4/\sqrt t\) for reduced-variance (RV) estimates. For deterministic
losses the cited pseudo-regret bounds are signed expected regret bounds.
Write \(\overline B_{\mathcal A}\) for the following explicit bounds. For Tsallis-INF,
\begin{align*}
 \overline B_{\mathrm{IW}}(n)&=4\sqrt{Kn}+1,\\
 \overline B_{\mathrm{RV}}(n)&=2\sqrt{Kn}+10K\log n+16.
\end{align*}
Lemma~\ref{lem:signed-positive} proves the stated positive-regret
bounds for these unchanged algorithms.

For EXP3++, \citet[Theorem~1]{seldin2017improved} gives
\(\overline B_{++}(n)=4\sqrt{Kn\log K}\). The assumptions on its exploration
parameters are important. Set
\[
 \eta_t=\tfrac12\sqrt{\log K/(Kt)},\quad
 q_{i,t}\propto e^{-\eta_t\widehat L_{i,t-1}},\quad
 p_{i,t}=(1-\epsilon_t)q_{i,t}+\epsilon_{i,t},
 \quad \epsilon_t=\sum_i\epsilon_{i,t},
\]
where \(0\le\epsilon_{i,t}\le\min\{1/(2K),\eta_t\}\) is determined
from past observations. Its loss estimate is
\(\widehat\ell_{i,t}=\ell_{i,t}\one\{a_t=i\}/p_{i,t}\).
The adversarial argument does not require the empirical gaps used to
choose these exploration probabilities to be accurate.

For completeness, this argument also covers the initialization used by
MO-US \citep[Algorithm~2]{xu2023pareto}. Charge at most \(K\) regret
for its initial pulls (or \(T\) if \(T<K\)). Its initial cumulative
estimates lie in \([0,1]\), so their range costs at most one additional
unit in the exponential-weights comparison. We now derive the needed
inequality on the remaining rounds.

Let \(H_{i,0}\in[0,1]\) be the estimate after initialization, let
\(H_{i,t}=H_{i,0}+\sum_{s=1}^t\widehat\ell_{i,s}\), and define
\[
 F_t:=-\frac1{\eta_t}
 \log\!\left(\frac1K\sum_i e^{-\eta_tH_{i,t-1}}\right).
\]
We set \(\eta_{T+1}=\eta_T\) when defining the potential after the final
update. Conditional on the initialization, \(H_{i,0}\) is fixed.
At a fixed value of \(\eta_t\), put
\[
 u_t:=\eta_t\langle q_t,\widehat\ell_t\rangle
 -\frac{\eta_t^2}{2}\langle q_t,\widehat\ell_t^2\rangle.
\]
The inequality \(e^{-x}\le1-x+x^2/2\) gives
\(\sum_iq_{i,t}e^{-\eta_t\widehat\ell_{i,t}}\le1-u_t\).
The sum on the left is positive, so \(u_t<1\), and
\(-\log(1-u_t)\ge u_t\). Consequently,
\[
 -\frac1{\eta_t}\log\sum_iq_{i,t}e^{-\eta_t\widehat\ell_{i,t}}
 \ge \langle q_t,\widehat\ell_t\rangle
 -\frac{\eta_t}{2}\langle q_t,\widehat\ell_t^2\rangle.
\]
For a fixed nonnegative vector \(H\), the function
\[
 \eta\longmapsto
 -\frac1\eta\log\!\left(\frac1K\sum_i e^{-\eta H_i}\right)
\]
is nonincreasing. One way to see this is to write the derivative as
\(-\operatorname{KL}(q_\eta\Vert\operatorname{Unif}[K])/\eta^2\le0\),
where \(q_{\eta,i}\propto e^{-\eta H_i}\). Since
\(\eta_{t+1}\le\eta_t\), replacing \(\eta_t\) by \(\eta_{t+1}\) after
the update can only increase \(F_{t+1}\). Therefore
\[
 \langle q_t,\widehat\ell_t\rangle
 \le F_{t+1}-F_t
 +\frac{\eta_t}{2}\langle q_t,\widehat\ell_t^2\rangle.
\]
Summing this inequality telescopes. Moreover,
\[
 F_{T+1}\le H_{k,T}+\frac{\log K}{\eta_T},
 \qquad
 F_1\ge\min_iH_{i,0},
\]
so \(H_{k,0}-F_1\le1\). Subtracting
\(\sum_t\widehat\ell_{k,t}=H_{k,T}-H_{k,0}\) gives
\[
 \sum_t\langle q_t,\widehat\ell_t\rangle-\sum_t\widehat\ell_{k,t}
 \le1+\frac{\log K}{\eta_T}
 +\frac12\sum_t\eta_t\sum_iq_{i,t}\widehat\ell_{i,t}^{\,2}.
\]

Since \(\epsilon_t\le1/2\), \(p_{i,t}\ge q_{i,t}/2\). Thus the
conditional expectation of the quadratic term is
\[
 \E_t\!\left[\sum_iq_{i,t}\widehat\ell_{i,t}^{\,2}\right]
 =\sum_i\frac{q_{i,t}\ell_{i,t}^2}{p_{i,t}}
 \le2K.
\]
Moreover
\(\E_t[\ell_{a_t,t}-\langle q_t,\widehat\ell_t\rangle]
=\langle p_t-q_t,\ell_t\rangle\le\epsilon_t\le K\eta_t\).
The first inequality follows from
\(p_{i,t}-q_{i,t}=\epsilon_{i,t}-\epsilon_tq_{i,t}\) and
\(\ell_{i,t}\in[0,1]\). Unbiasedness gives
\(\E_t[\widehat\ell_{k,t}]=\ell_{k,t}\). Finally,
\(\sum_{t\le T}t^{-1/2}\le2\sqrt T\), and hence
\[
 \sum_{t\le T}\eta_t
 \le\sqrt{\frac{T\log K}{K}}.
\]
Taking expectations in the potential inequality and adding the sampling
difference therefore gives
\[
 \E[R_T^{d'}]\le\overline B_{\mathrm{MO}}(T)
 :=4\sqrt{KT\log K}+K+1.
\]
This also covers any of MO-US's empirical exploration choices satisfying
the displayed bounds. A zero denominator in the exploration expression
is interpreted as \(+\infty\) before taking its minimum, so the
probabilities remain well defined.
Apply Lemma~\ref{lem:signed-positive} once more. Substituting the four
explicit bounds gives the corresponding statements in
Proposition~\ref{prop:tsallis-mous-positive} for
\(K\ge2,n\ge1\). This argument does not replace
\(L_{d'}\) by \(L_0\).
\end{proof}

\subsubsection{Independent restarts in fixed blocks}
\label{app:block-restarts}

\begin{lemma}[Positive regret after independent block restarts]
\label{lem:block-positive}
Suppose \(\mathcal A\) has signed expected regret at most
\(B_{\mathcal A}(n)\ge0\) on every deterministic scalar sequence of
length \(n\). For the block lengths \(n_1,\ldots,n_m\) in
Algorithm~\ref{alg:block-restarts},
\begin{equation}
 \E[\Rfs]\le\min\left\{U_0,
 \sum_{j=1}^m B_{\mathcal A}(n_j)
 +\frac12\sqrt{\min\left\{\frac14\sum_jn_j^2,hKU_0\right\}}
 \right\}.
 \label{eq:block-positive-exact}
\end{equation}
\end{lemma}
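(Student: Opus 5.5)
We decompose the scalar regret on the fixed coordinate block by block against a single comparator, and control the only non-additive piece, the switching gap between blocks, by a variance argument. Fix the deterministic scalar sequence \(g_{i,t}=r_{d'}^{i,t}\) and a deterministic best arm \(k\) with \(G^\star=\sum_tg_{k,t}\le U_0\). Write \(\mathcal T_j\) for the rounds of block \(j\), and put
\[
 A_j:=\sum_{t\in\mathcal T_j}g_{a_t,t},\qquad
 G_{i,j}:=\sum_{t\in\mathcal T_j}g_{i,t},\qquad
 G_j^\star:=\max_iG_{i,j}.
\]
Since blocks have fixed boundaries and each copy is freshly randomized, the copy in block \(j\) faces a deterministic sequence of length \(n_j\). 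It therefore satisfies \(\E[G_j^\star-A_j]\le B_{\mathcal A}(n_j)\), and its internal randomness is independent across blocks.

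First, decompose the regret against \(k\):
\[
 R_T^{d'}=G^\star-\sum_jA_j=\sum_j\bigl(G_{k,j}-A_j\bigr)
 =\sum_j\bigl(G_j^\star-A_j\bigr)-\sum_j\bigl(G_j^\star-G_{k,j}\bigr).
\]
The second sum is deterministic and nonnegative. Hence \(R_T^{d'}\le\sum_jX_j\), where \(X_j:=G_j^\star-A_j-c_j\) and \(c_j:=G_j^\star-G_{k,j}\ge0\).

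Next, center the block terms. With \(\mu_j:=\E[G_j^\star-A_j]\le B_{\mathcal A}(n_j)\), the positive-part triangle inequality and \(\pos{\sum_j(\mu_j-c_j)}\le\sum_j\mu_j\) give
\[
 \pos{R_T^{d'}}\le\sum_j\mu_j+\pos{M},\qquad M:=\sum_j\bigl(\mu_j-A_j\bigr)+\sum_jc_j-\sum_jc_j.
\]
Equivalently, \(M=\sum_j(\E A_j-A_j)\), a sum of independent mean-zero terms. So \(\E[\pos M]=\tfrac12\E|M|\le\tfrac12\sqrt{\sum_j\operatorname{Var}(A_j)}\).

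It remains to bound the variance in two ways, which produces the minimum inside the square root. The first bound is simple: \(A_j\in[0,n_j]\), so \(\operatorname{Var}(A_j)\le n_j^2/4\), giving \(\tfrac14\sum_jn_j^2\). The second bound is the step I expect to be the main obstacle, because it must tie the variance to \(U_0\). I would use \(\operatorname{Var}(A_j)\le\E[A_j^2]\le n_j\E[A_j]\le h\E[A_j]\). This needs \(\E[A_j]\) to be controlled by rewards available in that block, namely \(\sum_j\E A_j\le\sum_t\max_ig_{i,t}\). That quantity is not bounded by \(U_0\) in general.

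If that route fails, I would switch to a per-block \(G_j^\star\) bound, \(\E A_j\le\sum_{t\in\mathcal T_j}\max_ig_{i,t}\le\sum_iG_{i,j}\), and sum over \(j\) to get \(\sum_iG_i\le KU_0\). This gives \(\sum_j\operatorname{Var}(A_j)\le h\sum_j\sum_iG_{i,j}\le hKU_0\), which is exactly the factor \(hKU_0\) in the statement and suggests this is the intended argument.

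Finally, combine the pieces using \(\Rfs\le\pos{R_T^{d'}}\) from Lemma~\ref{lem:scalarization}, together with the pathwise cap \(\Rfs\le U_0\).
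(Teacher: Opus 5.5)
Your proposal is correct and follows the paper's argument essentially step for step: the same fixed comparator \(k\), the same independent block terms \(G_{k,j}-A_j\) whose centered sum is controlled by \(\tfrac12\sqrt{\sum_j\operatorname{Var}(A_j)}\), and the same two variance bounds \(n_j^2/4\) and \(h\,\E[\sum_jA_j]\le h\sum_iG_i\le hKU_0\); your ``first route'' already works, since \(\sum_t\max_ig_{i,t}\le\sum_iG_i\le KU_0\). Two small slips to fix: the displayed formula for \(M\) should read \(\sum_j(\E A_j-A_j)\), as in your ``equivalently'' line, and because \(\mu_j\) can be negative you should write \(\pos{\sum_j(\mu_j-c_j)}\le\sum_j\pos{\mu_j}\le\sum_jB_{\mathcal A}(n_j)\), which uses \(B_{\mathcal A}\ge0\).
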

\begin{proof}
Fix a best scalar arm \(k\) for the entire sequence. Let \(A_j\)
be the collected reward in block \(j\), \(G_{k,j}\) that arm's
reward there, and \(X_j=G_{k,j}-A_j\). Each \(\E[X_j]\) is at most
\(B_{\mathcal A}(n_j)\), since a block's best arm earns at least
\(G_{k,j}\). The blocks and reward sequence are fixed, and the copies
use independent random choices without shared state. Hence the
\(X_j\) are independent, even though actions within a block are adaptive.
For \(X=\sum_jX_j\), put \(Y=X-\E[X]\). Since \(Y\) has mean zero,
\(\E[\pos{Y}]=\E[|Y|]/2\le\sqrt{\operatorname{Var}(X)}/2\). Also
\(\operatorname{Var}(X)=\sum_j\operatorname{Var}(X_j)
=\sum_j\operatorname{Var}(A_j)\), because the \(G_{k,j}\) are
deterministic. Therefore
\[
 \E[\pos{X}]\le\pos{\E[X]}+\E[\pos{X-\E[X]}]
 \le\sum_j B_{\mathcal A}(n_j)+\tfrac12\sqrt{\sum_j\operatorname{Var}(A_j)}.
\]
Since \(A_j\in[0,n_j]\),
\(\operatorname{Var}(A_j)\le n_j^2/4\). Also
\(A_j^2\le hA_j\), and pathwise
\(\sum_j A_j\le\sum_{i,t}g_{i,t}\le KU_0\). Therefore
\[
 \sum_j\operatorname{Var}(A_j)
 \le\sum_j\E[A_j^2]
 \le h\,\E\!\left[\sum_jA_j\right]
 \le hKU_0.
\]
Combining the two variance bounds and
\(\Rfs\le\min\{U_0,\pos X\}\) proves the result.
\end{proof}

\begin{proof}[Proof of Proposition~\ref{prop:restarted-methods}]
Use the four explicit bounds \(\overline B_{\mathcal A}\) established in
Section~\ref{app:tsallis-exp3pp} as \(B_{\mathcal A}\) in
Lemma~\ref{lem:block-positive}. Let
\(h=\min\{T,\lceil\sqrt{KT}\rceil\}\). If \(K\ge T\),
\(K^{1/4}T^{3/4}\ge T\), and the bound \(U_0\) suffices.
For \(K<T\), there are at most \(T/h+1\) blocks, and
\[
 \sum_j\sqrt{Kn_j}\le\sqrt K(T/\sqrt h+\sqrt h),\qquad
 \sum_j\operatorname{Var}(A_j)\le hT/4.
\]
Here
\[
 \sqrt{KT}\le h\le\sqrt{KT}+1\le2\sqrt{KT}.
\]
The lower bound on \(h\) gives
\[
 \sqrt K\,\frac{T}{\sqrt h}\le K^{1/4}T^{3/4}.
\]
The upper bound on \(h\), together with \(K<T\), gives
\[
 \sqrt{Kh}\le\sqrt2\,K^{3/4}T^{1/4}
 \le\sqrt2\,K^{1/4}T^{3/4},
 \qquad
 \sqrt{hT}\le\sqrt2\,K^{1/4}T^{3/4}.
\]
Thus the square-root terms in
\eqref{eq:block-positive-exact} are
\(O(K^{1/4}T^{3/4})\).

The number of blocks is at most
\[
 \frac Th+1\le\sqrt{\frac TK}+1\le2\sqrt{\frac TK}.
\]
For IW, the additive constant in each block therefore contributes at most
\(O(\sqrt{T/K})\), which is
\(O(K^{1/4}T^{3/4})\). For RV and MO-US, the non-square-root terms sum to
\[
 O\!\left(K\log(eT)\left(\frac Th+1\right)\right)
 =O(\sqrt{KT}\log(eT))
 =O(K^{1/4}T^{3/4}\log(2T)).
\]
For EXP3++, multiplying the square-root calculation by
\(\sqrt{\log K}\) gives
\(O(K^{1/4}T^{3/4}\sqrt{\log K})\). Substitution into
\eqref{eq:block-positive-exact} proves all four bounds in
Proposition~\ref{prop:restarted-methods}. These statements concern the
independently restarted copies in Algorithm~\ref{alg:block-restarts}.
\end{proof}

\endgroup

\begingroup
For a fixed weighted average, replace each scalar reward by
\(g_{i,t}=\sum_dw_dr_d^{i,t}\). The same scalar proofs apply, with best
reward at most \(U_0\). For GREEN-IX the relevant loss is
\(\min_i\sum_t(1-g_{i,t})\), which need not equal \(L^\star\).
Thus averaging is valid for the algorithms with a positive-regret
guarantee. This is not a consequence of a signed expected-regret bound.
If all coordinates coincide, Pareto regret is already the positive part
of scalar regret, so that distinction remains necessary.

\endgroup

\subsection{Proof of the minimax theorem}
\label{app:minimax-proof}

\begin{proof}[Proof of Theorem~\ref{thm:minimax}]
For the exact class \(L^\star=L_0\),
Theorem~\ref{thm:conditional-lower} applies to every algorithm, including
algorithms that know \(L_0\). It gives
\[
 \mathcal V_{K,D,T}(L_0)
 \ge2^{-27}\min\{U_0,\sqrt{KU_0}\}.
\]
Algorithm~\ref{alg:known-scale} may use \(L_0\), and
Theorem~\ref{thm:conditional-upper} gives
\[
 \mathcal V_{K,D,T}(L_0)
 \le10\min\{U_0,\sqrt{KU_0}\}.
\]
Together these inequalities prove
\eqref{eq:conditional-curve}. Algorithm~\ref{alg:adaptive} does not know
\(L_0\), and Theorem~\ref{thm:adaptive} gives the uniform classwise bound
\[
 \E[\Rfs]\le100\min\{U_0,\sqrt{KU_0}\}
\]
for every \(L_0\), so the same order is attainable without this input.

At \(L_0=0\), \(U_0=T\), and the preceding two-sided bound is
\(\Theta(\WKT)\). For the unrestricted class of reward sequences, the
zero-loss class is a subset, so Theorem~\ref{thm:zero-lower} gives the
same lower bound. For every unrestricted sequence, \(0\le U^\star\le T\), and
the function \(u\mapsto\min\{u,\sqrt{Ku}\}\) is nondecreasing. Applying
Theorem~\ref{thm:adaptive} with \(U^\star=T-L^\star\) therefore gives
\[
 \E[\Rfs]\le100\min\{U^\star,\sqrt{KU^\star}\}
 \le100\min\{T,\sqrt{KT}\}=100\WKT.
\]
This proves the unrestricted upper bound. The displayed numerical
constants are not claimed to be sharp.
\end{proof}

\end{document}